%% file: main.tex
\documentclass{article}

\usepackage{microtype}
\usepackage{graphicx}
\usepackage{subcaption}
\usepackage{booktabs} 
\usepackage{hyperref}

\usepackage[preprint]{icml2026}

\usepackage{amsmath}
\usepackage{amssymb}
\usepackage{mathtools}
\usepackage{amsthm}
\allowdisplaybreaks

\usepackage[capitalize,noabbrev]{cleveref}

\theoremstyle{plain}
\newtheorem{theorem}{Theorem}[section]
\newtheorem{proposition}[theorem]{Proposition}
\newtheorem{lemma}[theorem]{Lemma}

\theoremstyle{definition}
\newtheorem{definition}[theorem]{Definition}
\newtheorem{assumption}[theorem]{Assumption}
\theoremstyle{remark}
\newtheorem{remark}[theorem]{Remark}

\usepackage[textsize=tiny]{todonotes}

\usepackage{xcolor}
\usepackage{macros}
\usepackage{enumitem}
\usepackage{nccmath}
\usepackage{algorithm}
\usepackage{algorithmic}
\usepackage{xurl} 
\usepackage{hyperref}
\usepackage{multirow}
\usepackage{placeins}
\usepackage{changepage}

\crefname{equation}{eq.}{eqs.}
\Crefname{equation}{Eq.}{Eqs.}

\icmltitlerunning{Federate the Router: Learning Language Model Routers with Sparse and Decentralized Evaluations}

\begin{document}

\twocolumn[
  \icmltitle{Federate the Router: Learning Language Model Routers\\with Sparse and Decentralized Evaluations}



  \icmlsetsymbol{equal}{*}

  \begin{icmlauthorlist}
    \icmlauthor{Baris Askin}{equal,cmu}
    \icmlauthor{Shivam Patel}{equal,cmu}
    \icmlauthor{Anupam Nayak}{equal,cmu} \\
    \icmlauthor{Andrea Vigano}{cmu}
    \icmlauthor{Jiin Woo}{cmu}
    \icmlauthor{Gauri Joshi}{cmu}
    \icmlauthor{Carlee Joe-Wong}{cmu}
  \end{icmlauthorlist}

\icmlaffiliation{cmu}{Carnegie Mellon University}

  \icmlcorrespondingauthor{Baris Askin}{baskin@andrew.cmu.edu}

  \icmlkeywords{LLM Routing, Federated Learning}

  \vskip 0.3in
]



\printAffiliationsAndNotice{\icmlEqualContribution}

\begin{abstract}

Large language models (LLMs) are increasingly accessed as remotely hosted services by edge and enterprise clients that cannot run frontier models locally. Since models vary widely in capability and price, routing queries to models that balance quality and inference cost is essential. Existing router approaches assume access to centralized query--model evaluation data. However, these data are often fragmented across clients, such as end users and organizations, and are privacy-sensitive, which makes centralizing data infeasible. Additionally, per-client router training is ineffective since local evaluation data is limited and covers only a restricted query distribution and a biased subset of model evaluations. We introduce the first federated framework for LLM routing, enabling clients to learn a shared routing policy from local offline query--model evaluation data. Our framework supports both parametric multilayer perceptron router and nonparametric K-means router under heterogeneous client query distributions and non-uniform model coverage. Across two benchmarks, federated collaboration improves the accuracy--cost frontier over client-local routers, both via increased effective model coverage and better query generalization. Our theoretical results also validate that federated training reduces routing suboptimality.

\end{abstract}

\section{Introduction}

Large language models (LLMs) have become general-purpose tools for language-centric tasks, spanning customer support, search, programming assistance, and scientific writing. Their prompt-conditioned flexibility \cite{NEURIPS2020_1457c0d6_lm_are_few_shot_learners} allows a single model to support a broad and continually expanding set of applications \citep{openai2024gpt4technicalreport, bommasani2021opportunities, team2023gemini}. 
In many deployments, however, these capabilities are utilized by \emph{edge} devices and in client-facing environments (e.g., mobile devices, on-premises applications, enterprise endpoints) which are limited in terms of compute, memory, and power and thus cannot host most large language models locally \cite{10.1145/3719664}. As a result, queries are served by \emph{remotely hosted} models, which are accessed through application programming interfaces (APIs). 
Such remote hosting enables clients to select suitable models for individual queries by simply accessing different model APIs.

In practical scenarios, real-world queries are highly heterogeneous, spanning diverse tasks and varying substantially in difficulty. Some queries require sophisticated mathematical reasoning or long-context understanding \citep{ahn-etal-2024-large}, while others may demand specialized domain knowledge 
\citep{dehghani2025large}. 
Inference costs also vary widely across models, making the use of capable (and generally more expensive) models for simple queries inefficient \cite{zhang2025routerr}.
Furthermore, the query inference cost, which depends on the number of tokens in the response, is unknown prior to generating the response.
Consequently, no single model is uniformly best once both quality and cost are factored in, motivating mechanisms that utilize multiple models to deliver high-quality responses efficiently
\cite{woisetschlager2025mess}. 

Language query routing aims to map each incoming query to a suitable model from the pool of models, typically to optimize an accuracy-cost trade-off \cite{ong2025routellmlearningroutellms}. 
Routers are trained on historical query--model evaluations, i.e., measured quality (e.g., correctness) and associated costs
\cite{hu2024routerbench}. Given a query embedding (generated from a sentence encoder), \emph{parametric} routers (e.g.,  Multilayer Perceptron (MLP) predictors) \cite{ding2024hybridllmcostefficientqualityaware,ong2025routellmlearningroutellms} estimate per-model response accuracy 
and cost and then select the best model under their desired 
trade-off, while \emph{nonparametric} routers (e.g., K-Means clustering router) \cite{hu2024routerbench,jitkrittum2025universal} summarize evaluations in the query embedding space and select models using local neighborhood statistics. Despite their algorithmic differences, both families of routers rely on a diverse dataset of queries that are evaluated on many language models, allowing them to learn a reliable mapping of queries to model capabilities.

In practice, constructing a centralized query-evaluation dataset for router training is challenging. Firstly, evaluating queries on all models is prohibitively expensive, and this cost is exacerbated by rapidly evolving model pools, requiring continuous evaluation\footnote{\href{https://huggingface.co}{Hugging Face} had approximately $1\mathrm{M}$ models by the end of 2024 and over $2\mathrm{M}$ models by the end of 2025 \citep{huggingface_1m_models_2024, huggingface_2m_models_2025}.}. Second, even defining a ``broad'' query set is nontrivial as public benchmarks and datasets are not representative of real usage, and practical query distributions are long-tailed and dynamic \citep{arora2023reasoning, miller2025evaluatingllmmetricsrealworld}. Realistic and valuable evaluations come from individual users (\emph{clients}) themselves, i.e., from their actual queries/prompts and observed outcomes. 
These evaluations are naturally fragmented across {clients}, ranging from individual end users to organizations deploying LLM-backed systems (e.g., customer-support agents,  or internal coding copilots) \cite{brynjolfsson2025generative, maharaj2024evaluation}. Third, each client typically possesses queries from its own domain and evaluates them on only a small subset of the model pool, and model usage among that subset can be highly skewed leaving many models severely under-sampled \cite{zhao2024eagleefficienttrainingfreerouter, maharaj2024evaluation}. As a result, individual clients have limited data for learning a router that generalizes to a wide distribution of queries. 
Finally, privacy and regulatory constraints prevent centralizing raw query-evaluation data such as personal/proprietary data, customer conversations and internal code \cite{sun2024improving, yao2025federatedlargelanguagemodels}.

The key insight of this paper is to propose federated learning (FL) \citep{fedavg} as a solution to these challenges. Rather than collecting client queries and their model evaluations in a central dataset, federated learning allows clients to collaborate and transfer supervision across semantically similar requests by sharing routing policies they locally train on private offline datasets.
This enables training on diverse and privacy-sensitive query distributions, and additionally supports sparse and imbalanced client coverage over models. 

We develop frameworks spanning both canonical router methods in the federated setting: parametric \mlp{} and nonparametric \kmeans{}. 
We then demonstrate that these federated frameworks solve the challenges of learning routing policies from sparse, fragmented client data.
Empirically, federated routers consistently improve over routers trained on any single client’s data: they generalize better on aggregate query distributions and also improve routing performance on individual clients by leveraging cross-client signal without exposing private client queries.

\textbf{Contributions.}
After reviewing related work in FL and LLM routing (\Cref{sec:related_work}), we contribute the following:
\begin{itemize}[
    wide=0pt,
   topsep=-2pt,
    parsep=0pt,
    partopsep=0pt
]
    \item 
    In \Cref{sec:problem_formulation}, \textbf{we formulate LLM routing in a federated setting} that addresses practical problems: (i) prompts and responses are private and remain on clients, (ii) each client has limited query volume and a narrow, biased slice of the global prompt distribution, and (iii) query--model evaluations are sparse and imbalanced. To our knowledge, \textit{this is the first work to study LLM routing through a federated learning framework}.

    \item
    In \Cref{sec:methods}, \textbf{we propose federated training procedures for both \mlp{} and \kmeans{}}, enabling collaboratively learning of routing-policy, which supports evolving model and client pools.

    \item 
    In \Cref{sect:theory}, we establish convergence guarantees for federated optimization of \mlp{}, and \textbf{provide theoretical support} for why FL improves routing under sparse evaluations by increasing effective query and model coverage through cross-client aggregation for both routers.

    \item 
    In \Cref{sec:experiments}, \textbf{we empirically evaluate both router families under heterogeneity and incomplete query--model evaluation data}.
    Our proposed federated training improves client-local routers for both out-of-distribution (better generalization) and in-distribution (increased effective model coverage) data.
    For high heterogeneity regime, we also propose an \textit{adaptive personalization mechanism} that interpolates between the federated router and a client’s local router, improving robustness when global collaboration can be misaligned with a client’s own distribution.
\end{itemize}
\vspace{3pt}

\Cref{sect:conc} concludes the paper, and additional discussion, experiments, and proofs are provided in the Appendix.

\section{Related Work}
\label{sec:related_work}

\subsection{Federated Learning and LLMs}
\label{sec:rw_fedllm}

Federated learning (FL) trains shared models from decentralized data while keeping raw client data local \citep{fedavg}. In practice, FL must handle \emph{statistical heterogeneity} as client data are non-iid, yet client distributions often share structure that collaborative training can exploit to improve generalization \cite{advances_and_open_fl}.
LLMs have advanced rapidly in scale and capability, making them the default backbone for many language-centric applications. However, adapting LLMs into FL is challenging because na\"{\i}vely communicating or training billions of parameters is prohibitive for many clients. Accordingly, federated LLM research largely focuses on resource-aware adaptation, for example, parameter-efficient and quantization-aware fine-tuning, federated NLP/LLM benchmarks, federated instruction tuning under heterogeneous clients, and hybrid split paradigms that offload parts of the model \citep{flora,ravan,lin2022fednlp,fedit,cheng2024towards,chen2024integration,zhao2024fedsllm}. 
In this work, we address a complementary problem: learning \emph{federated routing policies} that enable model selection among multiple LLMs without centralizing client data. Routing requires 
query--model evaluations which are often expensive and distributed across clients. FL enables clients to collaboratively learn routers that generalize well across heterogeneous prompts and improve model coverage while keeping data local.

\subsection{LLM Query Routing}

Query routing focuses on assigning a given query to the most suitable model to optimize a chosen tradeoff between response quality and inference cost. Most practical routers utilize a fixed dimensional query representation, generally obtained from pre-trained sentence encoders \cite{cer2018universalsentenceencoder, paraphrase_albert_reimers-2019-sentence-bert}. Given such representations for queries, one direction of routers estimates modelwise correctness and cost on queries via small neural predictors or other scoring models \cite{hu2024routerbench,ding2024hybridllmcostefficientqualityaware,ding2025bestrouteadaptivellmrouting,zhuang2024embedllmlearningcompactrepresentations,locus_model_embeddings,sakota_2024_flyswat,huang2025routerevalcomprehensivebenchmarkrouting,lu2023routingexpertefficientrewardguided_zooter,treacle_thrifty_reasoning}. A complementary direction uses nonparametric designs that rely on the geometry of the embedding space and empirical performance statistics, e.g., clustering or nearest-neighbor selection \cite{60c19788-1128-3b5f-9275-2d63cc155832_knn_orig,a_kmeans_clustering_algorithm} \cite{hu2024routerbench, jitkrittum2025universal, stripelis2024tensoroperaroutermultimodelrouter, li2025rethinkingpredictivemodelingllm, zhang2025avengerssimplerecipeuniting, avengers_pro_zhang2025gpt5makingllmscheaper, patel2025proxrouterproximityweightedllmquery}. 
Prior works utilize full evaluation data where all queries are evaluated on all models, and discussion on router design using sparse and decentralized evaluation data is largely missing. We address this practical problem in our work across both parametric and nonparametric router paradigms.

\begin{figure*}
    \centering
    \includegraphics[width=0.96\linewidth]{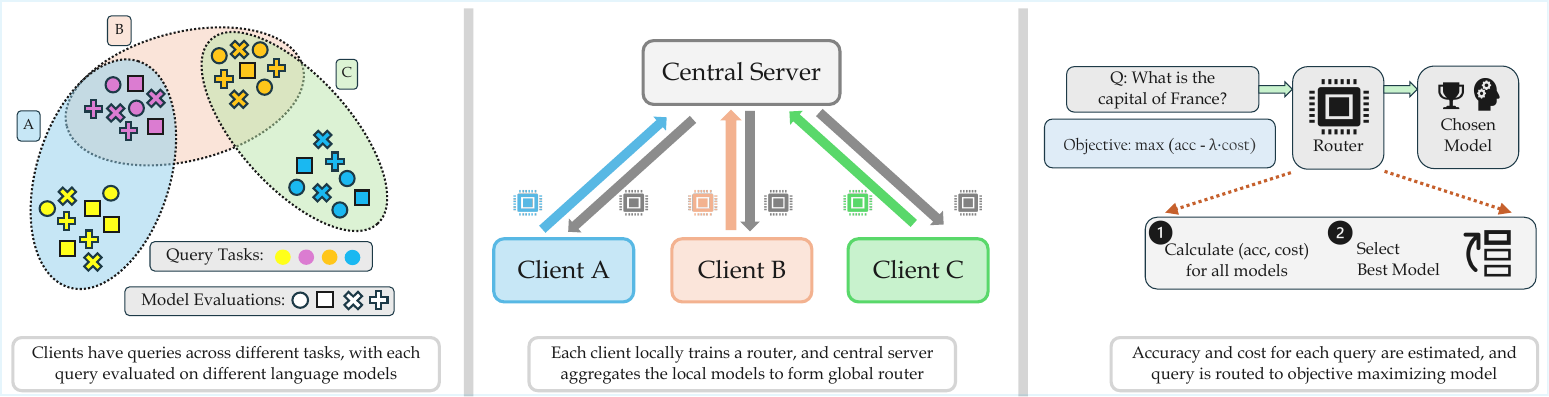}
    \caption{\textbf{Federated LLM Router System Description.} (i) Clients possess queries from different tasks, and each query is evaluated only on some LMs (in our experiments, we consider the extreme case of one LM per query, where the choice of LM is client-specific and can be highly non-uniform). (ii) Federated Learning setup where clients train their local router on local data, and central server aggregates these local models and returns global model to clients. (iii) Functioning of router, where modelwise accuracy and cost is calculated for the given query, which is routed to the most suitable model that maximizes the objective value. 
    }
    \label{fig:fedrouter_diagram}
\end{figure*}

\section{Problem Formulation}
\label{sec:problem_formulation}

We consider a pool of $M$ language models, denoted by $\mcal$.
Each query is a text prompt $s$ that we embed via a fixed, pretrained sentence encoder to obtain a representation\footnote{Throughout the text, we use terms prompt and query interchangeably to denote $\xb$.} $\xb \in \mathbb{R}^{d_{\text{emb}}}$, i.e., $\xb = \mathrm{Enc}(s)$ \cite{cer2018universalsentenceencoder}.
For each model $m \in \mcal$ and query embedding $\xb$, let $\ac(\xb,m) \in [0,1]$ denote the (expected) accuracy/performance score of the model $m$'s response on that query, and let $\cs(\xb,m)\in [0,\cm]$ denote the expected cost of responding to query, including input and output sequence costs \cite{artificialanalysis}.

Following the standard formulation \cite{hu2024routerbench,tsiourvas2025causal_llm_routing_regret_minimization}, we define the utility of prompt $\xb$ for model $m$ as its expected accuracy performance minus a penalty proportional to its expected cost (expectation taken over randomness of LLM responses):
\begin{equation}
    U_{\lmb}(\xb,m) := \Big(
    \ac(\xb,m)
    ~-~
    \lmb \, \cs(\xb,m)
  \Big),
  \label{eq:router_main}
\end{equation}
where the parameter $\lmb \ge 0$ controls the trade-off between cost and accuracy. Smaller $\lmb$ prioritizes accuracy, while larger $\lmb$ encourages cheaper models. A routing policy is a function that takes a prompt embedding as input and selects a corresponding model, written as $\pi : \mathbb{R}^{d_{\text{emb}}} \rightarrow \mathcal{M}$. Our objective is to learn routing policies that maximize the utility defined in \cref{eq:router_main} given a query $\xb$ and trade-off parameter $\lmb$.

In practice, $\ac(\xb,m)$ and $\cs(\xb,m)$ are unknown quantities that need to be estimated.
While one can directly
train a classifier that learns the routing policy (utility $U_\lmb$  maximizing model) for a fixed $\lmb$ \cite{ong2025routellmlearningroutellms,huang2025routerevalcomprehensivebenchmarkrouting}, we focus on modeling accuracy and cost functions via estimators 
$\aes(\xb,m)\!\approx\!\acc(\xb,m)$ and $\ces(\xb,m)\!\approx\!\cs(\xb,m)$
for all $m\!\in\!\mcal$, and then instantiate the router by plugging these estimates into
\Cref{eq:router_main}. We then choose the model that maximizes the estimated utility for given a query $\mathbf{x}$ as the routing policy.
This design is particularly natural in federated settings since different clients may have different budget/quality preferences, and can therefore choose $\lmb$ at inference time without retraining the router.

In the federated setting, we consider $N$ clients whose raw queries and embeddings are not shared, and only a \emph{single} language model is prompted for each query. Each client $i$ holds a local dataset $\dcal_i$ with $|\dcal_i| = \D_i$:
\begin{equation}
\dcal_i \;=\; \Bigl\{ \bigl(\xb_i^{(j)},\, m_i^{(j)},\, \hac_i^{(j)},\, \hcs_i^{(j)}\bigr) \Bigr\}_{j=1}^{\D_i},
\label{eq:local_dataset}
\end{equation}
where $\xb_{i}^{(j)}\!\in\!\mathbb{R}^{d_{\text{emb}}}$ is the embedding of the $j$-th local query and $m_{i}^{(j)} \in \mcal$ denotes the (single) model that is evaluated for  query $j$ at client $i$. $\hac_i^{(j)}$ and $\hcs_i^{(j)}$
are observed accuracy and cost samples corresponding to the evaluation, where $\E\!\left[\hac_i^{(j)} \mid \xb_i^{(j)}, m_i^{(j)}\right] = \ac\left(\xb_i^{(j)}, m_i^{(j)}\right)$ and $
\E\!\left[\hcs_i^{(j)} \mid \xb_i^{(j)}, m_i^{(j)}\right] = \cs\left(\xb_i^{(j)},m_i^{(j)}\right)$. We also denote the union of all local datasets by $\dcal=\bigcup_{i}\dcal_i$ with $|\dcal|=\D=\sum_i \D_i$.

\section{Proposed Federated Routing Methods}
\label{sec:methods}
In this section, we develop federated training and deployment pipelines for two common router families, parametric \mlp{} and nonparametric \kmeans{}. For both designs, we study how federated collaboration (\cref{fig:fedrouter_diagram}) mitigates (i) heterogeneous client query distributions and (ii) sparse and imbalanced query--model evaluations, while supporting personalization and dynamic participation. 

\begin{algorithm}[tb]
\caption{Federated \mlp{} training}
\label{alg:mlp_fl}
\begin{algorithmic}[1]
\STATE {\bfseries Input:} clients $\{\dcal_i\}_{i=1}^N$, rounds $T$, local steps $\tau$, learning rate $\eta$, initial global model $\mlpw^0$
\FOR{$t = 0$ {\bfseries to} $T-1$}
    \STATE Send $\mlpw^t$ to active clients, $\mathcal{S}^{(t)}$
    \FOR{clients $i \in \mathcal{S}^{(t)}$ {\bfseries in parallel}}
        \STATE Set local model $\mlpw_i \leftarrow \mlpw^t$
        \FOR{$s = 1$ {\bfseries to} $\tau$}
            \STATE $\mlpw_i \leftarrow \mlpw_i - \eta \widetilde{\nabla}{\mathcal{L}}_i(\mlpw_i) $ \COMMENT{SGD step}
        \ENDFOR
        \STATE Send updated model $\mlpw_i$ to server
    \ENDFOR
    \STATE $
    \mlpw^{t+1} \leftarrow
    \mfrac{1}{\sum_{i\in\mathcal{S}^{(t)}}|\dcal_i|}\sum_{i\in\mathcal{S}^{(t)}}|\dcal_i|\mlpw_i$
\ENDFOR
\STATE {\bfseries Output:}  parameters $\mlpw^T$
\end{algorithmic}
\end{algorithm}

\subsection{Federated \mlp{}}
The goal of the federated setting is to learn global estimators that predict the expected accuracy and cost for all $m\!\in\!\mcal$ given a prompt $\xb$ and generalize across clients. We employ accuracy and cost estimators $\aesw$ and $\cesw$, parametrized by $\mlpw\!\in\!\wcal$. Specifically, we use a single \mlp{} consisting of a shared MLP trunk $h_{\mlpw}\!:\!\mathbb{R}^{d_{\text{emb}}}\!\to\!\mathbb{R}^{d_h}$ that maps a query embedding $\xb$ to a hidden representation $h_{\mlpw}(\xb)$, followed by model-specific linear heads which output
$\aesw(\xb,m)$ and $\cesw(\xb,m)$ for each $m\!\in\!\mcal$  from $h_{\mlpw}(\xb)$.
We denote the collection of all trunk and head parameters by $\mlpw$ and define the global empirical loss function $\mathcal{L}(\mlpw)$ given by,
\begin{align}
 \mathcal{L}(\mlpw)=\;
\frac{1}{|\dcal|}&\sum_{i=1}^N \sum_{j=1}^{|\dcal_i|}
\Big[
\loss\!\lp
\hac(\xb_i^{(j)},m_i^{(j)}),\;
\aesw(\xb_i^{(j)},m_i^{(j)})
\rp \nn \\
&+~
\loss\!\lp
\hcs(\xb_i^{(j)},m_i^{(j)}),\;
\cesw(\xb_i^{(j)},m_i^{(j)})
\rp
\Big]. \label{eq:fed_objective}
\end{align}

We train \mlp{} using Federated Averaging (FedAvg) \citep{fedavg} as depicted in \cref{alg:mlp_fl}.
At each communication round, the server broadcasts the current global weights $\mlpw^t$ to the active clients.
Each client $i$ minimizes its empirical objective
${\mathcal{L}}_i(\mlpw)
= \frac{1}{|\dcal_i|}\sum_{(\xb,m)\in\dcal_i}
\big[
\loss(\hac(\xb,m), \aesw(\xb,m))
+
\loss(\hcs(\xb,m), \cesw(\xb,m))
\big]$
using $\tau$ steps of local stochastic gradient descent on its private dataset $\dcal_i$,
and returns the updated parameters to the server. To quantify sample-wise discrepancies, we adopt Mean Squared Error (MSE) as the loss function $\loss$. The server then aggregates the client updates via weighted averaging to produce the next global model.

Finally, the $\mlp{}$ induced by a parameterization $\mlpw$ is defined as the routing policy that maximizes the utility approximated by the estimators
\begin{align}
    \pi_{\mlpw, \lmb}(\xb) = \arg \max_{m\in\mcal} U_{\lmb}(\xb,m;\mlpw^{}),
    \label{eq:mlpind}
\end{align}
where $U_{\lmb}(\xb,m,\mlpw^{}) = \aesw(\xb,m) - \lmb \cesw(\xb,m)$.

\subsection{Federated \kmeans{}.}
\kmeans{} provides a simple, training-free estimator for routing utilities by exploiting locality in the query-embedding space. Concretely, the method partitions embeddings into a finite set of regions (Voronoi cells) and treats the per-model accuracy and cost as approximately constant within each region. This yields a piecewise-constant nonparametric estimator. For a new query, we map it to its nearest cluster center and then use the cluster empirical averages of estimated accuracy and cost for each model. We use a text encoder function $\mathrm{Enc}(\cdot)$ to group semantically similar prompts \cite{cer2018universalsentenceencoder,paraphrase_albert_reimers-2019-sentence-bert}.

In this section, we adopt slightly different notation, since the accuracy ($\aest[(m)][k]$) and cost estimators ($\cest[(m)][k]$) operate on clusters rather than directly on prompts. Our federated training of $\kmeans$ is elaborated in \Cref{alg:kmeans_fl}. First, each client runs local K-means \citep{lloyd} with $\klocal$ centers on its own embeddings (line 3) and transmits the resulting centroids along with cluster sizes (line 4). The server performs weighted K-means  (i.e., each centroid is weighted in the server’s objective by its local cluster size, so the global centers minimize a size-weighted squared distance) over those centroids to obtain $\kglobal$ global centers (line 6), which are broadcast back to clients (line 7). Using the received global centers, each client assigns its local samples to clusters (line 9), computes the average observed accuracy and cost for every (cluster, model) pair, and sends them to the server along with the number of evaluated samples (lines 10-13). No information is sent for (cluster, model) pairs with no samples. The server then takes weighted average of these statistics to obtain global estimators (line 14). At inference time, accuracy and cost statistics of closest cluster center to a query embedding $\xb$ are used in \Cref{eq:router_main} for routing. Both local and global K-means use Euclidean distance. $\klocal$ and $\kglobal$ are chosen with validation experiments.

Finally, given cluster centers and cost accuracy estimates  
$\kmw = \left\{\mu_{1:\kglobal},\,
\left\{\aest[(m)][k],\,\cest[(m)][k]\right\}_{
\raisebox{0.4ex}{$\scriptstyle m\in\mcal,\; k\in[\kglobal]$}
}\right\}$\footnote{We denote the output as a parameterization using $\kmw$.}, define the assigned clusters $k_\mu(\xb) = \arg\min_{k \in [\kglobal]} \lVert \xb - \mu_k\rVert_2$, and then the induced router is given as
\begin{equation*}
    \pi_{\kmw,\lmb}(\xb) := \arg\max_{m\in\mcal} \left(\aest[(m)][k_\mu(\xb)] - \lambda \cest[(m)][k_\mu(\xb)] \right).
\end{equation*}

\begin{algorithm}[tb]
\caption{Federated \kmeans{} training}
\label{alg:kmeans_fl}
\begin{algorithmic}[1]
\STATE {\bfseries Input:} clients $\{\dcal_i\}_{i=1}^N$, nb. of local clusters $\klocal$, nb. of global clusters $\kglobal$, distance metric $d(\cdot,\cdot)$
\FOR{each client $i \in \{1,\ldots,N\}$ {\bfseries in parallel}}
    \STATE Run $k$-means on local embeddings $\{\xb\}_{\xb \in \dcal_i}$ 
    \STATE Send $\{(\mu_{i,j}, n_{i,j})\}_{j=1}^{\klocal}$ obtained in line 3 to server
\ENDFOR
\STATE Server runs weighted $k$-means on $\bigcup_i \{(\mu_{i,j}, n_{i,j})\}_{j=1}^{\klocal}$ 
\STATE Broadcast $\{\mu_k\}_{k=1}^{\kglobal}$ obtained in line 6 to all clients
\vspace{0.25em}
\FOR{each client $i \in \{1,\ldots,N\}$ {\bfseries in parallel}}
    \FOR{each cluster $k \in [\kglobal]$ and model $m \in \mcal$}
        \STATE Compute local counts $n^{(m)}_{i,k}$ and local averages:
{\small
\vspace{-.5em}
\begin{align*}
\hspace{-2.3em}\bar{a}^{(m)}_{i,k}
&= \mfrac{1}{n^{(m)}_{i,k}}
\!\!\sum_{\substack{\xb \in \dcal_i \\ k_{\mu}(\xb)=k}}\!\!
\hac(\xb, m),
\;
\bar{c}^{(m)}_{i,k}
= \mfrac{1}{n^{(m)}_{i,k}}
\!\!\sum_{\substack{\xb \in \dcal_i \\ k_{\mu}(\xb)=k}}\!\!
\hcs(\xb, m)
\end{align*}
\vspace{-1.5em}
}
    \ENDFOR
    \STATE Send $\{(\bar{a}^{(m)}_{i,k}, \bar{c}^{(m)}_{i,k}, n^{(m)}_{i,k})\}_{m,k}$ to server
\ENDFOR
\STATE Server aggregates for all $m,k$:
\vspace{-0.7em}
\begin{align*}
\aest[(m)][k] \leftarrow \mfrac{\sum_i n^{(m)}_{i,k}\,\bar{a}^{(m)}_{i,k}}{\sum_i n^{(m)}_{i,k}},
\qquad
\cest[(m)][k] \leftarrow \mfrac{\sum_i n^{(m)}_{i,k}\,\bar{c}^{(m)}_{i,k}}{\sum_i n^{(m)}_{i,k}}.
\end{align*}
\vspace{-0.7em}
\STATE Output $\kmw=\left\{\{\mu_k\}_{k=1}^{\kglobal} \text{, }\left\{\aest[(m)][k], \cest[(m)][k]\right\}_{\substack{m\in \mcal\\k \in [\kglobal]}}\right\} $ \\ 
\end{algorithmic}
\end{algorithm}

\textbf{Comparison of both methods.}
Both parametric and nonparametric routers have been examined in previous literature, showing complementary strengths and weaknesses. 
Parametric routers (like learned MLPs)
can capture complicated model behavior patterns over a broad range of queries, helping improve generalization to unseen queries. In federated settings, we expect them to perform better due to the increased “collective coverage" by diverse clients.
Alternatively, nonparametric routers (like clustering based routers) \cite{jitkrittum2025universal,hu2024routerbench,patel2025proxrouterproximityweightedllmquery} are naturally incremental by design, eliminating the need for adapting or retraining router to accommodate new language models or queries. New language models can be onboarded by evaluating on the existing pool of queries and calculating cluster level averages, new queries can be incorporated through re-clustering and updating query embeddings. In contrast, a parametric router needs new head training or tuning parameters to adapt.

\section{Theoretical Results}
\label{sect:theory}
In this section, we provide informal statements of theoretical results to support our claims concerning the effectiveness of federated routing. Due to the lack of space, we defer assumptions, formal results and proofs to the \Cref{sec:theoryapp}.
\subsection{Federated \mlp{}}
\begin{theorem}[Convergence: \Cref{alg:mlp_fl} - Informal] Under standard bounded heterogeneity, unbiased gradient, and smoothness assumptions \cite{wang2020tackling,koloskova2020unified}, with step size
$\eta = \Theta\!\bigl(\sqrt{N/(\tau T)}\bigr)$, the empirical risk objective in \Cref{eq:fed_objective} converges at the rate
\[
\min_{t\in\{1,\dots,T\}} \E\!\left[\bigl\|\nabla {\mathcal{L}}(\mlpw^{(t)})\bigr\|^2\right]
\le
\tilde{\mathcal{O}}\!\left(\frac{1+A\sigma^2}{\sqrt{N\tau T}}\right).
\]
Here, $\tilde{\mathcal{O}}$ hides the faster decaying $1/T$ terms and $\sigma^2$ denotes the upper bound on heterogeneity in mini-batch gradients at a client, and the constant $A := N\sum_{i=1}^N(\D_i/\D)^2$. 
\end{theorem}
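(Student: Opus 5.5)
The proof follows the standard local-SGD/FedAvg analysis in the style of \citet{wang2020tackling}. It is adapted to the weighted aggregation of \Cref{alg:mlp_fl}, with weights $p_i:=\D_i/\D$, so that $\mathcal{L}=\sum_i p_i\mathcal{L}_i$ exactly.

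\textbf{Setup.} Assume full participation ($\mathcal{S}^{(t)}=[N]$) for the main argument. Let $\mlpw_i^{t,s}$ denote client $i$'s iterate after $s$ local steps in round $t$. Assume the following:
\begin{itemize}
\item each $\mathcal{L}_i$ is $L$-smooth;
\item stochastic gradients are unbiased with variance at most $\sigma^2$;
\item heterogeneity is bounded: $\sum_i p_i\|\nabla\mathcal{L}_i(\mlpw)-\nabla\mathcal{L}(\mlpw)\|^2\le \zeta^2$.
\end{itemize}
Since $\mathcal{L}$ is a mean of MSE losses, it is nonnegative, so $\mathcal{L}^\ast\ge 0$.

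\textbf{Step 1: one-round descent.} The server update can be written as
\[
\mlpw^{t+1}=\mlpw^t-\eta\sum_{i}p_i\sum_{s=0}^{\tau-1}\widetilde\nabla\mathcal{L}_i(\mlpw_i^{t,s}).
\]
Apply $L$-smoothness of $\mathcal{L}$ and take expectations. Split the stochastic gradient into its mean plus zero-mean noise that is independent across clients. The noise then contributes a second-order term
\[
\tfrac{L}{2}\eta^2\tau\sigma^2\sum_i p_i^2=\tfrac{L}{2}\eta^2\tau\sigma^2 A/N,
\]
which is exactly where $A=N\sum_i p_i^2$ enters. The inner-product term is handled with $2\langle a,b\rangle=\|a\|^2+\|b\|^2-\|a-b\|^2$. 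This gives a decrease of
\[
-\tfrac{\eta\tau}{2}\|\nabla\mathcal{L}(\mlpw^t)\|^2,
\]
plus a drift penalty $\tfrac{\eta L^2}{2}\sum_i p_i\sum_s\|\mlpw_i^{t,s}-\mlpw^t\|^2$. The remaining term, involving the norm of the averaged mean gradients, is absorbed when $\eta L\tau\le 1$.

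\textbf{Step 2: bound client drift.} This is the main obstacle, since the drift couples local steps with heterogeneity. Unroll the local recursion and use Jensen's inequality and smoothness to get
\[
\E\|\mlpw_i^{t,s}-\mlpw^t\|^2\le 2s\eta^2\sigma^2+2s^2\eta^2\,\E\Bigl\|\tfrac1s\textstyle\sum_{r<s}\nabla\mathcal{L}_i(\mlpw_i^{t,r})\Bigr\|^2.
\]
Decompose $\nabla\mathcal{L}_i(\mlpw_i^{t,r})$ as $[\nabla\mathcal{L}_i(\mlpw_i^{t,r})-\nabla\mathcal{L}_i(\mlpw^t)]+[\nabla\mathcal{L}_i(\mlpw^t)-\nabla\mathcal{L}(\mlpw^t)]+\nabla\mathcal{L}(\mlpw^t)$. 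Average over $i$ with weights $p_i$ and sum over $s$. For $\eta L\tau\le 1/4$, say, the self-referential drift term can be moved to the left-hand side, yielding
\[
\sum_ip_i\sum_s\E\|\mlpw_i^{t,s}-\mlpw^t\|^2\lesssim \eta^2\tau^2\sigma^2+\eta^2\tau^3\zeta^2+\eta^2\tau^3\|\nabla\mathcal{L}(\mlpw^t)\|^2.
\]
Choosing the constant on the last term small enough lets it be absorbed into the descent term from Step 1.

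\textbf{Step 3: telescope and tune.} Sum over $t$ and divide by $\eta\tau T$, using $\mathcal{L}^\ast\ge0$:
\[
\min_t\E\|\nabla\mathcal{L}(\mlpw^t)\|^2\lesssim \frac{\mathcal{L}(\mlpw^0)}{\eta\tau T}+\frac{L\eta A\sigma^2}{N}+L^2\eta^2(\tau\sigma^2+\tau^2\zeta^2).
\]
Plug in $\eta=\Theta(\sqrt{N/(\tau T)})$. The first two terms become $\mathcal{O}\bigl((1+A\sigma^2)/\sqrt{N\tau T}\bigr)$, with $\mathcal{L}(\mlpw^0)$ and $L$ treated as constants. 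The drift terms scale like $N/T$ and $N\tau/T$, so they decay as $1/T$ and are hidden in $\tilde{\mathcal O}$. One also needs the condition $\eta L\tau\le 1/4$, which holds for $T\gtrsim N\tau L^2$.

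\textbf{Partial participation.} If only a subset of clients participates in each round, uniform unbiased sampling adds a sampling-variance term of the same $1/T$ order as the drift terms.
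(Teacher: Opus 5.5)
Your full-participation argument is correct and follows the same route as the paper. The paper's proof simply invokes Theorem~1 of \citet{wang2020tackling} with every client running $\tau$ plain SGD steps, and your Steps~1--3 are an unrolled version of that analysis. They reproduce the leading term with $A=N\sum_i p_i^2$. Your hidden terms $N\sigma^2/T$ and $N\tau\zeta^2/T$ are slightly looser than the $N\sigma^2(\tau-1)/(\tau T)$ and $N\kappa^2(\tau-1)/T$ of \Cref{prop:mlp_fedavg_opt}, because you bound $\sum_{s<\tau}s$ by $\tau^2$.

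Two minor points:
\begin{itemize}
\item The mean-plus-noise split in Step~1 is not orthogonal within a client, since later local gradients depend on earlier noise. You need $\|a+b\|^2\le 2\|a\|^2+2\|b\|^2$ there, which only changes constants.
\item The paper's heterogeneity assumption $\sum_i p_i\|\nabla\mathcal{L}_i\|^2\le\beta^2\|\nabla\mathcal{L}\|^2+\kappa^2$ contains yours only as the case $\beta=1$, $\zeta=\kappa$. For general $\beta$, Step~2 produces an extra $\beta^2\eta^2\tau^3\|\nabla\mathcal{L}(\mlpw^t)\|^2$ term, which is absorbed by requiring $\eta L\tau\lesssim 1/\beta$.
\end{itemize}

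Your partial-participation remark, however, is wrong.

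\emph{The sampling term is $1/\sqrt{T}$, not $1/T$.} Even with an unbiased sampling rule, the sampled aggregate differs from the full one by roughly $\eta\tau\bigl(\sum_{i\in\mathcal{S}^{(t)}}q_i\nabla\mathcal{L}_i(\mlpw^t)-\nabla\mathcal{L}(\mlpw^t)\bigr)$. Its second moment is of order $\eta^2\tau^2\kappa^2/|\mathcal{S}^{(t)}|$. Through the $\tfrac{L}{2}\|\cdot\|^2$ term of the descent inequality, and after dividing by $\eta\tau$, this adds $\Theta\bigl(L\eta\tau\kappa^2/|\mathcal{S}^{(t)}|\bigr)=\Theta\bigl(L\kappa^2\sqrt{N\tau/T}/|\mathcal{S}^{(t)}|\bigr)$ to the bound. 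That term grows with $\tau$.

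This is not slack in the analysis. Take equal $|\mathcal{D}_i|$, $\tau=1$, $\sigma=0$ and local losses $\tfrac12\|\mlpw-c_i\|^2$. The expected squared gradient norm then settles at order $\eta\kappa^2/|\mathcal{S}|$ instead of decaying, which the full-participation bound (no $\kappa^2$ at the $1/\sqrt{T}$ level) cannot accommodate.

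\emph{The aggregation in \Cref{alg:mlp_fl} is biased under uniform client sampling.} Its ratio-weighted average is not unbiased for the $p$-weighted average. For example, sizes $(1,1,2)$ with two of three clients sampled give expected weights $(5/18,5/18,4/9)\neq(1/4,1/4,1/2)$. So with unequal $|\mathcal{D}_i|$ the iterates drift toward stationary points of a differently weighted objective, and $\|\nabla\mathcal{L}\|$ need not vanish at all.

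Hence the stated rate is established only under full participation, both by your argument and by the theorem the paper cites. Covering the partial participation in \Cref{alg:mlp_fl} would require an unbiased sampling/aggregation rule and an additional $1/\sqrt{T}$ sampling term in the rate.
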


The analysis is similar to \citet{wang2020tackling}.
When all clients have the same number of data points, i.e. $\D_i/\D = 1/N\;\forall i\in [N]$, we obtain a linear speedup of $N$ (the number of clients). Moreover, the convergence rate also scales with the number of local steps $\tau$. 

\begin{definition}[Suboptimality]
    We use $\pi^{\star}:\X\rightarrow \mcal$ to denote the optimal router policy which is defined as 
    \begin{align}
        \pi^\star(\xb) = \arg\max_{m\in\mcal} U_{\lmb}(\xb,m)
    \end{align}
    and define the suboptimality of a routing policy $\widehat \pi$ on a test time query distribution $\dt$ 
    and tradeoff parameter $\lmb$,
    \[
\mathrm{Subopt}(\widehat\pi)
:=
\E_{\xb\sim\dt}\!\left[U_{\lmb}\!\bigl(\xb,\pi^\star(\xb)\bigr)-U_{\lmb}\!\bigl(\xb,\widehat \pi(\xb)\bigr)\right].
\]
\end{definition}

\begin{theorem}[Suboptimality - Informal]
\label{thm:sampmaintext}
Let $\widehat{\mlpw}_i$ be the parameters of router learnt on the local dataset $\dcal_i$ and $\widehat{\mlpw}_{\mathrm{fed}}$ be the parameters of the federated router on $\dcal=\bigcup_{j=1}^N \dcal_j$. Let $\pi_{\mlpw,\lmb}$ denote router induced by a parameterization $\mlpw$ for tradeoff parameter $\lmb$ (\Cref{eq:mlpind})  
Then, under standard realizability assumptions \citep{zhang2023mathematical, foster2023foundations},
we have with probability  $1-\delta$,
\begin{align*}
\mathrm{Subopt}\left(\pi_{\widehat{\mlpw}_i,\lmb}\right)
&\le
C'\,\Gamma(\dcal_i)\,
\sqrt{\log\!\left(\frac{2\,\mathcal N_{\mathcal F}(1/\D_i)}{\delta}\right)},\\
\mathrm{Subopt}\left(\pi_{\widehat{\mlpw}_{\mathrm{fed}},\lmb_i}\right)
&\le
C'\,\,\Gamma(\dcal)\,
\sqrt{\log\!\left(\frac{2\,\mathcal N_{\mathcal F}(1/\D)}{\delta}\right)}.
\end{align*}
Where $C' = c\max\{1,\lmb\}\cm\;$ for some universal constant $c$, 
$\mathcal{N}_{\mathcal{F}}$ is the covering number \cite{wainwright2019high} of the estimator class and $\Gamma(\dcal_{\text{in}})$ denotes the coverage coefficient of the dataset $\dcal_{\text{in}}$ for the test distribution at client $i$.
\end{theorem}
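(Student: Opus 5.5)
The bound is the standard offline contextual-bandit argument: a plug-in (greedy) router is good whenever its reward estimates are accurate on the actions it compares. Both inequalities come from one argument applied to a generic dataset $\dcal_{\text{in}}$ of size $\D_{\text{in}}$ (either $\dcal_i$ or $\dcal$). The two cases differ only through $\D_{\text{in}}$ and through the coverage coefficient $\Gamma(\dcal_{\text{in}})$ relative to the test distribution $\dt$.

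\textbf{Step 1: reduce to an estimation error.} Write $\widehat U(\xb,m)=\aesw(\xb,m)-\lmb\cesw(\xb,m)$ for the learned $\mlpw=\widehat{\mlpw}$, and set $\widehat\pi=\pi_{\widehat{\mlpw},\lmb}$. Since $\widehat\pi(\xb)$ maximizes $\widehat U(\xb,\cdot)$, we have pointwise
\[
U_{\lmb}(\xb,\pi^\star(\xb))-U_{\lmb}(\xb,\widehat\pi(\xb))\le \bigl|U_{\lmb}-\widehat U\bigr|(\xb,\pi^\star(\xb))+\bigl|U_{\lmb}-\widehat U\bigr|(\xb,\widehat\pi(\xb)).
\]
Each term is bounded by $|\ac-\aesw|+\lmb|\cs-\cesw|$, so $\max\{1,\lmb\}$ factors out. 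Taking $\E_{\xb\sim\dt}$ leaves the estimation error under the test-time query distribution paired with the actions $\pi^\star(\xb)$ and $\widehat\pi(\xb)$.

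\textbf{Step 2: change of measure via coverage.} Define $\Gamma(\dcal_{\text{in}})$ as a concentrability coefficient. It bounds the ratio between $\dt$ combined with arbitrary (in particular the two greedy) action choices, and the empirical design distribution $\rho_{\text{in}}$ of pairs $(\xb,m)$ in $\dcal_{\text{in}}$. This can be done as a density ratio or, more weakly, as $\sup_f \E_{\dt}[f^2]/\E_{\rho_{\text{in}}}[f^2]$ over differences in the estimator class. Cauchy--Schwarz then gives
\[
\mathrm{Subopt}(\widehat\pi)\lesssim \max\{1,\lmb\}\,\Gamma(\dcal_{\text{in}})\,\sqrt{\E_{\rho_{\text{in}}}\bigl[(\ac-\aesw)^2+(\cs-\cesw)^2\bigr]}.
\]
Realizability, i.e.\ that the true $\ac,\cs$ lie in the class $\mathcal F$, is what makes the right-hand side small. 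This is the only place the single-model-per-query sparsity enters. The federated union $\dcal$ covers more $(\xb,m)$ pairs, which typically makes $\Gamma(\dcal)$ smaller than $\Gamma(\dcal_i)$.

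\textbf{Step 3: least-squares generalization.} Bound the in-distribution squared error of the empirical risk minimizer of \Cref{eq:fed_objective} (respectively $\mathcal L_i$). Because of realizability and boundedness (accuracies in $[0,1]$, costs in $[0,\cm]$), the standard least-squares analysis applies: a $1/\D_{\text{in}}$-cover of $\mathcal F$, Bernstein or Freedman concentration for the centered excess loss, and a union bound over the cover. This yields, with probability $1-\delta$,
\[
\E_{\rho_{\text{in}}}\bigl[(\ac-\aesw)^2+(\cs-\cesw)^2\bigr]\lesssim \cm^2\,\log\bigl(2\mathcal N_{\mathcal F}(1/\D_{\text{in}})/\delta\bigr)
\]
up to the normalization that is absorbed into $\Gamma$ as the paper states it, and the approximation term from the cover is dominated. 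The factor $2$ in the logarithm accounts for the union bound over the accuracy and cost heads. Substituting into Step 2 gives the claimed form with $C'=c\max\{1,\lmb\}\cm$.

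I expect Step 3, together with its interface with Step 2, to be the main obstacle. The federated iterate $\widehat{\mlpw}_{\mathrm{fed}}$ is produced by FedAvg, not by exact ERM, so I will assume, as the statement does, that it is the global empirical minimizer; the convergence theorem above only justifies approximate stationarity. I will also need to handle the non-i.i.d.\ union of heterogeneous client samples. To do so, I will condition on the queries and the model choices, treat only the accuracy and cost noise as independent, and state the guarantee as a fixed-design bound on the empirical design $\rho_{\text{in}}$. The distribution shift to $\dt$ is then pushed entirely into $\Gamma$.
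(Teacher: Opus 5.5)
Your proposal is correct and follows essentially the same route as the paper: the greedy-optimality decomposition leaving only estimation errors at $\pi^\star(\xb)$ and $\widehat\pi(\xb)$, a data-dependent coverage coefficient that transfers test-time error to in-sample squared error, and a realizable fixed-design least-squares bound over a $1/\D_{\text{in}}$-cover of $\mathcal F$. The differences are minor. The paper defines $\Gamma$ directly as $|\E_{\xb\sim\dt}[\cdot]|$ over the square root of the \emph{summed} (not averaged) in-sample squared error, so your Cauchy--Schwarz step is built into the definition and the $1/\sqrt{\D_{\text{in}}}$ decay sits inside $\Gamma$. It also proves the in-sample bound for the joint vector-valued loss, which is necessary because the heads share a trunk and the ERM inequality therefore holds only for the summed loss; it does this with an exponential-martingale inequality plus a Cauchy--Schwarz MGF bound for the correlated accuracy/cost noise, rather than Bernstein/Freedman or separate per-head analyses.
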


A formal statement of the theorem, along with the relevant definitions and an interpretation of the coverage coefficient, is provided in \Cref{sec:theoryapp}, \Cref{thm:mlp_subopt_bounds}.

The coverage coefficients of a dataset of size $\D'$ decays roughly as $1/\sqrt{\D'}$ and the covering number grows as $\log(1/\D')$. Since the federated router optimizes on the dataset $\dcal = \bigcup_{i=1}^N\dcal_i$ we get a lower suboptimality against local routing $\tilde{\mathcal{O}}(1/\sqrt{\D_i})$ vs $\tilde{\mathcal{O}}(1/\sqrt{\D})$.

\begin{remark}
    \Cref{thm:sampmaintext} extends to each client’s local test distributions $\dt_i$ when routers are induced using the learned parameters and local tradeoff coefficients~$\lmb_i$. A stronger version of the theorem is given in \Cref{sec:theoryapp}.
\end{remark}

\subsection{Federated \kmeans{}}
    \begin{theorem}[Suboptimality-Informal]
        Let $\widehat \kmw$ be the output of \Cref{alg:kmeans_fl}. Under mild smoothness of utility over the embedding space and uniform model logging  assumptions, the suboptimality of the router $ \pi_{\widehat\kmw,\lmb}$ induced by the outputs from Algorithm \Cref{alg:kmeans_fl} on the global test distribution $\dt$ is upper bounded as
        \begin{align*}
\Subopt\left( \pi_{\widehat\kmw,\lmb}\right)
&~\le~
2L_\lmb\,\E_{\xb\sim\distr^{\mathrm{test}}}\!\big[\lVert \xb - \mu_{\widehat\kmw}(\xb)\rVert_2\big]
\\& ~+~
2L_\lmb\,\Delta_{\max}
~+~
2B_\lmb\sqrt{\frac{\log\!\big(\tfrac{2\kglobal|\mcal|}{\delta}\big)}{2\,n_{\min}}},
\end{align*}
where $B_\lmb = 1 + \lmb\,\cm$, and
the term $\mu_{\widehat\kmw}(\xb)$ denotes the embedding of the global cluster center associated with $\xb$ using ${\widehat\kmw}$.
The constant $L_\lmb$ is the smoothness parameter of the utility in the embedding space and $n_{\min} = \min_{k,m}\; n_{k,m}$, where $n_{k,m}$ is the number of points in global cluster $k$ evaluated on model $m$ in the training dataset. Finally, $\Delta_{\max}$ quantifies the maximum discrepancy across clusters between the training and test distributions.
    \end{theorem}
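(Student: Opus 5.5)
The proof rests on the standard plug-in argument. Write $\widehat U_\lmb(\xb,m) := \aest[(m)][k_\mu(\xb)] - \lmb\,\cest[(m)][k_\mu(\xb)]$ for the estimated utility, and abbreviate $\widehat\pi := \pi_{\widehat\kmw,\lmb}$. Since $\widehat\pi(\xb)$ maximizes $\widehat U_\lmb(\xb,\cdot)$, we have $\widehat U_\lmb(\xb,\pi^\star(\xb)) \le \widehat U_\lmb(\xb,\widehat\pi(\xb))$. Adding and subtracting $\widehat U_\lmb$ then gives, pointwise,
\[
U_\lmb(\xb,\pi^\star(\xb)) - U_\lmb(\xb,\widehat\pi(\xb)) \le 2\max_{m\in\mcal}\bigl|U_\lmb(\xb,m)-\widehat U_\lmb(\xb,m)\bigr|.
\]
This reduces the task to bounding the uniform utility estimation error, and explains the factor $2$ in front of every term. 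I would take expectation over $\xb\sim\dt$ at the very end.

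Next, I would decompose the error $|U_\lmb(\xb,m)-\widehat U_\lmb(\xb,m)|$ for a point $\xb$ in cluster $k=k_\mu(\xb)$ into three pieces.

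\textbf{(i) Smoothness.} Assume $U_\lmb(\cdot,m)$ is $L_\lmb$-Lipschitz in the embedding, for instance because $\ac$ and $\cs$ are Lipschitz. Then $|U_\lmb(\xb,m)-U_\lmb(\mu_k,m)|\le L_\lmb\|\xb-\mu_k\|_2$, which produces the first term.

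\textbf{(ii) Distribution discrepancy.} Let $\bar U_{k,m}$ be the population mean of $U_\lmb(\cdot,m)$ over the training distribution restricted to cell $k$. By uniform model logging, the model choice is independent of $\xb$ within the cell, so this conditional mean is well defined and unbiased for $\bar U_{k,m}$. I would relate $U_\lmb(\mu_k,m)$ to $\bar U_{k,m}$ via Lipschitzness, $|U_\lmb(\mu_k,m)-\bar U_{k,m}|\le L_\lmb\,\E_{\mathrm{train}}[\|\xb'-\mu_k\|\mid k]$. I would then define $\Delta_{\max}$ as the maximum over clusters of this training-side spread, or more generally of the gap between the training and test conditional cluster geometry. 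Defining it exactly this way is what lets the term appear with coefficient $L_\lmb$.

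\textbf{(iii) Statistical error.} It remains to bound $|\bar U_{k,m} - \widehat U_{k,m}|$, where $\widehat U_{k,m}$ is the server's weighted average. The key observation is that the server aggregation in \Cref{alg:kmeans_fl} equals exactly the pooled empirical mean over all $n_{k,m}$ samples from every client in cluster $k$ evaluated on $m$. Federation therefore adds no bias, and this is where cross-client coverage enters through $n_{k,m}$.

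For the statistical term, each observation $\hac-\lmb\hcs$ lies in an interval of length at most $1+\lmb\cm=B_\lmb$. Hoeffding's inequality, applied conditionally on the cluster assignments and the evaluated models, gives deviation at most $B_\lmb\sqrt{\log(2/\delta')/(2n_{k,m})}$. A union bound over the $\kglobal|\mcal|$ pairs with $\delta'=\delta/(\kglobal|\mcal|)$, together with $n_{k,m}\ge n_{\min}$, yields the third term uniformly with probability $1-\delta$. Combining (i)--(iii) inside the factor-$2$ bound and averaging over $\dt$ gives the claim.

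I expect the main obstacle to be the conditioning in step (iii). The centers $\mu_k$ are computed from the same embeddings that are later averaged. Hoeffding still applies, because conditional on all embeddings the centers and assignments are fixed and only the outcomes $\hac,\hcs$ remain random, with means $\ac,\cs$. This is fine provided the bias terms (i) and (ii) are stated relative to the empirical cluster composition, which I would absorb into the definition of $\Delta_{\max}$. Making this definition precise, so that the middle term has coefficient exactly $2L_\lmb$, is the delicate bookkeeping step.
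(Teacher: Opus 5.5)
Your plug-in inequality and your Hoeffding/union-bound step are fine; the latter is exactly the paper's. The gap is the middle term. Routing the pointwise error through $U_\lmb(\mu_k,m)$ and the \emph{training} cluster mean gives
\[
\Subopt(\widehat\pi)\le 2L_\lmb\,\E_{\xb\sim\dt}\|\xb-\mu(\xb)\|_2+2L_\lmb\max_k\E_{\xb'\sim\dx(\cdot\mid k)}\|\xb'-\mu_k\|_2+2B_\lmb\sqrt{\log(2\kglobal|\mcal|/\delta)/(2n_{\min})},
\]
or an empirical, $m$-dependent per-cell version if you condition on all embeddings. This is a true inequality, but its middle term never involves $\dt$ and does not vanish when training and test distributions agree. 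It is a second, worst-cluster quantization error, not the train--test discrepancy $\Delta_{\max}$ of the statement; in the paper, $\Delta_k=W_1(\dx(\cdot\mid k),\dt(\cdot\mid k))$.

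The bookkeeping you defer cannot repair this inside a pointwise argument. A genuine shift term requires passing through the test cluster mean $\bar U_k(m)=\E_{\dt}[U_\lmb(\xb,m)\mid k_\mu(\xb)=k]$. The within-cell deviation $\E_{\xb\sim\dt(\cdot\mid k)}|U_\lmb(\xb,m)-\bar U_k(m)|$ is only bounded by $2L_\lmb\E\|\xb-\mu_k\|_2$, and this is essentially tight. For example, take one dimension with $\mu_k=0$, conditional mass $1-p$ at $0$ and $p$ at $1$, and $U_\lmb(x,m)=L_\lmb x$. The deviation equals $2L_\lmb p(1-p)$, while $L_\lmb\E|x-\mu_k|=L_\lmb p$. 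So with no shift and $n_{\min}\to\infty$, the quantity $2\E\max_m|U_\lmb(\xb,m)-\widehat U_\lmb(\xb,m)|$ you reduce to already exceeds the stated right-hand side for $p<1/2$.

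The missing idea is to compare $\widehat\pi$ with the cluster-level oracle $\pi_\mu(\xb)\in\arg\max_m\bar U_{k_\mu(\xb)}(m)$ rather than directly with $\pi^\star$.

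\emph{$\pi^\star$ versus $\pi_\mu$.} The centroid policy $\pi_{\mathrm{cen}}(\xb)\in\arg\max_m U_\lmb(\mu(\xb),m)$ is also constant on cells, so $\pi_\mu$ does at least as well on average. A three-term add-and-subtract bounds the regret of $\pi_{\mathrm{cen}}$ by $2L_\lmb\E\|\xb-\mu(\xb)\|_2$.

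\emph{$\pi_\mu$ versus $\widehat\pi$.} Both policies are constant on each cell. Conditionally on $k_\mu(\xb)=k$, the expected gap therefore equals $\bar U_k(m_\mu)-\bar U_k(\widehat m)\le 2\max_m|\widehat U_k(m)-\bar U_k(m)|$, where $m_\mu,\widehat m$ are the models chosen on cell $k$. Only cluster means enter, and no within-cell variation is paid. Then split
\[
|\widehat U_k(m)-\bar U_k(m)|\le|\widehat U_k(m)-\bar U^{\log}_k(m)|+|\bar U^{\log}_k(m)-\bar U_k(m)|,
\]
with $\bar U^{\log}_k(m)=\E_{\dx}[U_\lmb(\xb,m)\mid k_\mu(\xb)=k]$. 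Kantorovich--Rubinstein bounds the second piece by $L_\lmb\Delta_k$, and Hoeffding bounds the first.

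For Hoeffding to target the \emph{population} mean $\bar U^{\log}_k(m)$, the paper assumes a sample split. Centers come from one half of each client's data, and statistics from a uniformly logged second half. Given the centers, the samples in cell $(k,m)$ are then draws from $\dx(\cdot\mid k)$. Conditioning on all embeddings, as you propose, keeps Hoeffding valid but centers it on the empirical cell average. That conflicts with your appeal to uniform logging for unbiasedness, and it again leaves an empirical spread term instead of a train--test discrepancy.
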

    The formal version of the theorem and its proof are presented in \Cref{sec:theoryapp}, \Cref{thm:kmeans_regret}.
    Federated training will generally result in a higher $n_{\min}$ as a result of more datapoints $n_{i,k,m} = \sum_i n_{k,m}$. Moreover, better ``coverage" of the input embedding space in the federated dataset leads to smaller first term $\E_{\xb\sim\distr^{\mathrm{test}}}\!\big[\lVert \xb - \mu_{\widehat\kmw}(\xb)\rVert_2$ in federated routing. Hence, the federated methods incur a lower suboptimality compared to a local $\kmeans{}$. 
    
\section{Experiments}
\label{sec:experiments}

We evaluate our proposed federated routers on $\routerbench$ \citep{hu2024routerbench} (results are in the main text)  and $\proxr$ \citep{patel2025proxrouterproximityweightedllmquery} (results are in the appendix).
\routerbench{} comprises evaluations of 11 LLMs (public and proprietary) over 8 public datasets, and \proxr{} contains 14 public LLMs evaluated over 10 public datasets. 

In our experiments, we use \text{all-mpnet-base-v2} by \cite{song2020mpnetmaskedpermutedpretraining} as the query encoder $\mathrm{Enc}(\cdot)$. We also experiment with two other encoder models in \Cref{apndx:different_enc_results}, and report no significant performance difference in centralized experiments. We detail the experimental settings in \Cref{app:exp_setting_details}.

\textbf{Federated simulation protocol.}
We simulate a federated setting with $N=10$ clients and a partial participation rate of $0.6$ fraction of clients per round.
To induce statistical heterogeneity, we partition \routerbench{} across clients using Dirichlet distribution with concentration parameter $\alpha=0.6$ over task labels, following \cite{dirichlet}.
To reflect realistic client behavior, we assume that each query is evaluated by {a single} model, i.e., each training sample contains feedback for one $(\text{query},\text{model})$ pair. 
The models observed in client datasets are non-uniformly distributed, so some models receive fewer supervision signals than others within same $\dcal_i$. Details on the federated setup, a t-SNE visualization for query heterogeneity and a bubble plot for model heterogeneity can be found in \Cref{apndx:dataset_details}. 

In the results, we report \emph{accuracy--cost trade-off} curves obtained by sweeping the routing trade-off parameter $\lambda$, which controls the relative preference between selecting cheaper models versus higher-quality models.
Larger $\lambda$ prioritizes cost reduction while smaller $\lambda$ prioritizes accuracy by routing more queries to typically stronger and  more expensive models. The cost is reported as the average price across all test queries based on the chosen routed models. 
To numerically summarize each curve with a scalar, we additionally report normalized area under the curve (AUC), by integrating accuracy as a function of cost over the sweep and normalizing by the cost range; a higher AUC indicates a better accuracy--cost frontier.

\begin{figure}[!t]
  \centering
  \includegraphics[width=0.9\linewidth]{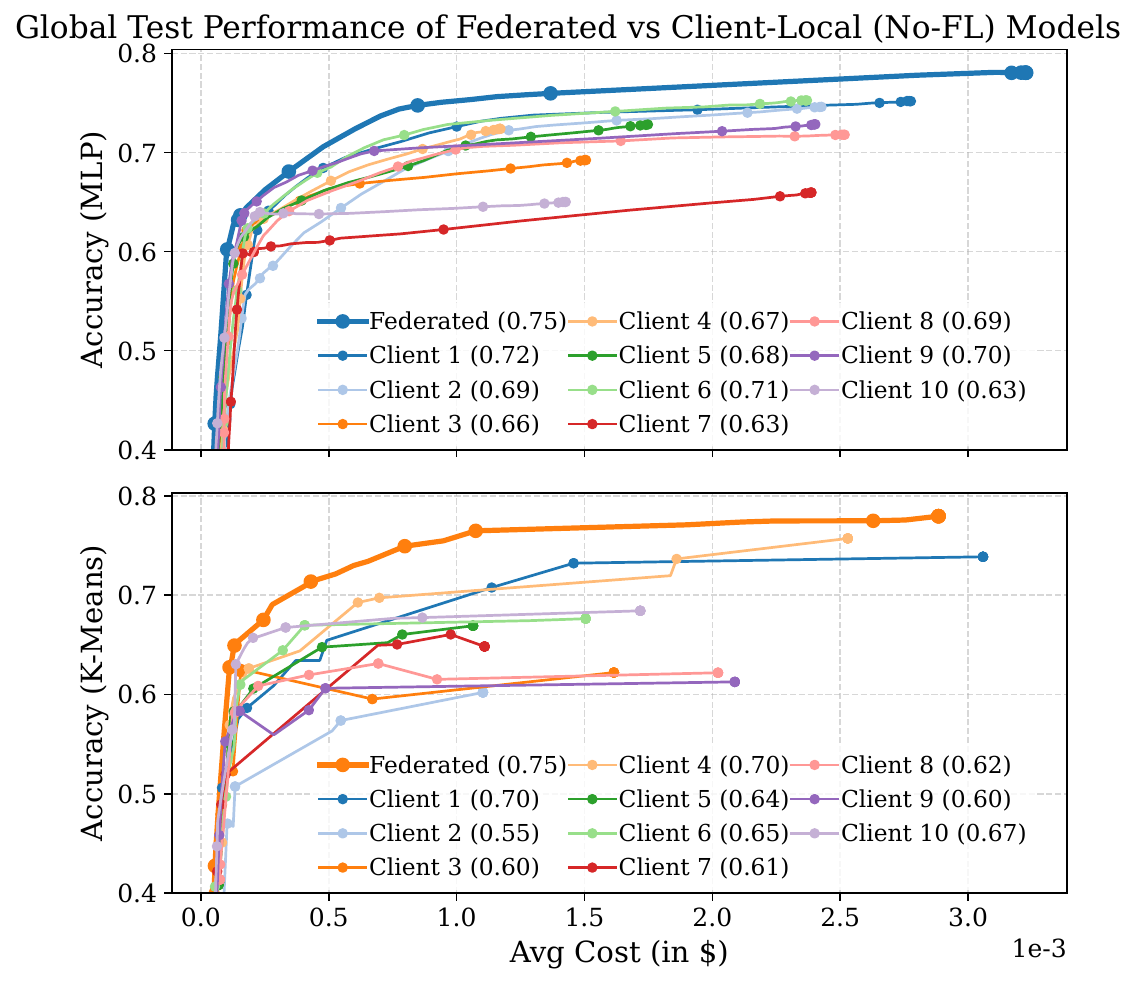}
  \caption{\textbf{Federated vs.\ client-local (no-FL) routers on the global test distribution.}
  Accuracy--cost curves obtained by sweeping the trade-off parameter $\lmb$ for \mlp{} (top) and \kmeans{} (bottom). AUC scores are shown in parentheses in the legend.
  Federated training improves generalization to the global distribution, with larger gains for \kmeans{}.
  }
  \label{fig:global_vs_local_only_global_test}
\end{figure}

\subsection{Federated Learning Improves Generalization with Better Query Distribution Coverage}
In practice, each client typically observes a small number of training queries, and its local query distribution may cover a narrow region of the global query space.
As a result, routers trained purely on local data can fail to generalize to the broader population distribution.
To quantify this effect, we compare a federated router trained collaboratively across clients against \emph{client-local} (no-FL) routers trained independently on each client, where both are evaluated on a global test set.
\Cref{fig:global_vs_local_only_global_test} shows that federated training consistently yields a better accuracy--cost trade-off for both \mlp{} and \kmeans{}.
The gains are particularly pronounced for \kmeans{}, since clustering with limited and heterogeneous local samples produces unstable centers and poor global generalization, whereas federated collaboration effectively increases query coverage without requiring central access to raw queries.

\begin{figure}[!t]
  \centering
  \includegraphics[width=\linewidth]{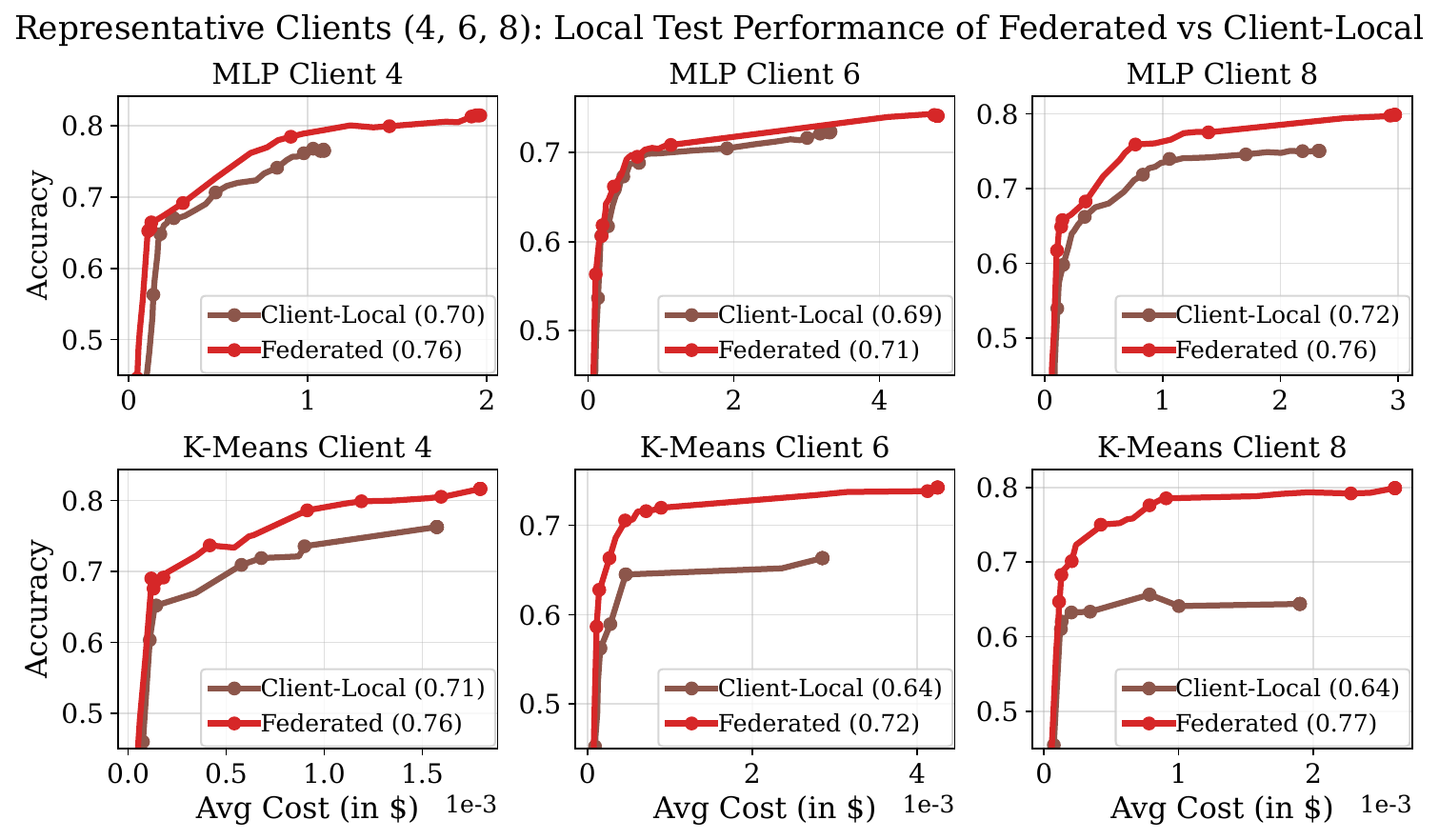}
  \caption{\textbf{Federated vs.\ client-local (no-FL) routers evaluated on local test sets.}
  We show representative clients for \mlp{} (top row) and \kmeans{} (bottom row). Numbers in parentheses show AUC scores in the legend.
  Federated training improves the accuracy--cost frontier even in-distribution, primarily due to increased effective model coverage under sparse and imbalanced per-client query--model evaluations.}
  \label{fig:local_vs_global_local_test_selected_clients}
\end{figure}

\subsection{Federated Learning Even Improves In-distribution Local Performance via Better Model Coverage}
To reflect practical real-life settings, we assume that a training query is evaluated by only one model in the pool, rather than exhaustively by all models, and the model distribution in a client training set is non-uniform.
This sparsity and imbalance implies that a single client may have limited {model coverage} even for queries drawn from its own distribution, making local routers noisy and biased.
Federated learning mitigates this by learning more accurate estimators for models with collaboration across clients.
As shown in \Cref{fig:local_vs_global_local_test_selected_clients}, the federated router improves the accuracy--cost trade-off \emph{even when evaluated on clients' local test set}, indicating that collaboration helps clients learn better routing decisions in-distribution by improving effective model coverage.
The results for all clients are deferred to \Cref{apndx:all_clients_local_tests}.

\subsection{When New Models are Added to the Pool}
\label{sect:new_models_join}
In deployed systems, the set of available LLMs is not static since new models are introduced every day, requiring routers to support \emph{cheap} extension without full retraining \cite{feng2025graphroutergraphbasedrouterllm,jitkrittum2025universal}.
Our designs for both \mlp{} and \kmeans{} naturally accommodate such model expansions.
For \mlp{}, to add a new model, we append a new head and train only that head by keeping the trunk and existing heads frozen.
For \kmeans{}, since the query embedding space is unchanged, the introduction of a new model reduces to estimating its accuracy \& cost statistics over existing regions of the embedding space.
We simulate a scenario where three models are withheld during the initial router training and are introduced later.
Upon introduction, each client evaluates the new models on a small calibration subset (10\% of its prompts) to estimate the required per-model statistics (for \kmeans{}) or to train the newly added heads (for \mlp{}).
\Cref{fig:new_model_in_the_system} shows the accuracy--cost trade-off before and after expansion on the global test distribution.
Both routers benefit from the enlarged model pool after a lightweight adaptation without re-training the full router.

\begin{figure}[tb]
  \centering
  \includegraphics[width=0.83\linewidth]{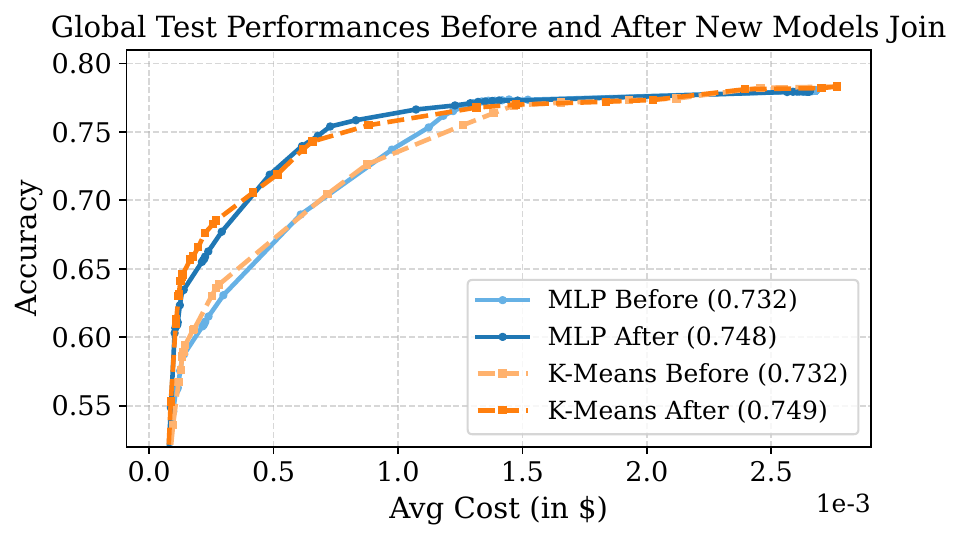}
  \caption{\textbf{Adapting to new models.}
  Accuracy--cost frontiers on the global test set for routers trained with three models withheld and after those models are introduced and incorporated via a lightweight calibration step. 
  }
  \label{fig:new_model_in_the_system}
\end{figure}

\subsection{Adaptive Personalization for High Heterogeneity}
\label{sec:adaptive_personalization}

Under extreme client data heterogeneity, a federated router can underperform on some clients' {local} distributions, even on test sets.
For instance, if a client predominantly has biology prompts while other clients rarely observe this domain, the federated router may be poorly trained for that client, making a local router preferable despite the reduced model-coverage it suffers from.
To address this, we propose a lightweight \emph{adaptive personalization} that interpolates between the federated router and the client's locally trained router based on their empirical calibration errors on the client's existing data points.
Assume each client $i$ maintains the global federated ($\aest[(m)]$, $\cest[(m)]$) and locally trained accuracy and cost estimators ($\aest[(i,m)]$, $\cest[(i,m)]$).

After training, client $i$ calculates the mean absolute error (denoted as operator $e(\cdot)$) of its training samples (no additional model calls) for the accuracy and cost of each model for both routers. 
Then, at inference time, for every model $m\in\mcal$, the client calculates the weights of the federated and local estimators inversely proportional to their errors separately for accuracy and cost,
\[w^{(i,m)}_a=\mfrac{e(\aest[(m)])}{e(\aest[(m)])+e(\aest[(i,m)])}, w^{(i,m)}_c=\mfrac{e(\cest[(m)])}{e(\cest[(m)])+e(\cest[(i,m)])}. \]
Then, they weight estimates of local and federated estimators for accuracy by $ w^{(i,m)}_a \aest[(i,m)]+(1-w^{(i,m)}_a)\aest[(m)]$, and cost by $ w^{(i,m)}_c \cest[(i,m)]+(1-w^{(i,m)}_c)\cest[(m)]$ to plug in the routing rule in \Cref{eq:router_main} in the inference time. We observe that reusing the same training data points for this calibration does not result in underperformance in our experiments.

We repeat the main experiment under extreme heterogeneity by sampling client partitions with a Dirichlet concentration parameter $\alpha=0.03$.
\Cref{fig:adaptive_personalization_main} reports the accuracy--cost frontiers on local test sets for a representative subset of clients, comparing federated routers, local routers, and our adaptive personalization.
We observe that under this extreme heterogeneity, federated \mlp{} can indeed underperform local routers for some clients.
In contrast, adaptive personalization generally matches and in several cases improves upon both by leveraging the federated router's broader model coverage while correcting for client-specific distribution.
For \kmeans{}, the federated router does not fall below local routers.
We attribute this to \kmeans{} being comparatively robust under heterogeneity \cite{dennis2021heterogeneity} and to that local \kmeans{} are especially bad when the coverage is low, since they must estimate per-centroid, per-model statistics from limited client data.

\textbf{Supplementary experiments.} We present additional experimental results in \Cref{sect:app_routerb_exp}. \Cref{sect:fl_vs_centralized} shows that federated training achieves performance comparable to its centralized counterpart. \Cref{apndx:new_clients_join} discusses the adaptation of new clients joining the system after initial training. The results with \proxr{} are given in \Cref{apndx:results_with_proxr}.

\begin{figure}[!t]
  \centering
  \includegraphics[width=\linewidth]{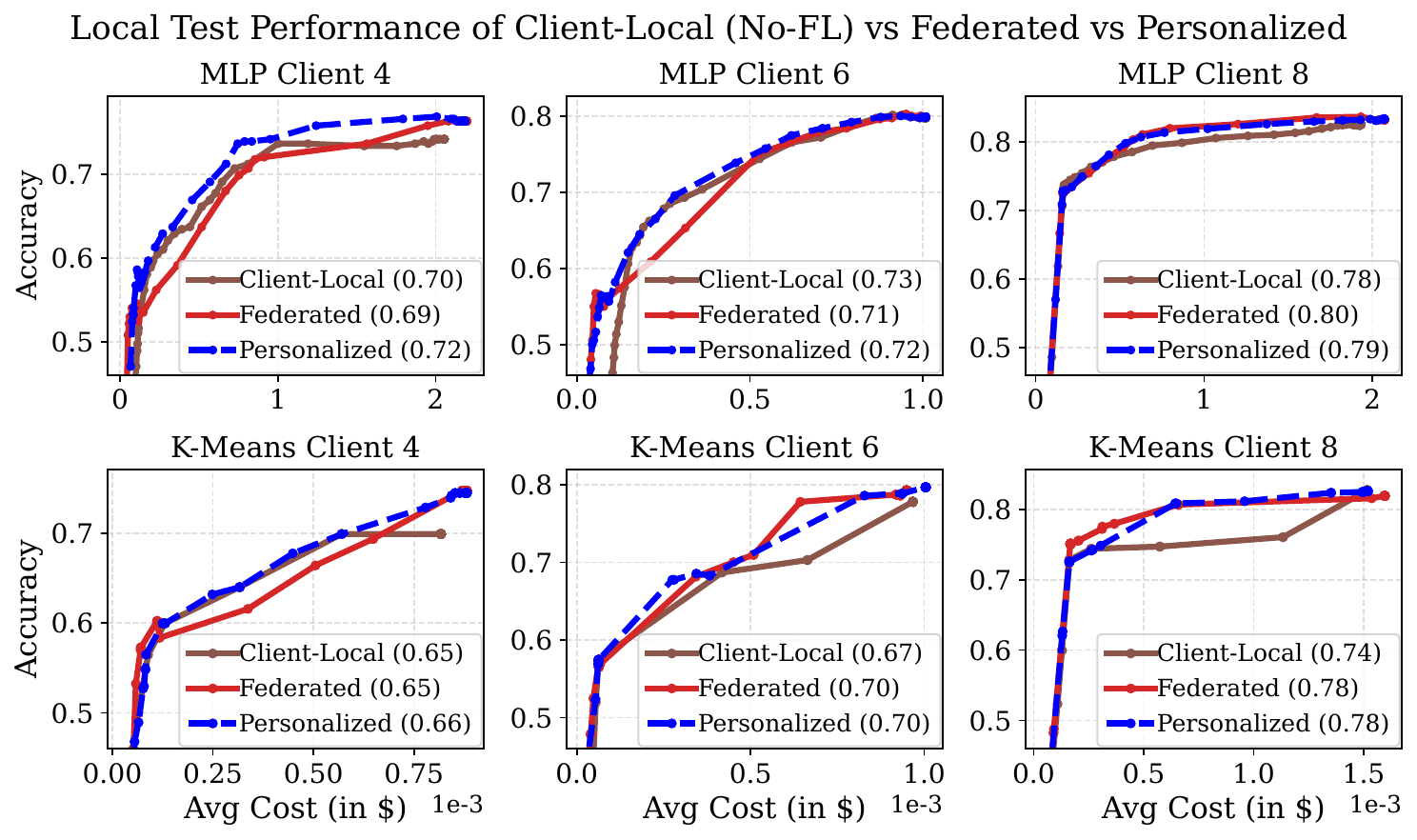}
  \caption{\textbf{Adaptive personalization under extreme heterogeneity ($\alpha=0.03$).} We plot accuracy--cost frontiers evaluated on local test sets for representative clients, comparing federated training, isolated local training, and the proposed adaptive mixture of the two. Top: \mlp{}. Bottom: \kmeans{}.}
  \label{fig:adaptive_personalization_main}
\end{figure}

\section{Conclusion}
\label{sect:conc}

To the best of our knowledge, this is the first work to tackle \emph{sparse} and \emph{decentralized} query--model evaluation data for training language query routers. We examine the practical scenario where prompts and evaluations are local to clients, and query--model coverage is incomplete and imbalanced across the model pool. We propose federated training as a solution across both canonical routing paradigms: a parametric \mlp{} and a nonparametric \kmeans{}, enabling collaborative learning from clients' offline data. Empirically, federated routing improves the accuracy--cost frontier over client-local baselines on both global and client-level test distributions, and adaptive personalization further improves robustness under extreme heterogeneity. We validate our empirical observations through theoretical results characterizing how federation reduces routing suboptimality compared to client-local routers. Overall, federated learning offers a practical foundation for training LLM routers from privacy-sensitive, fragmented data. Online routing, where the router is updated after deployment with new query evaluations, remains a promising direction for future work.

\newpage
\section*{Acknowledgements}
This work was partially supported by the US Department of Energy under grant DESC0025652 and the US National Science Foundation under grants CNS-2409138 and CNS-2533813 to CJW. The work was also partially supported by the AI2C Seed grant and the US National Science Foundation under grants CCF 2045694, CNS-2112471, CPS-2111751, and ONR N00014-23-1-2149 to GJ.
This work used PSC Bridges-2 GPU at Pittsburgh Supercomputing Center through allocation CIS250087 from the Advanced Cyberinfrastructure Coordination Ecosystem: Services \& Support (ACCESS) program, which is supported by US National Science Foundation grants \#2138259, \#2138286, \#2138307, \#2137603, and \#2138296.

\bibliography{ref}
\bibliographystyle{icml2026}

\newpage
\appendix
\crefalias{section}{appendix}
\crefalias{subsection}{appendix}
\crefalias{subsubsection}{appendix}

\onecolumn

\section{Extended Related Work}
\subsection{Federated Learning and LLMs}
\label{apndx:rw_fedllm}

Federated learning (FL) studies how to train a shared model from decentralized data while keeping raw client data on-device \citep{fedavg}.
In practice, FL must cope with \emph{statistical heterogeneity} (client data are non-iid)
and tight communication and compute budgets \citep{wang2020tackling}.
These challenges have motivated a rich set of optimization methods, including proximal regularization \citep{li2020federated} and variance-reduction techniques \citep{jhunjhunwala2022fedvarp}.
At scale, practical deployments also rely on systems mechanisms such as partial participation, asynchronous aggregation, secure aggregation, and failure tolerance \citep{xie2019asynchronous,bonawitz2019towards,bonawitz2016practical}. 
Overall, most FL work focuses on collaboratively training \emph{a single} global model (or personalized variants) under decentralized data and resource constraints.

LLMs have advanced rapidly in scale and capability, making them a default backbone for many language-centric applications.
However, bringing LLMs into FL is challenging because na\"{\i}vely communicating or fine-tuning billions of parameters is prohibitive for many clients.
As a result, a major thread in ``federated LLM'' research centers on parameter-efficient adaptation (e.g., adapter- or LoRA-style methods) and quantization-aware fine-tuning to reduce memory and communication, alongside benchmarks and frameworks for federated NLP/LLM evaluation \citep{flora,ravan,lin2022fednlp}.
Recent methods study federated instruction tuning under heterogeneous client resources and tasks \citep{fedit,cheng2024towards,chen2024integration}, as well as hybrid paradigms such as federated split learning that offload parts of the model when clients cannot host the full network \citep{zhao2024fedsllm}. 
Notably, these efforts largely assume a single deployed global model, whereas we target the complementary problem of learning \emph{federated routing policies} that select among a pool of LLMs without centralizing client prompts.
This distinction is crucial because high-quality routing typically requires access to private and broad query--model evaluations, which are naturally fragmented across clients and often cannot be uploaded to a central server. 
By leveraging FL, clients can collaboratively learn a router that generalizes across diverse prompt distributions and improves model-coverage across the pool, while keeping prompt data local.
\textit{To our knowledge, this work is the first to study LLM routing in a federated setting}.

\subsection{LLM Query Routing}

Language query routing selects a suitable model for generating response to a query based on relative importance on performance vs cost. This requires capturing query tasks through a convenient representation, usually obtained from sentence encoders \cite{cer2018universalsentenceencoder, paraphrase_albert_reimers-2019-sentence-bert}. Parametric routers estimate suitability scores or correctness probabilities of language models for given queries using trainable neural networks or scoring functions \cite{hu2024routerbench,ding2024hybridllmcostefficientqualityaware,ding2025bestrouteadaptivellmrouting,zhuang2024embedllmlearningcompactrepresentations,sakota_2024_flyswat,huang2025routerevalcomprehensivebenchmarkrouting,lu2023routingexpertefficientrewardguided_zooter,treacle_thrifty_reasoning}. Beyond direct performance and cost prediction, routing is also formulated for estimating relative model quality or human judgments \cite{ong2025routellmlearningroutellms, frick2025prompttoleaderboard, chiang2024chatbotarenaopenplatform}, 
and via auxiliary signals such as task tags or learned difficulty estimators \cite{chen2025tagrouterlearningroutellms, treacle_thrifty_reasoning}.

An alternate direction of router design avoids learning a dedicated routing network and instead makes predictions using the geometry of the query embedding space and empirical performance statistics therein. 
Common versions of such nonparametric routers include partitioning the query space (e.g., clustering) and assigning models based on cluster-level summaries, or selecting models based on local neighborhoods under similarity search \cite{hu2024routerbench, jitkrittum2025universal, stripelis2024tensoroperaroutermultimodelrouter, li2025rethinkingpredictivemodelingllm, zhang2025avengerssimplerecipeuniting, avengers_pro_zhang2025gpt5makingllmscheaper, patel2025proxrouterproximityweightedllmquery}. These designs are often attractive in settings where the model pool evolves, since additional queries or new models can be incorporated by updating clusters or query embedding pool rather than retraining from scratch.

Routing is also closely related to broader multi-model inference strategies that allocate computation adaptively rather than committing to a single model for generating a response. Cascading and escalation frameworks route queries through sequences of models, using intermediate performance signals to decide when to stop or when to route to a stronger model \cite{dohan2022languagemodelcascades, chen2023frugalgptuselargelanguage, dekoninck2025unifiedapproachroutingcascading}. Other strategies exploit multiple parallel candidate generations, such as best-of-$n$ sampling \cite{ding2025bestrouteadaptivellmrouting}, or reduce generation cost via speculative decoding, where a small model proposes tokens and a larger model verifies them \cite{leviathan2023fastinferencetransformersspeculativedecoding}. These techniques are largely orthogonal to routing policies and can be composed with them.

Finally, several works examine routing as an online decision problem under explicit resource constraints. Bandit and budgeted-selection formulations make model choices based on sequential evaluations of model responses, seeking improved utility while respecting cost constraints \cite{li2025llmbanditcostefficientllm, treacle_thrifty_reasoning, poon2025onlinemultillmselectioncontextual_meta_llm}, and related solutions also include POMDP-based mixtures of models \cite{aggarwal2025automixautomaticallymixinglanguage}. Separately, interpretability and structure in routing signals has been demonstrated through model-comparison scores such as Elo \cite{zhao2024eagleefficienttrainingfreerouter}, psychometric inspired formulations such as Item Response Theory \cite{song2025irtroutereffectiveinterpretablemultillm}, and weakly supervised or metric learning based approaches \cite{guha2024smoothie, feng2025graphroutergraphbasedrouterllm, chen2024routerdcquerybasedrouterdual}.

\section{Details on Datasets and Federated Setup}
\label{apndx:dataset_details}

Details on the task, prompting techniques and evaluation scheme for \routerbench{} and \proxr{} are elaborated in \citet{hu2024routerbench} and \citet{patel2025proxrouterproximityweightedllmquery}, respectively.

\subsection{Query Heterogeneity.} To simulate heterogeneity, we split the samples of \routerbench{} and \proxr{} across clients following Dirichlet distribution over subtasks \cite{dirichlet} with $\alpha$ parameters of 0.6 and 0.4, respectively.

We provide visualization of the induced client heterogeneity used in our experiments.
To visualize the query distribution, we compute a single 2D t-SNE embedding \cite{tsne} on the full dataset (using the same query representations for all plots), and then visualize each client's local subset by highlighting the corresponding points in the shared embedding.
\Cref{fig:tsne_clients_grid} reports the global distribution alongside client-level views, illustrating the non-IID partitions induced by the Dirichlet split ($\alpha=0.6$ in \Cref{sec:experiments}). Similarly, analogous t-SNE visualization for \proxr{} (Dirichlet with $\alpha=0.4$) is seen in \Cref{fig:proxr_tsne_clients_grid}.

\begin{figure}[htb]
  \centering
  \includegraphics[width=0.8\linewidth]{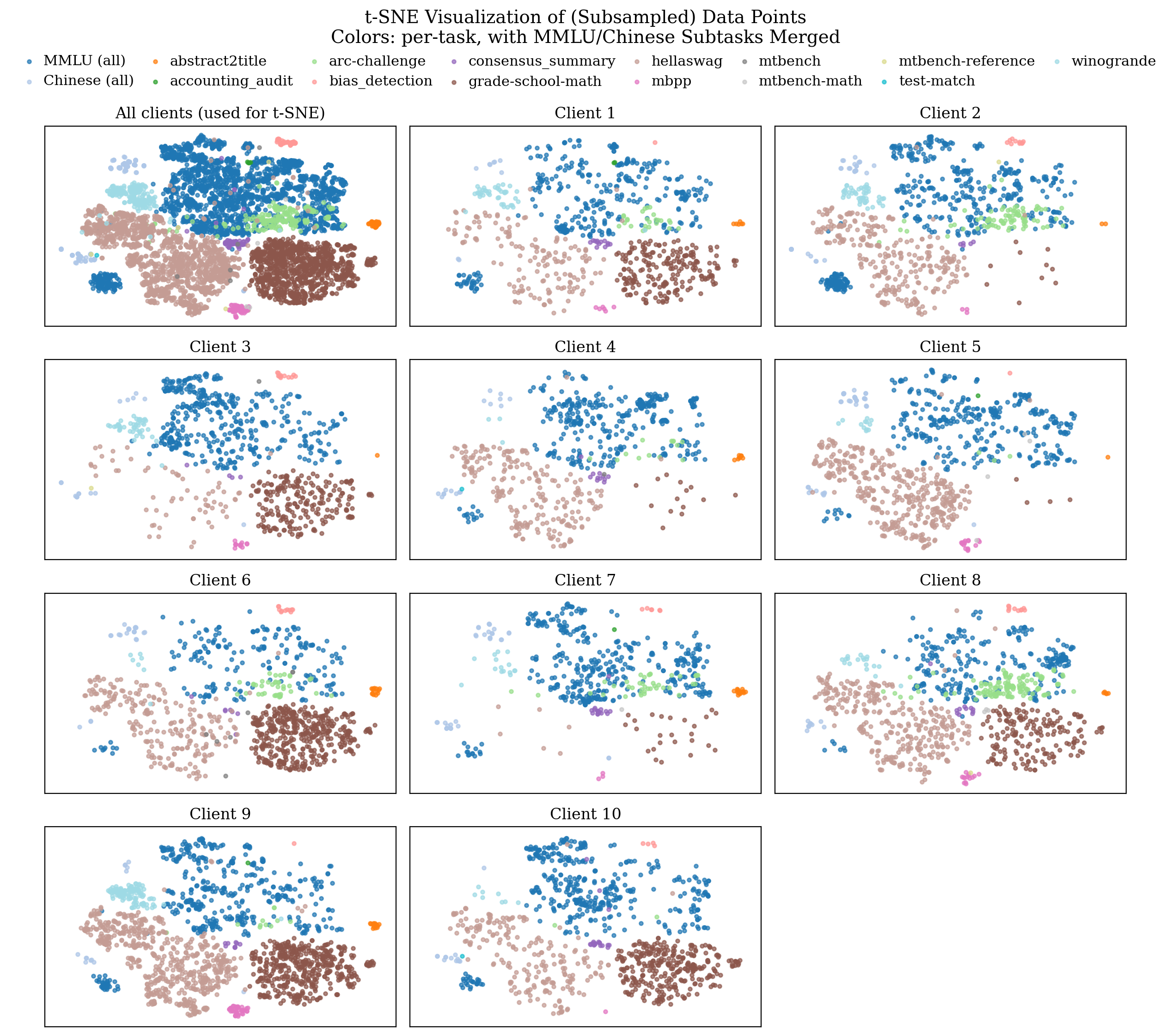}
  \caption{\textbf{t-SNE visualization of query distributions.}
  We run t-SNE once on the full dataset and reuse the same 2D embedding for all panels.
  The grid highlights each client's local subset, illustrating (subsampled) the resulting heterogeneity across clients.}
  \label{fig:tsne_clients_grid}
\end{figure}

\begin{figure}[htb]
  \centering
  \includegraphics[width=0.8\linewidth]{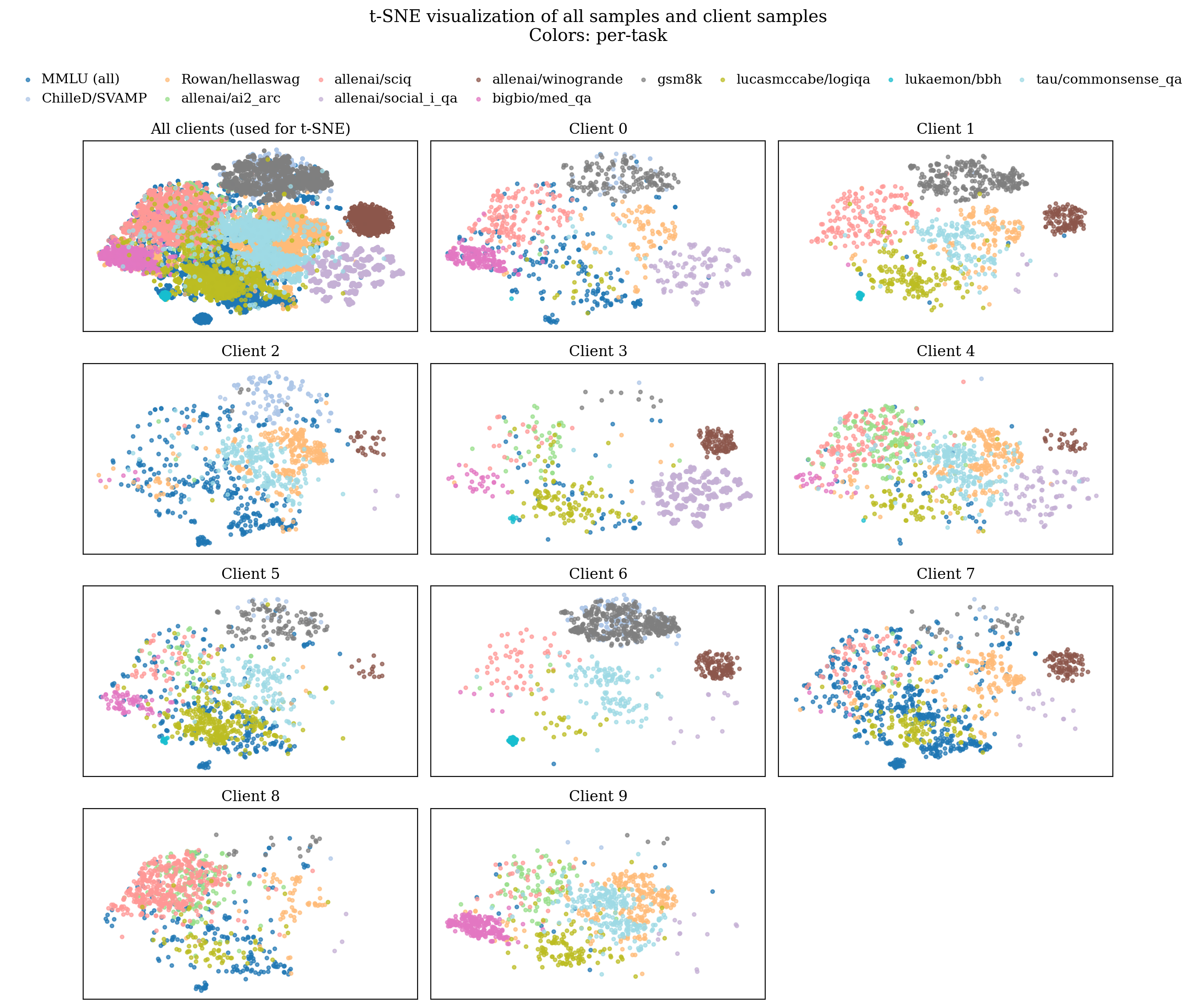}
  \caption{\textbf{t-SNE visualization of query distributions for $\proxr$.}}
  \label{fig:proxr_tsne_clients_grid}
\end{figure}

\subsection{Model Heterogeneity}
\label{sect:appendix_model_heterogeneity}

In our experiments, we assume only a single model evaluation per query is logged in the training sets. To simulate heterogeneous model assignments across clients, for each client $i$, we first sample a client-specific distribution over $M$ models with the Dirichlet distribution with $\alpha = 0.45$ for \routerbench{}.
Then, for each local sample $j \in [|\dcal_i|]$, we assign a model index by sampling with probabilities of the client-specific distribution.
This procedure induces client-dependent, potentially highly non-uniform model proportions within $\dcal_i$. For \routerbench{}, \Cref{fig:model_heterogeneity} shows the bubble plot of the proportions of models in the client datasets. For \proxr{}, we use uniform model logging for variety in experiments.

\begin{figure}[htb]
    \centering
    \includegraphics[width=0.5\linewidth]{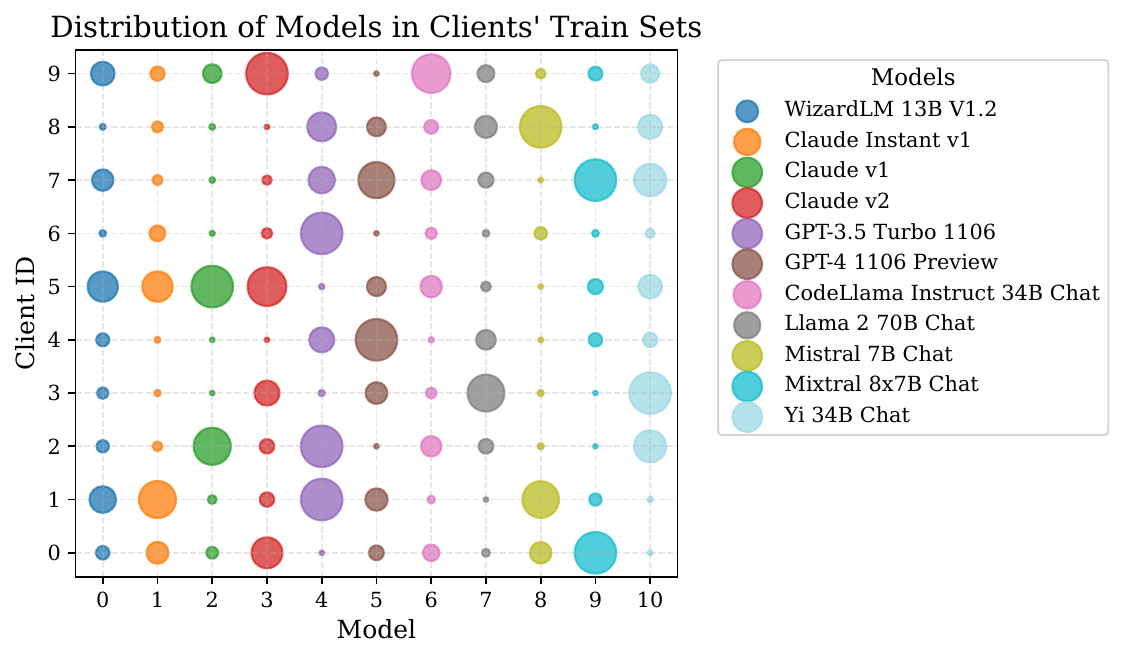}
    \caption{\textbf{Client-specific model heterogeneity for \routerbench{}.} Bubble plot of the per-model proportions in each client local dataset $\dcal_i$ under the Dirichlet assignment scheme ($\alpha=0.45$).}
    \label{fig:model_heterogeneity}
\end{figure}

\section{Details on Experimental Setting}
\label{app:exp_setting_details}

We provide additional details for main text experiments here.

\paragraph{Client simulation and train/test split.}
We simulate $N=10$ clients.
Within each client, we split its allocated data into local train/test with fractions $0.75/0.25$, and the \emph{global} train/test sets are defined as the unions of client train/test splits.

\paragraph{Evaluation protocol.}
For acc--cost tradeoff curves, we sweep the $\lmb$ parameter on a log grid $\lmb \in [10^{-2}, 10^{7}]$ with 100 points and report the resulting curves/AUC.

\subsection{Federated \mlp{}}
\paragraph{Model.}
The router is an MLP with a shared trunk of two hidden layers of widths $(512,512)$, each followed by LayerNorm, GELU, and dropout ($p=0.1$), and with per-model heads that predict (i) an accuracy logit (sigmoid at inference) and (ii) a normalized cost scalar.

\paragraph{Optimization and FL hyperparameters.}
We train with FedAvg. The local optimizer is AdamW with learning rate $\eta=10^{-3}$ and weight decay $3\cdot 10^{-4}$.
Each communication round performs 1 local epoch (over full local train data) per participating client (mini-batch size $128$; gradient clipping with max-norm $1.0$).

\subsection{Federated K-Means Router}
\paragraph{Router and clustering hyperparameters.}
This router is training-free (no learning rate). Each client runs Lloyd’s K-means \citep{lloyd} in embedding space using Euclidean distance.
We use $K_{\text{local}}=15$ clusters per client and $K_{\text{global}}=20$ clusters at the server.
Both local and global K-means use $n_{\text{init}}=3$ random restarts and at most $30$ iterations.
We treat the overall procedure as a single federated clustering stage ($T=1$), consisting of (i) clients $\rightarrow$ server upload of local centroids (with sizes), (ii) server-side clustering into $K_{\text{global}}$ centers, and (iii) clients $\rightarrow$ server aggregation of per-cluster accuracy/cost statistics for routing.

\section{\routerbench{} Experiments not Presented in the Main Text due to the Space Limit}
\label{sect:app_routerb_exp}

In this section, we present the remaining main experimental results using \routerbench{}.

\subsection{Federated Learning vs.\ Centralized Training}
\label{sect:fl_vs_centralized}
We compare federated training against an idealized centralized baseline that has access to the union of all client datasets.
Specifically, the centralized baseline trains the same router family (\mlp{} or \kmeans{}) on the pooled data, while the federated method trains via decentralized client updates under partial participation.
\Cref{fig:global_vs_centralized} shows that federated routers match centralized performance across the accuracy--cost trade-off, demonstrating that federated optimization does not sacrifice routing quality despite operating under decentralization.

\begin{figure}[h]
  \centering
  \includegraphics[width=0.4\linewidth]{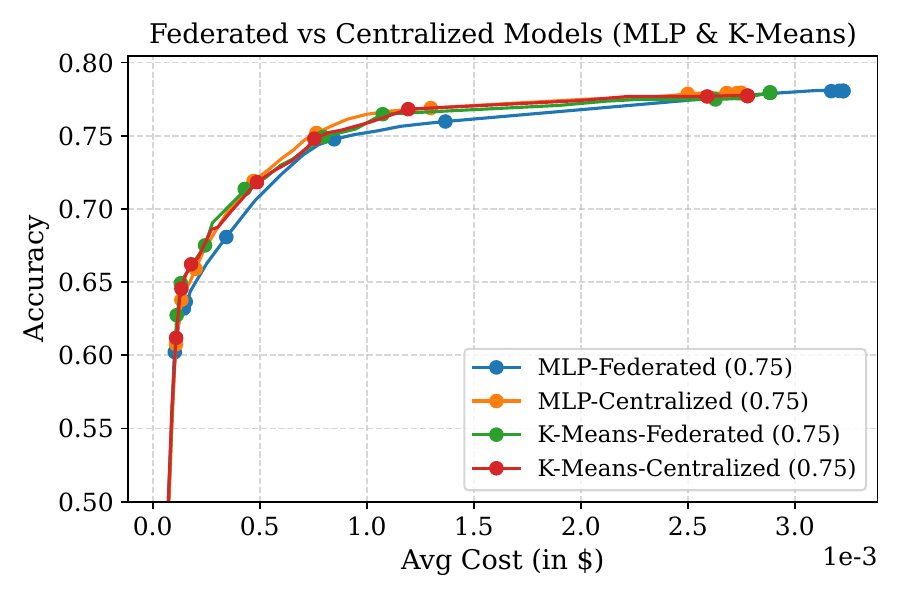}
  \caption{\textbf{Federated vs.\ centralized training.}
  Accuracy--cost curves for \mlp{} and \kmeans{} routers trained either federatively or centrally on pooled data. Numbers in parentheses show AUC scores in the legend.
  Federated learning achieves performance on par with centralized training while preserving the federated data locality constraint.}
  \label{fig:global_vs_centralized}
\end{figure}

\subsection{The Local Test Results for Federated Models vs.\ Locally Trained Models on All Clients}
\label{apndx:all_clients_local_tests}

We report the complete set of local test results across all clients.
For each client, we compare the federated router trained collaboratively across clients against a client-local (no-FL) router trained using that client's data, where evaluation is performed on the same client's local test distribution.
\Cref{fig:all_clients_local_test_mlp,fig:all_clients_local_test_kmeans} show the full grids for \mlp{} and \kmeans{}, respectively, confirming the consistent improvements discussed in the main text.

\begin{figure}[h]
  \centering
  \includegraphics[width=0.74\linewidth]{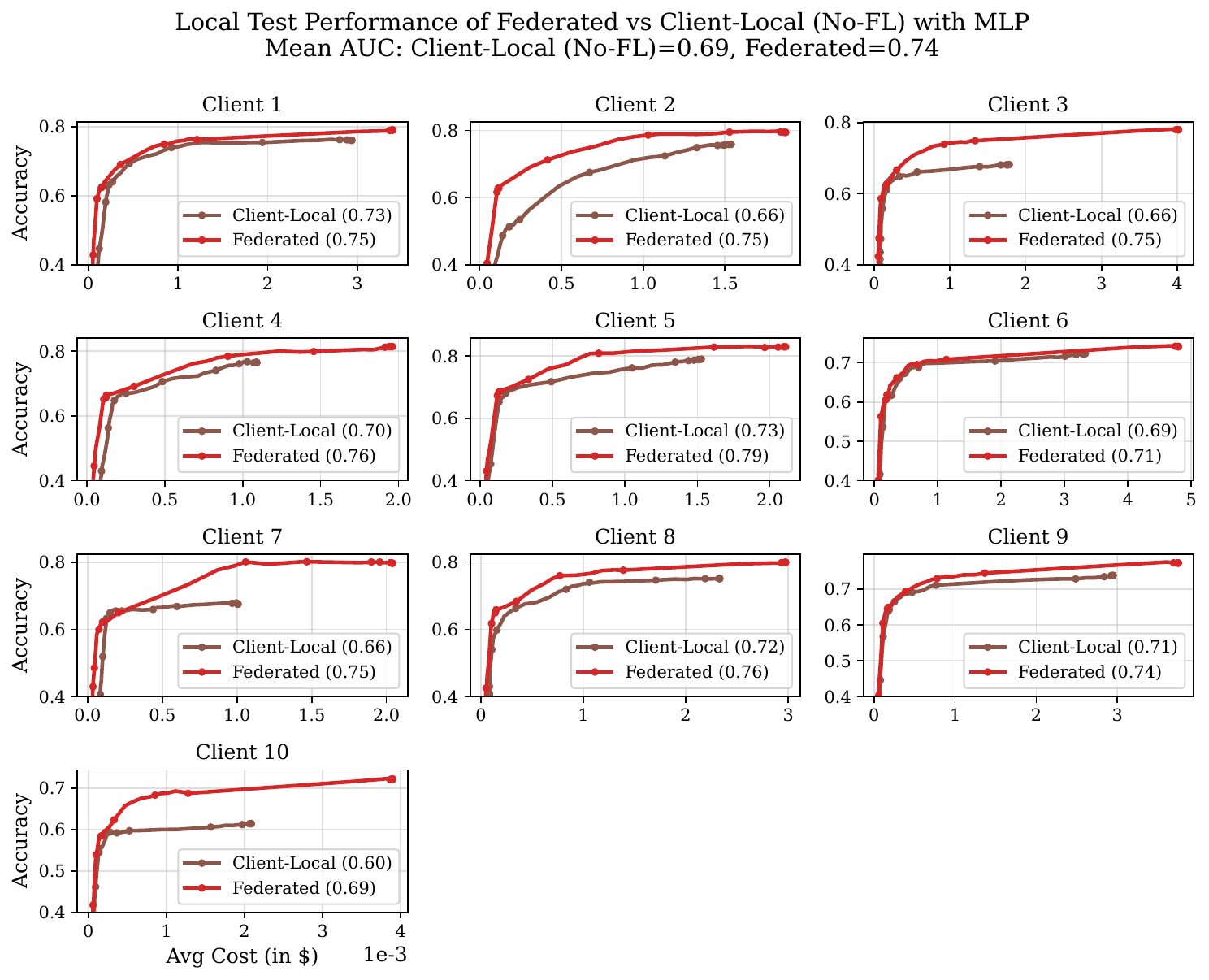}
  \caption{\textbf{All-clients local test results (\mlp{}).}
  Each panel corresponds to one client and reports accuracy--cost trade-offs for the federated router versus the client-local (no-FL) router, evaluated on that client's local test set.}
  \label{fig:all_clients_local_test_mlp}
\end{figure}

\begin{figure}[h]
  \centering
  \includegraphics[width=0.74\linewidth]{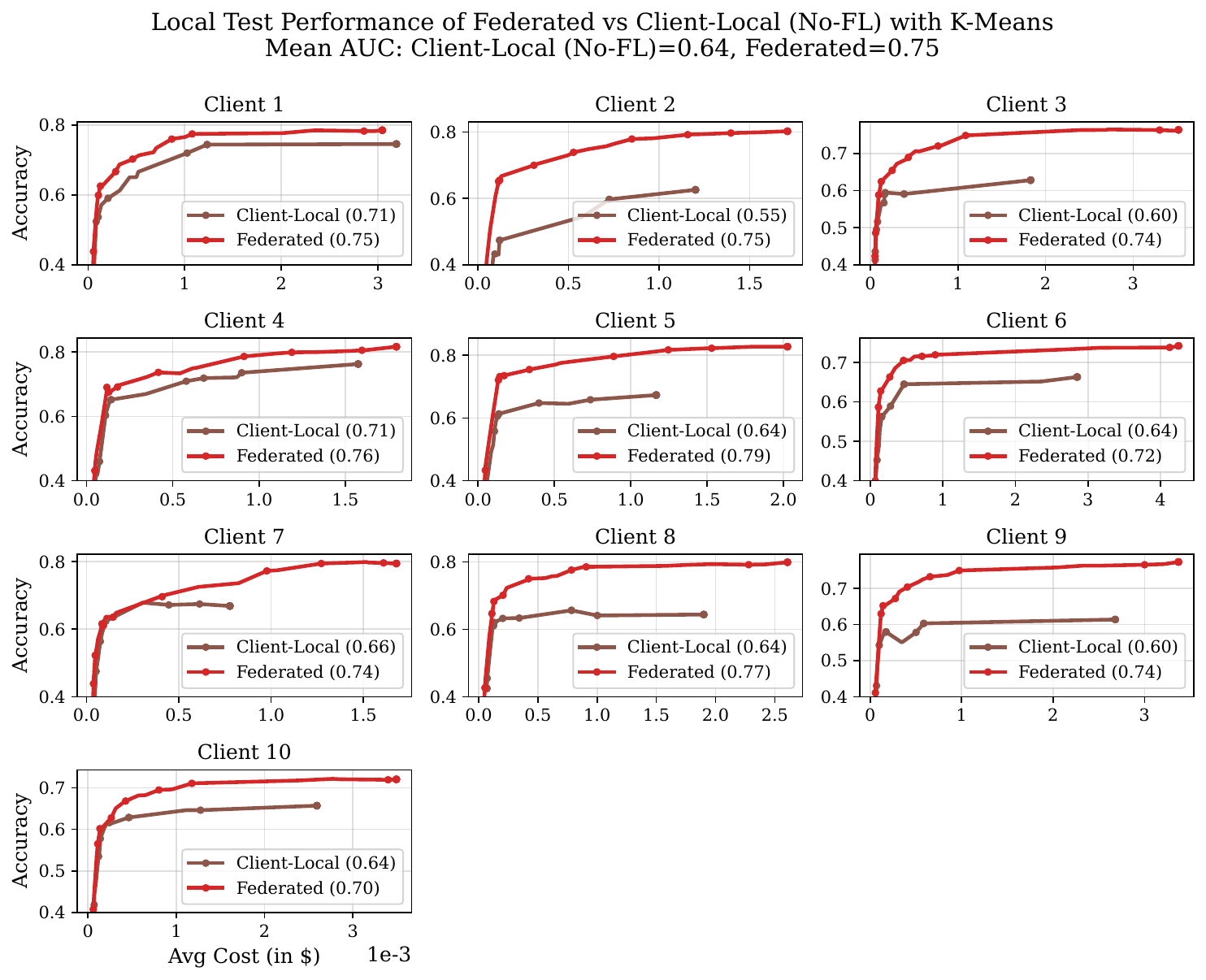}
  \caption{\textbf{All-clients local test results (\kmeans{}).}
  Each panel corresponds to one client and reports accuracy--cost trade-offs for the federated router versus the client-local (no-FL) router, evaluated on that client's local test set.}
  \label{fig:all_clients_local_test_kmeans}
\end{figure}

\subsection{When New Clients Join the System}
\label{apndx:new_clients_join}
In practice, new clients may join after the router has already been trained.
An effective system should (i) improve using the new clients' data, (ii) preserve performance on previously-seen clients (avoid \emph{catastrophic forgetting}), and (iii) require no additional training/communication from the existing clients.
We study this setting by initially training the router with 7 clients and then introducing 3 new clients whose data account for 30\% of the unique task labels in the system.
For \mlp{}, we continue training of existing router using only newly joined clients, while adding a distillation-style regularizer that penalizes deviation from the original router's outputs to preserve the initial routing policy. 
Let $\mlpw^{(0)}$ denote the router parameters before the new clients join (kept frozen during adaptation). We regularize by distilling the base router's model-wise predictions on the new clients' prompts with regularization loss, 
\(
\E_{\xb \sim \dcal_{\mathrm{new}}}\!\left[\frac{1}{|\mcal|}\sum_{m\in\mcal}
\big(\aes_{\mlpw}(\xb,m)-\aes_{\mlpw^{(0)}}(\xb,m)\big)^2
+\big(\ces_{\mlpw}(\xb,m)-\ces_{\mlpw^{(0)}}(\xb,m)\big)^2\right].
\)
 \kmeans{} adaptation is training-free by weighted updating the accuracy and cost statistics of new clients in the server.
\Cref{fig:new_client_in_the_system} reports the global accuracy--cost trade-off before and after new clients join.
Both methods improve after incorporating the new clients, and \kmeans{} achieves this with minimal additional machinery.

\begin{figure}[H]
  \centering
  \includegraphics[width=0.7\linewidth]{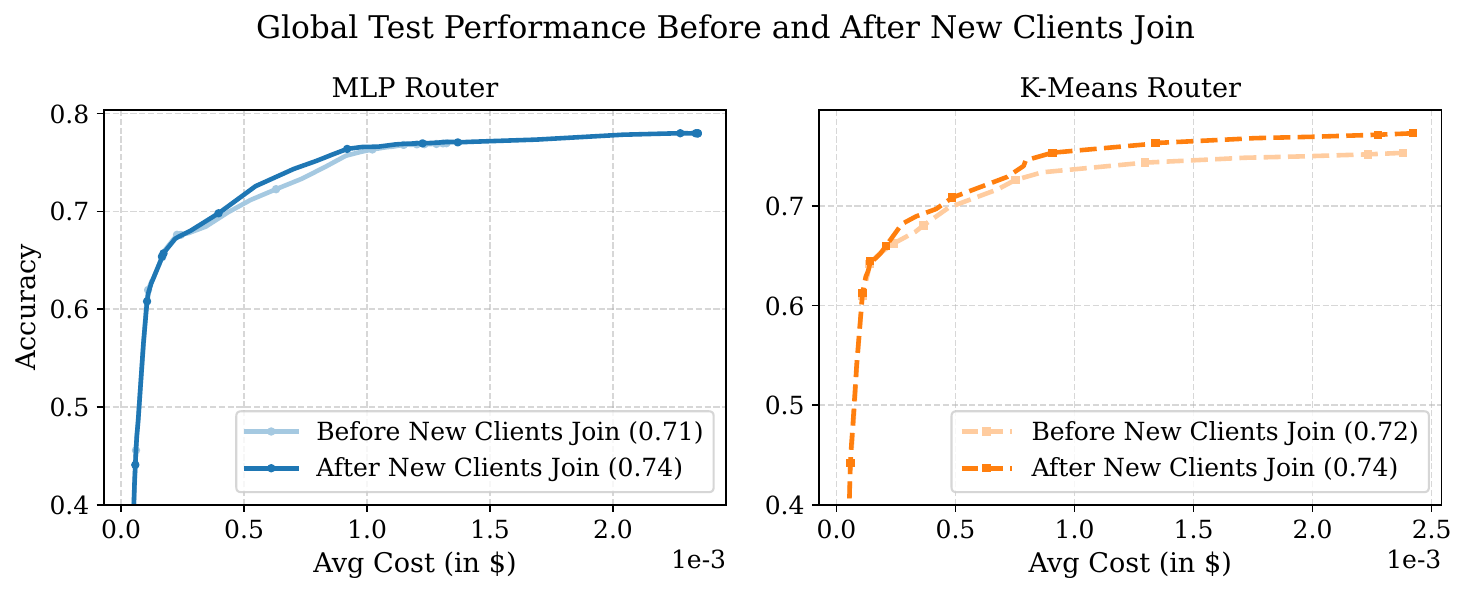}
  \caption{\textbf{Adapting to new clients.}
  Routers are trained initially by 7 clients, then adapted by 3 new clients, without additional participation from the original clients.
  \mlp{} uses continued training with a distillation regularizer to retain the base policy; \kmeans{} updates cluster-level statistics via weighted averaging.
  We report accuracy--cost frontiers on the global test set.}
\label{fig:new_client_in_the_system}
\end{figure}

\subsection{Adaptive Personalization Under High Heterogeneity}
\label{apndx:adaptive_personalization}

\Cref{fig:adaptive_personalization_mlp_grid,fig:adaptive_personalization_kmeans_grid} provide the same accuracy--cost frontier comparisons as in \Cref{fig:adaptive_personalization_main}, but for all clients.
Overall, the per-client plots confirm the main-text trend: under extreme heterogeneity, federated \mlp{} can be suboptimal for a subset of clients, while adaptive personalization reliably recovers and sometimes improves upon the best achievable tradeoff between federated and local routing.
For \kmeans{}, federated routing remains competitive across clients, whereas isolated local routing can degrade sharply due to insufficient per-centroid model-coverage.

\begin{figure}[htb]
  \centering
  \includegraphics[width=0.8\linewidth]{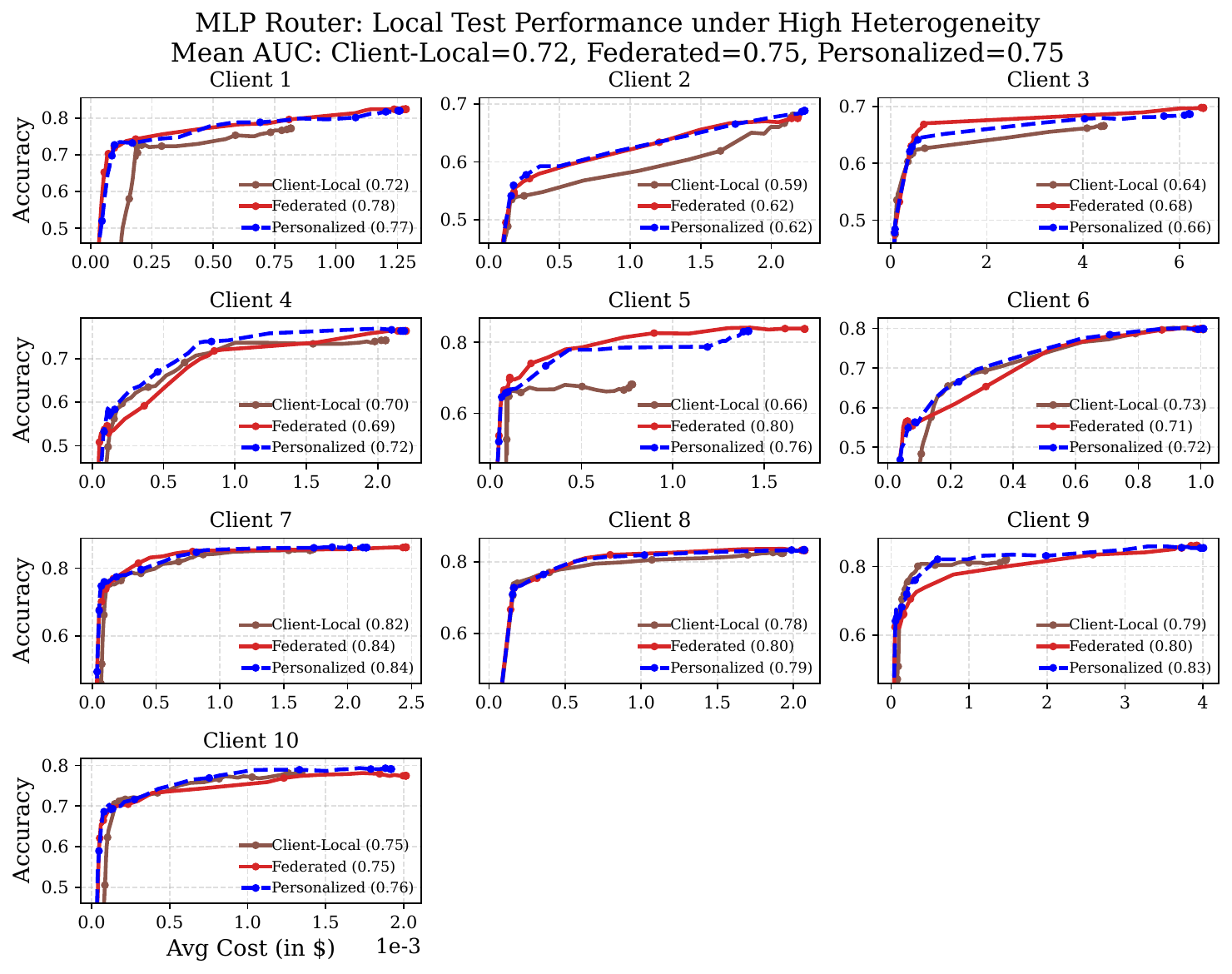}
  \vspace{-1em}
  \caption{\textbf{Per-client accuracy--cost frontiers under extreme heterogeneity ($\alpha=0.03$) for \mlp{}.}}
  \label{fig:adaptive_personalization_mlp_grid}
\end{figure}

\begin{figure}[htb]
  \centering
  \includegraphics[width=0.8\linewidth]{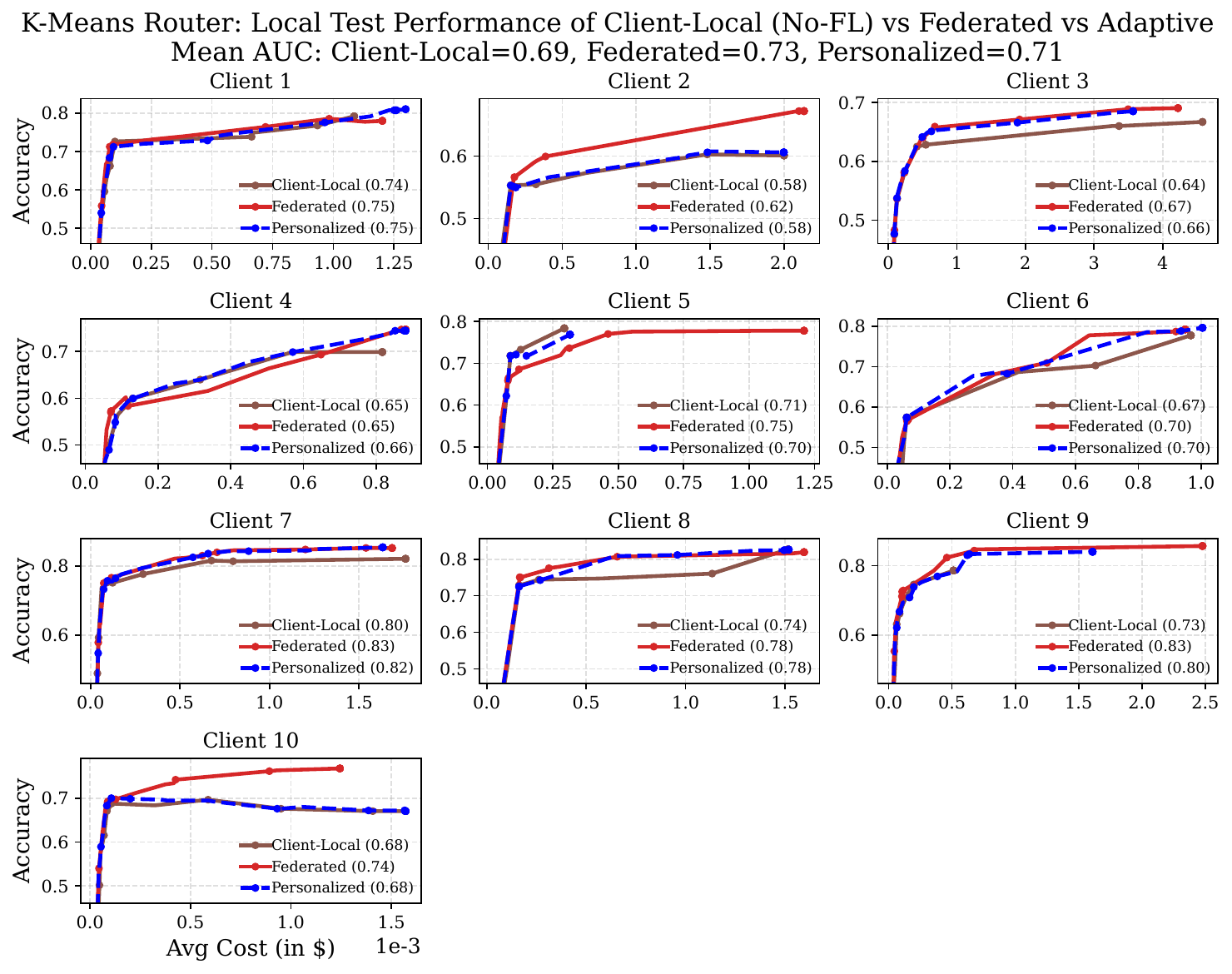}
  \vspace{-1em}
  \caption{\textbf{Per-client accuracy--cost frontiers under extreme heterogeneity ($\alpha=0.03$) for \kmeans{}.}}
  \label{fig:adaptive_personalization_kmeans_grid}
\end{figure}

\FloatBarrier

\section{Experimental Results with Different Sentence Encoder Models}
\label{apndx:different_enc_results}
We present ablations on centralized router performance on both \routerbench{} and \proxr{} by varying the choice of sentence encoders used for representing queries. In the main text, we utilize \text{all\_mpnet\_base\_v2} \cite{song2020mpnetmaskedpermutedpretraining} sentence encoder ($768$ dimensional query representation). We additionally esperiment with \text{all\_minilm\_l6\_v2} \cite{wang2020minilmdeepselfattentiondistillation} ($384$ dimensional query representation) and \text{paraphrase\_albert\_small\_v2} \cite{paraphrase_albert_reimers-2019-sentence-bert} ($768$ dimensional query representation). We observe that routing performance is relatively constant across sentence encoders, highlighting that presented approach is generalizable to a variety of sentence encoders.

\begin{table}[h]
\centering
\caption{Performance (through normalized AUC) of \kmeans{} and \mlp{} in centralized training regime across different choices of sentence encoders used for representing queries, for both \routerbench{} and \proxr{}.}
\label{tab:encoder_performance}
\begin{tabular}{l cc cc}
\toprule
\multirow{2}{*}{\textbf{Sentence Encoder (Emb. Dim.)}} & \multicolumn{2}{c}{\textbf{\routerbench{}}} & \multicolumn{2}{c}{\textbf{\proxr{}}} \\
\cmidrule(lr){2-3} \cmidrule(lr){4-5}
 & \textbf{\kmeans{}} & \textbf{\mlp{}} & \textbf{\kmeans{}} & \textbf{\mlp{}} \\
\midrule
\text{all\_mpnet\_base\_v2} (768) & 0.747 & 0.759 & 0.759 & 0.760 \\
\text{all\_minilm\_l6\_v2} (384) & 0.737 & 0.761 & 0.728 & 0.751 \\
\text{paraphrase\_albert\_small\_v2} (768) & 0.746 & 0.758 & 0.751 & 0.755 \\
\bottomrule
\end{tabular}
\end{table}

\section{Experimental Results with \proxr{}}
\label{apndx:results_with_proxr}

We conduct all $\routerbench$ experiments from the main text (and the relevant experimental appendices) on $\proxr$ . In this section, we present the results and conclusions for \proxr{} experiments.

\paragraph{Federated vs.\ client-local routers on the global test distribution.}
In \Cref{fig:proxr_global_vs_local_only_global_test}, we repeat the comparison in \Cref{fig:global_vs_local_only_global_test} using $\proxr$.
As in $\routerbench$, federated training improves global test distribution generalization for both \mlp{} and \kmeans{} in \proxr{}.

\begin{figure}[h]
  \centering
  \includegraphics[width=0.46\linewidth]{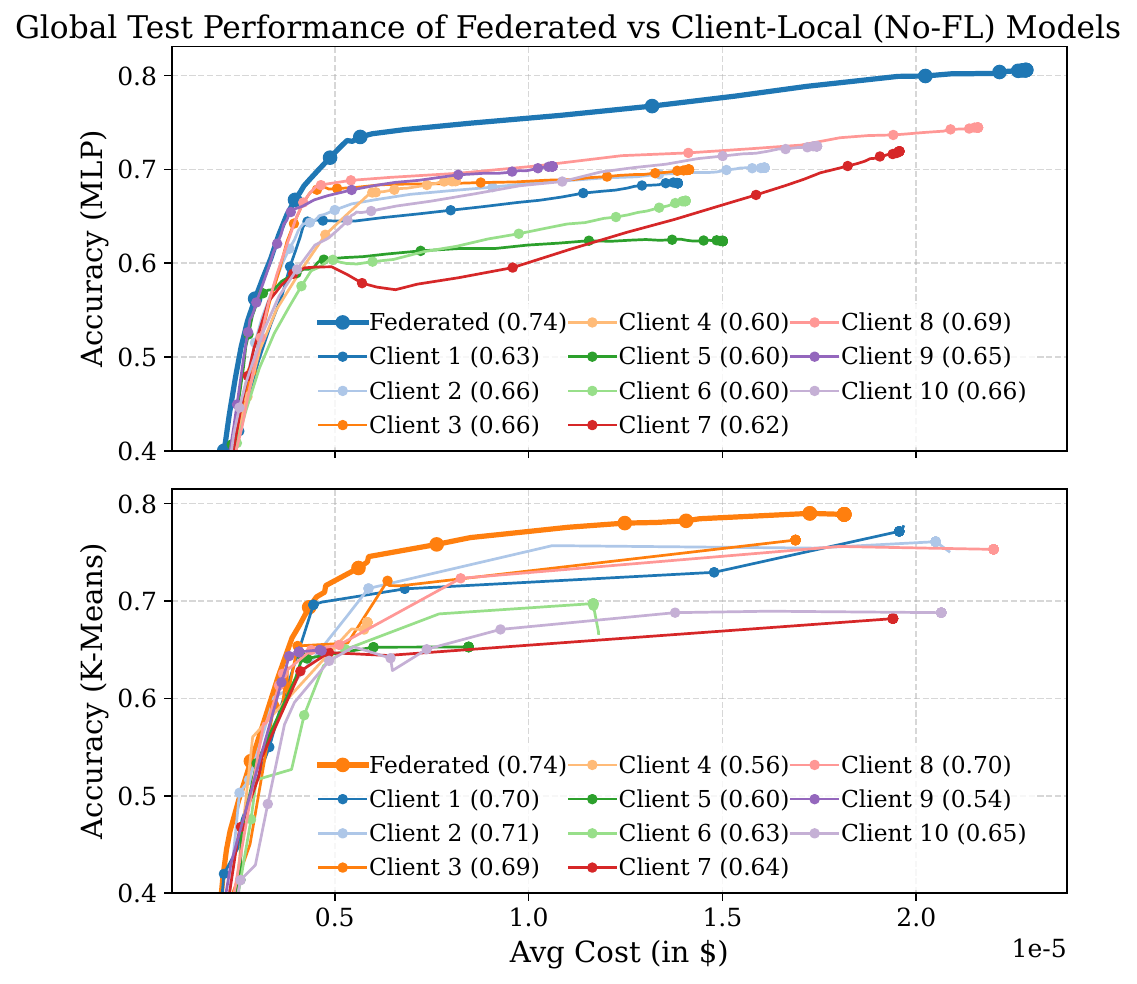}
  \caption{\textbf{($\proxr$) Federated vs.\ client-local (no-FL) routers on the global test distribution.}
  On \proxr{}, we evaluate federated model and locally trained models on the global test set as in \Cref{fig:global_vs_local_only_global_test}.}
  \label{fig:proxr_global_vs_local_only_global_test}
\end{figure}

\paragraph{Federated learning improves in-distribution local performance via better model coverage.}
We show the local-test set evaluation results of the federated model and locally trained models in \Cref{fig:proxr_local_vs_global_local_test_selected_clients} on $\proxr$, and again observe consistent improvements from federated collaboration. Also, in \Cref{fig:proxr_all_clients_local_test_mlp} and \Cref{fig:proxr_all_clients_local_test_kmeans}, we provide the results for all clients. 

\begin{figure}[h]
  \centering
  \includegraphics[width=0.65\linewidth]{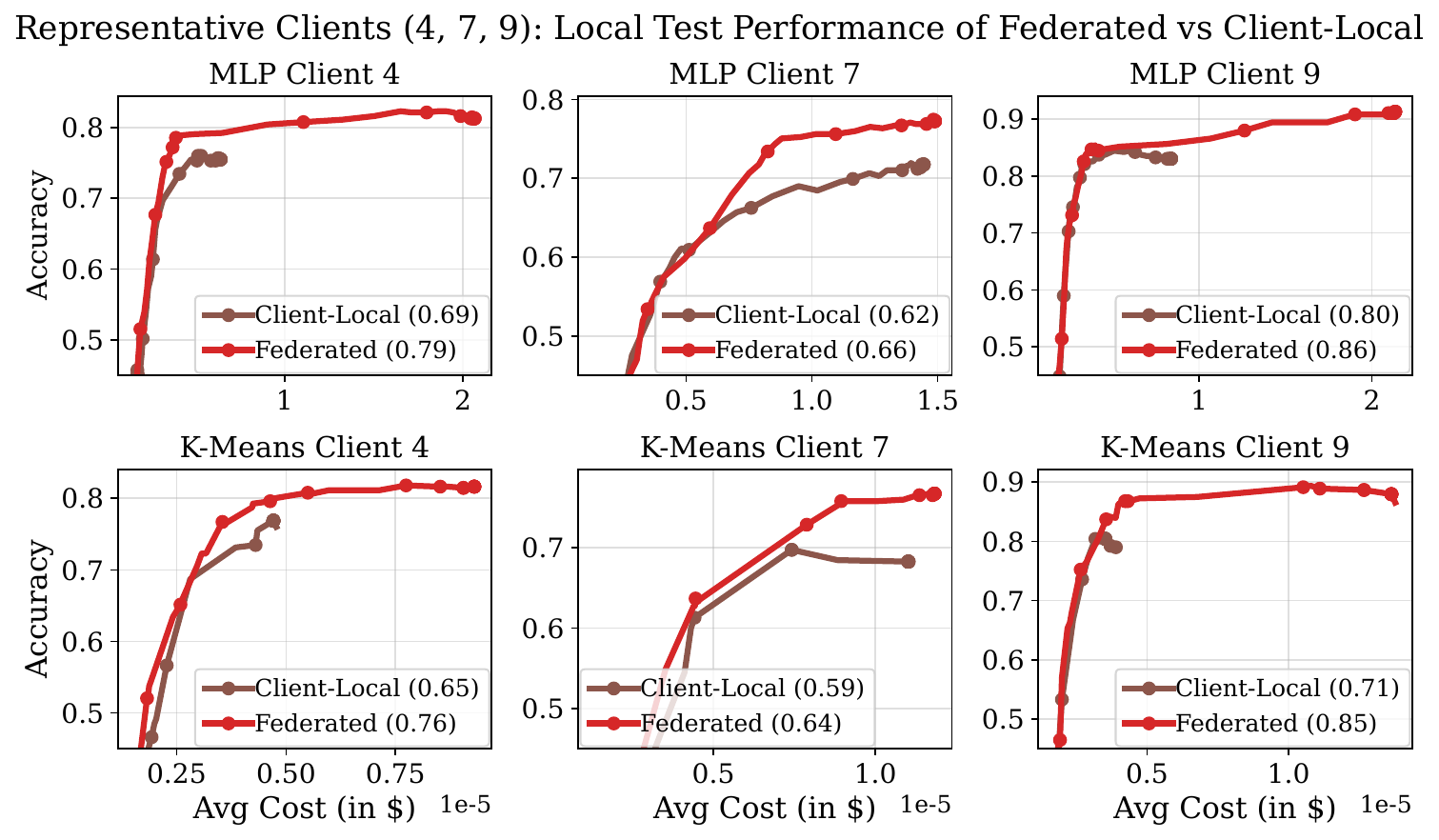}
  \vspace{-0.5em}
  \caption{\textbf{($\proxr$ - Subsampled Clients) Local test set results of subsampled clients.}
  We evaluate the federated and local models on \proxr{}, as in \Cref{fig:local_vs_global_local_test_selected_clients}.}
  \label{fig:proxr_local_vs_global_local_test_selected_clients}
\end{figure}

\begin{figure}
  \centering
  \includegraphics[width=0.7\linewidth]{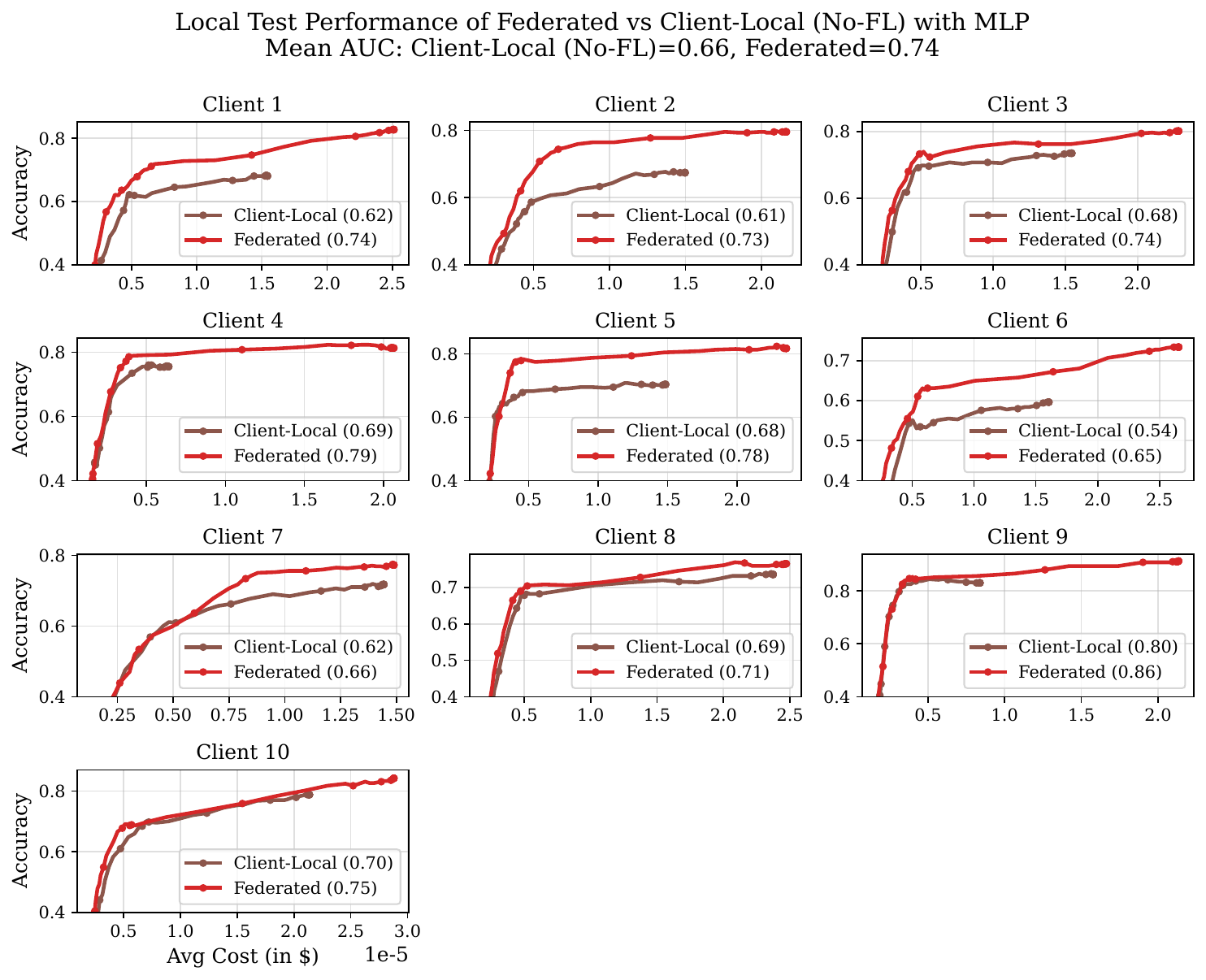}
  \vspace{-0.1em}
  \caption{\textbf{($\proxr$) Local test set results of all clients with \mlp{}.}
  We evaluate the federated and local models on \proxr{}, as \Cref{fig:all_clients_local_test_mlp}.}
  \label{fig:proxr_all_clients_local_test_mlp}
\end{figure}

\begin{figure}[H]
  \centering
  \includegraphics[width=0.7\linewidth]{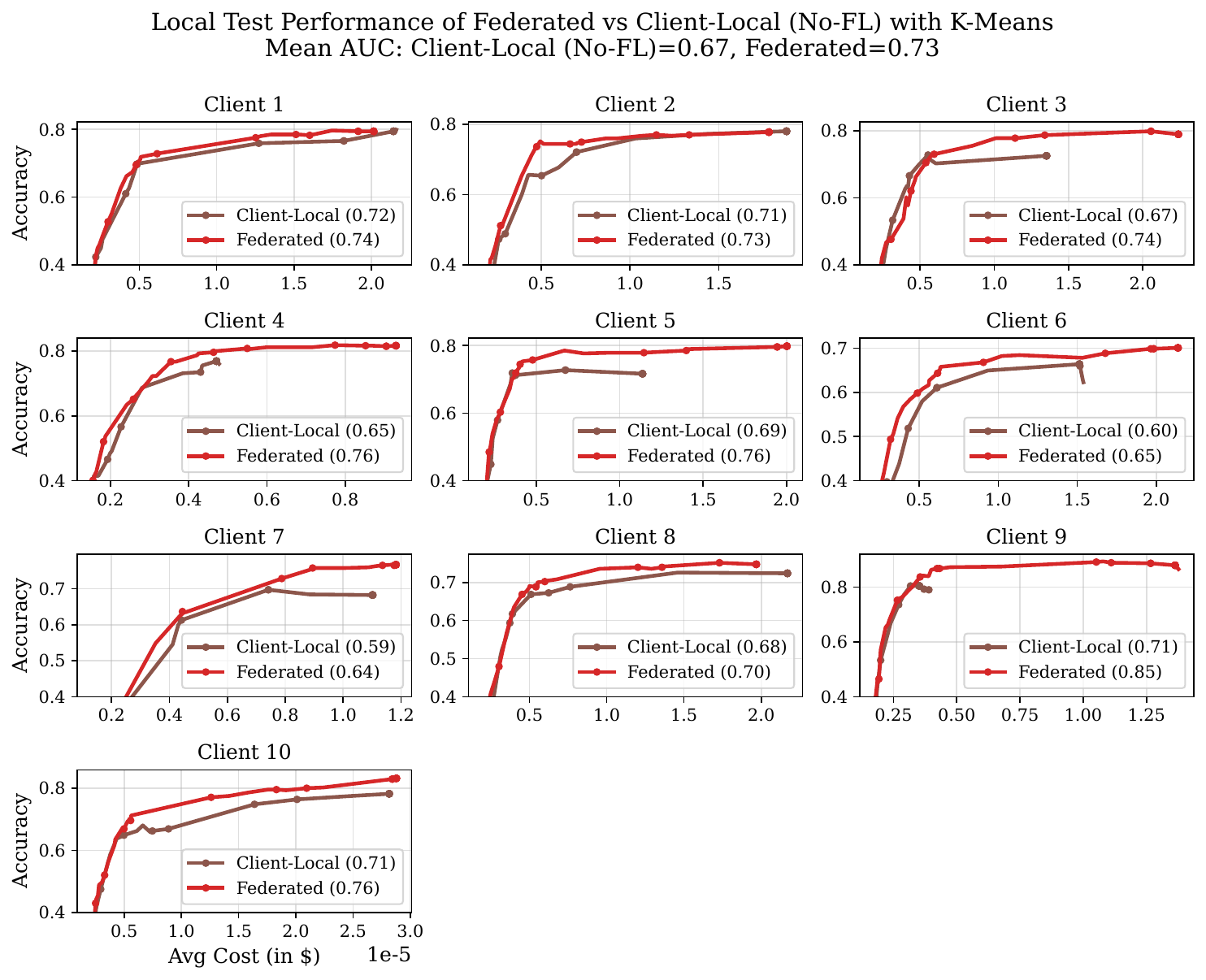}
  \vspace{-0.1em}
  \caption{\textbf{($\proxr$) Local test set results of all clients with \kmeans{}.}
  We evaluate the federated and local models on \proxr{}, as \Cref{fig:all_clients_local_test_kmeans}.}
  \label{fig:proxr_all_clients_local_test_kmeans}
\end{figure}

\paragraph{Federated routers vs.\ centralized training.}
On $\proxr$, we repeat the comparison of federated routers and centralized baseline and show the results in \Cref{fig:proxr_global_vs_centralized}.

\begin{figure}[H]
  \centering
  \includegraphics[width=0.4\linewidth]{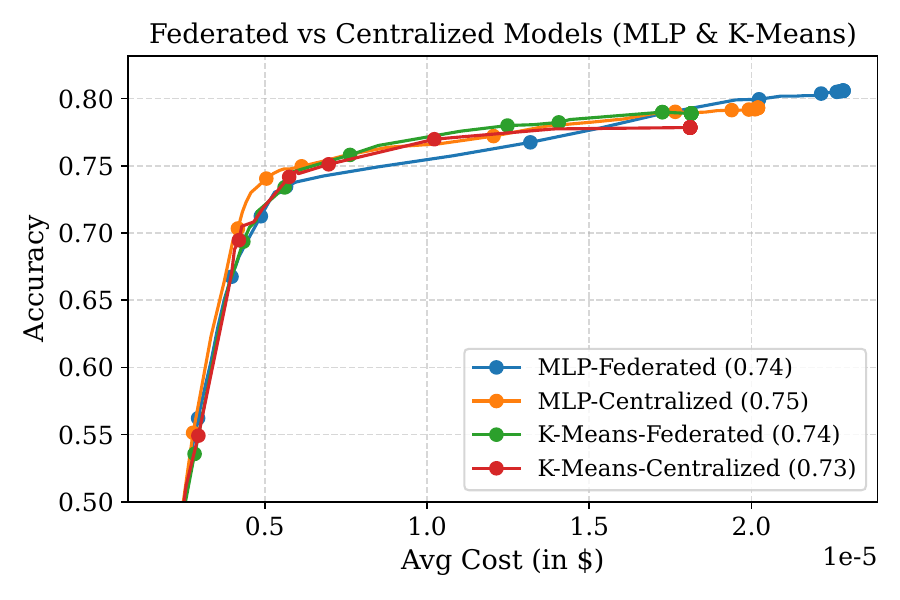}
  \vspace{-0.5em}
  \caption{\textbf{($\proxr$) Federated vs.\ centralized training.}
}
  \label{fig:proxr_global_vs_centralized}
\end{figure}

\paragraph{Model expansion (onboarding new models).}
We repeat the model-expansion experiment from \Cref{sect:new_models_join} with \proxr{} with two models are withheld during initial training and introduced later. We plot the results in \Cref{fig:proxr_model_expansion}. 

\begin{figure}[h]
  \centering
  \includegraphics[width=0.45\linewidth]{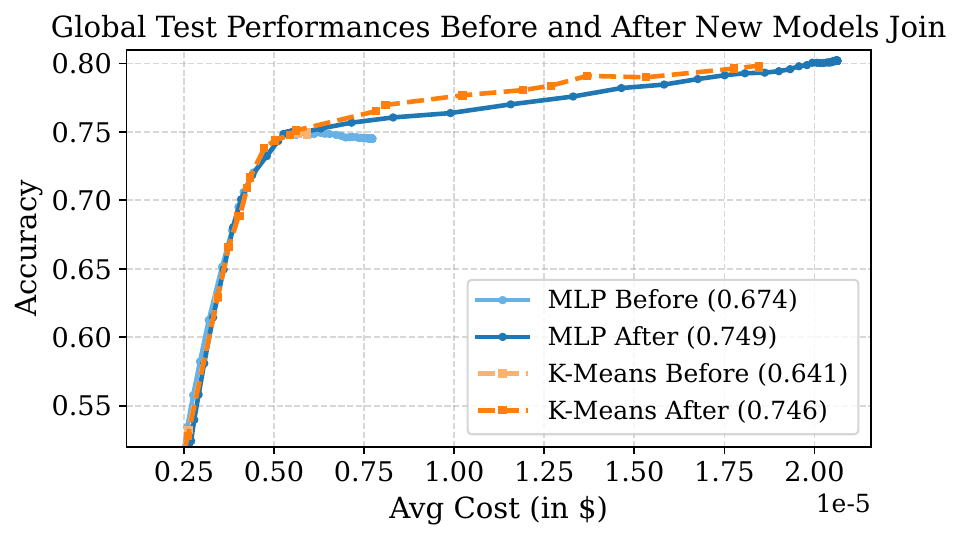}
  \caption{\textbf{($\proxr$) Adapting to newly joined models.}
  We observe that our framework allows successful integration of the new models introduced in the system.}
  \label{fig:proxr_model_expansion}
\end{figure}

\paragraph{Adapting to newly joined clients.}
We repeat the client-expansion experiment from \Cref{apndx:new_clients_join} with \proxr{}. We study this setting by initially training the router with 7 clients and then introducing 3 new clients whose data accounts for 65\% of the unique task labels in the system. We plot the results in \Cref{fig:proxr_new_client_in_the_system}.

\begin{figure}[h]
  \centering
  \includegraphics[width=0.7\linewidth]{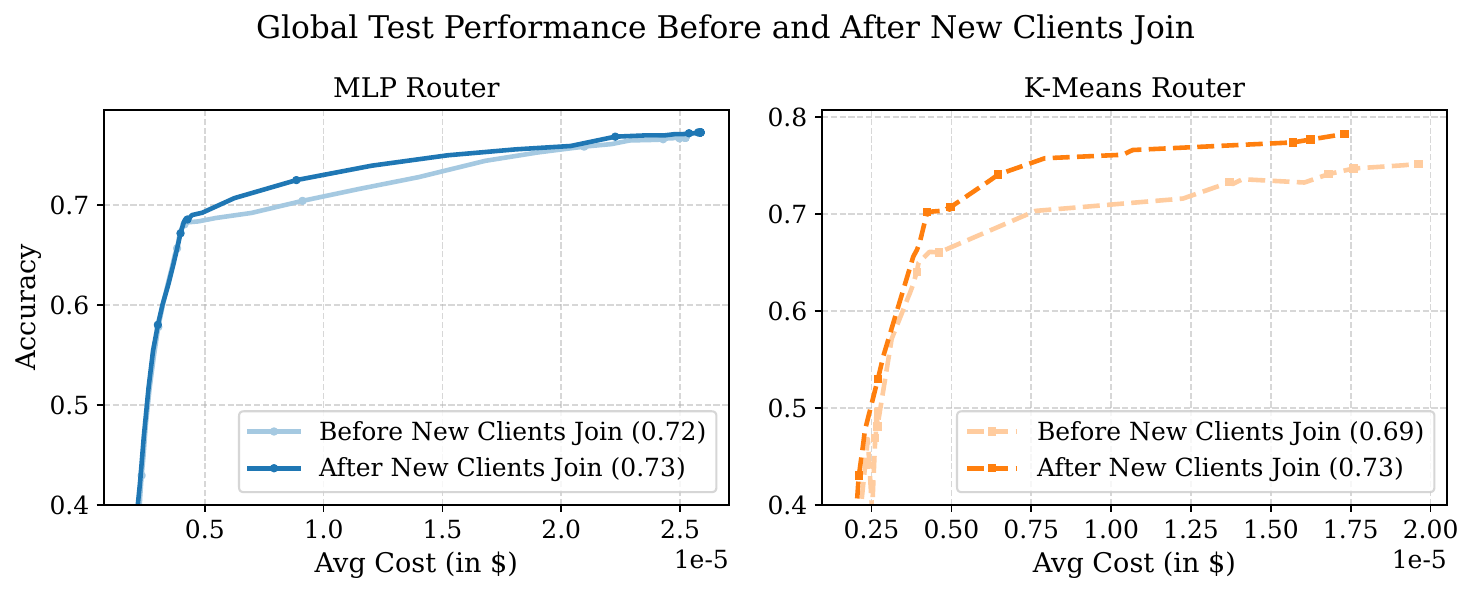}
  \caption{\textbf{($\proxr$) Adapting to newly joined clients.}
  We observe that our framework allows successful integration of the new clients joining  in the system.}
  \label{fig:proxr_new_client_in_the_system}
\end{figure}

\paragraph{Adaptive personalization under high heterogeneity.}
Using the \proxr{}, we repeat the high-heterogeneity experiment from the main text
(\Cref{sec:adaptive_personalization}), where client query distributions are highly non-iid (Dirichlet distribution with $\alpha=0.04$).
In \Cref{fig:adaptive_personalization_proxr_selected_clients}, we report local-test performance for a set of
representative clients, comparing (i) client-local routers, (ii) federated routers, and (iii) our adaptive
personalization. We further provide the full per-client grids in
\Cref{fig:adaptive_personalization_proxr_all_clients_mlp,fig:adaptive_personalization_proxr_all_clients_kmeans}
for \mlp{} and \kmeans{}, respectively.
On \proxr{}, the federated router's local-test performance varies more sharply across clients than the variation in \routerbench{}. We observe that adaptive personalization matches and sometimes improves upon the better of
the federated and client-local routers on each client.

\begin{figure}[h]
  \centering
  \includegraphics[width=0.64\linewidth]{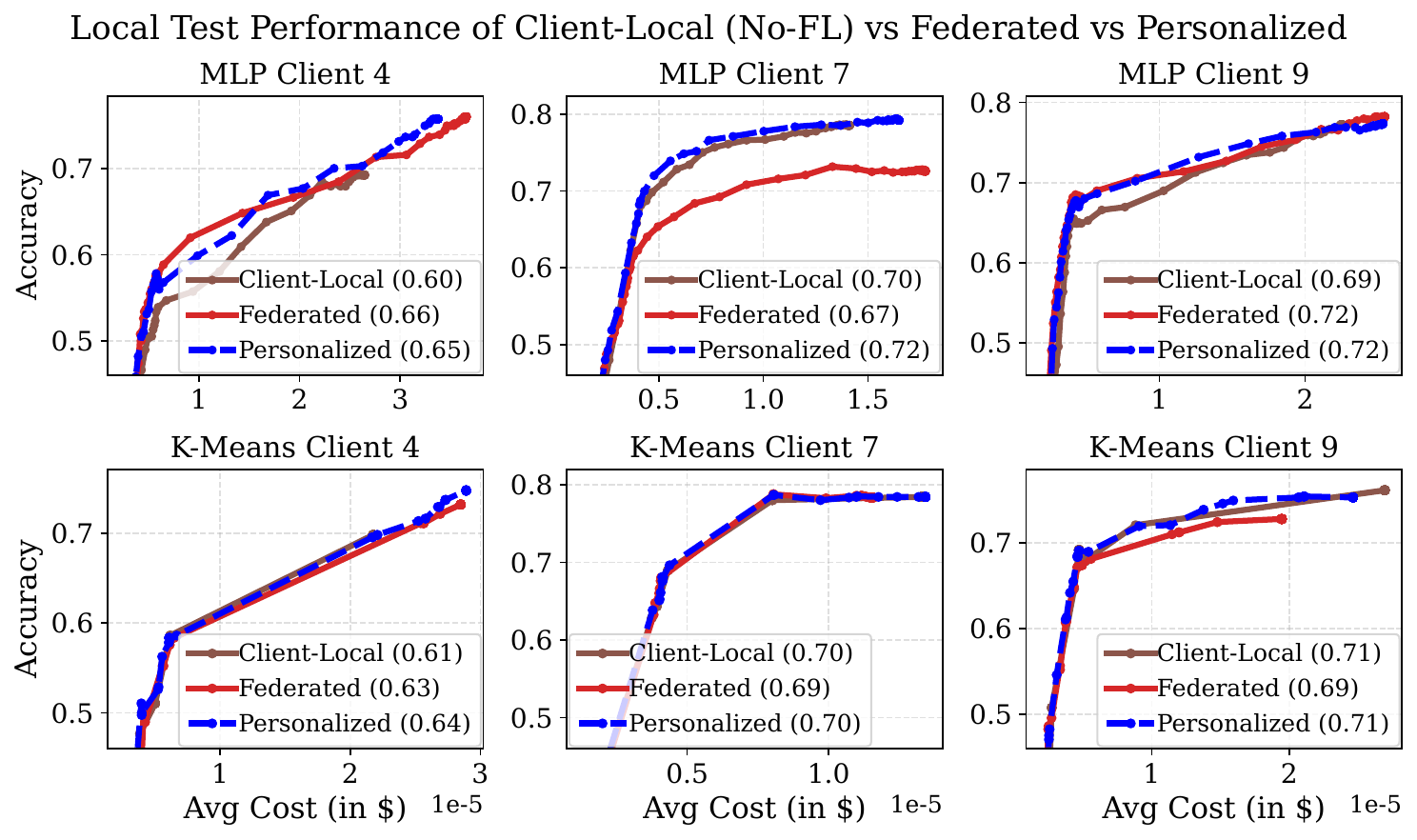}
  \caption{\textbf{(\proxr{}) Adaptive personalization under high heterogeneity for representative clients with both \mlp{} and \kmeans{}.}
  Local test set performance of client-local, federated, and adaptive routers.}
  \label{fig:adaptive_personalization_proxr_selected_clients}
\end{figure}

\begin{figure}
  \centering
  \includegraphics[width=0.74\linewidth]{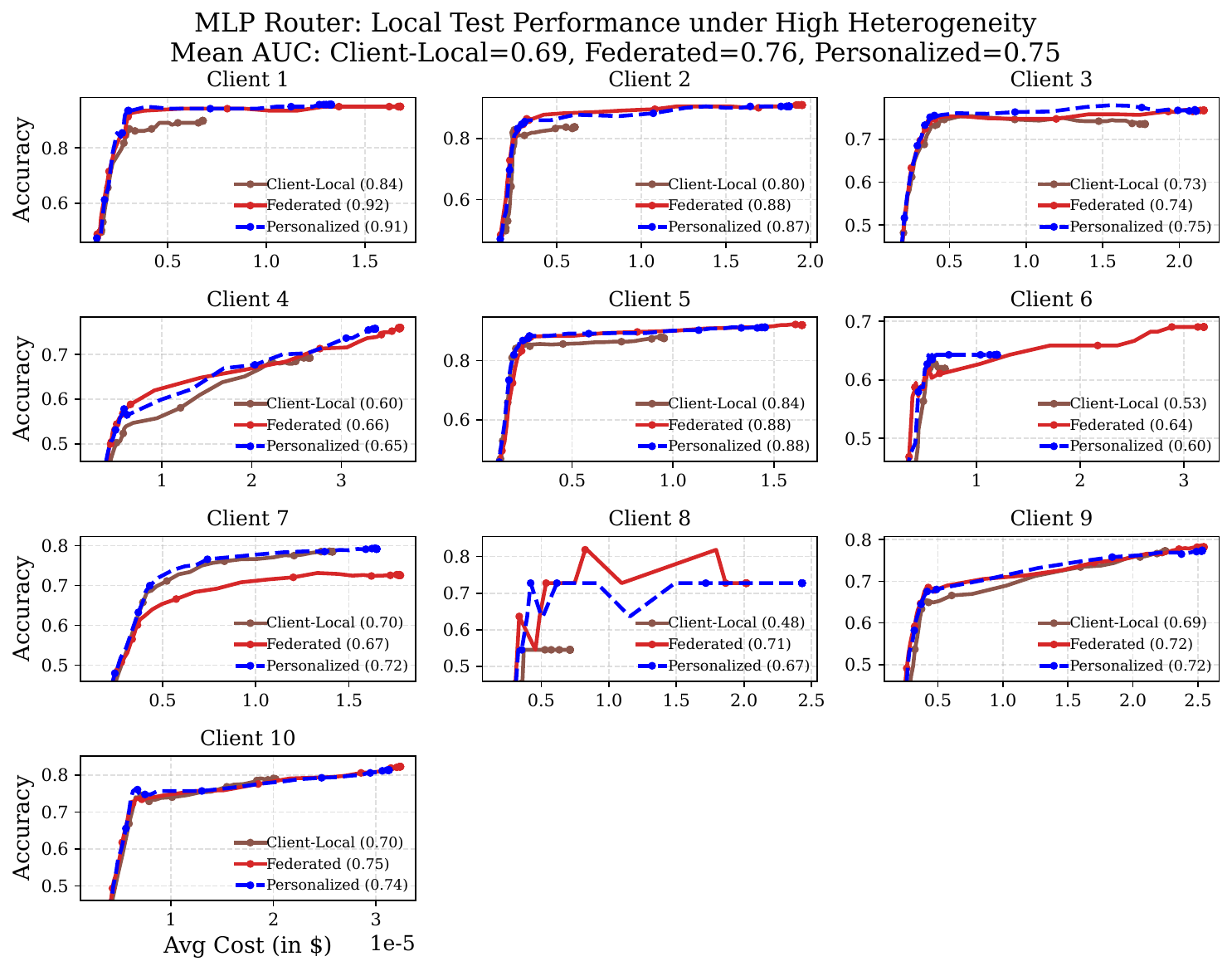}
  \caption{\textbf{(\proxr{}) Adaptive personalization under high heterogeneity for all clients with \mlp{}.}
  Local test set performance of client-local (no-FL), federated, and personalized routers.}
  \label{fig:adaptive_personalization_proxr_all_clients_mlp}
\end{figure}

\begin{figure}
  \centering
  \includegraphics[width=0.74\linewidth]{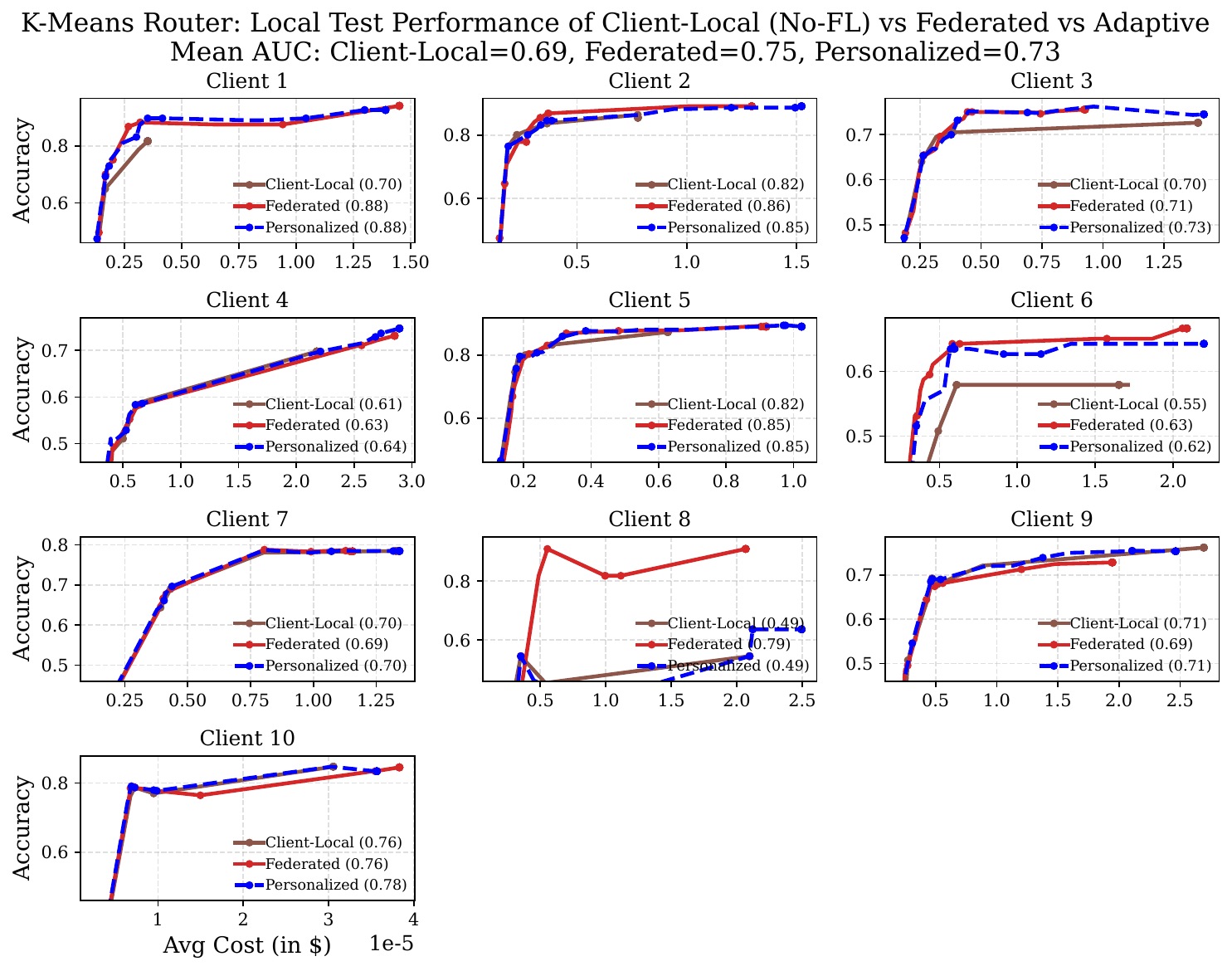}
  \caption{\textbf{(\proxr{}) Adaptive personalization under high heterogeneity for all clients with \kmeans{}.}
  Local test set performance of client-local (no-FL), federated, and personalized routers.}
  \label{fig:adaptive_personalization_proxr_all_clients_kmeans}
\end{figure}

\FloatBarrier
\input{theorynew}

\end{document}

%% file: theorynew.tex
\section{Theory}
\label{sec:theoryapp}
In this section, we present the proofs and explanations for the theoretical claims in \Cref{sect:theory}. In \Cref{sec:theory_mlp}, we present the theoretical results for $\mlp$, and in
\Cref{sec:theory_kmeans}, we present the results for $\kmeans$.

\subsection{Theoretical Results for Federated \mlp{}}
\label{sec:theory_mlp}

\paragraph{Setup and data model.}
Recall that we consider $N$ clients indexed by $i \in \{1,\dots,N\}$ and a fixed set of $M$ candidate models indexed by $m \in \mcal := \{1,\dots,M\}$.
Client $i$ holds a local dataset of size $\D_i := |\dcal_i|$, and $\D=\sum_i^N\D_i$ denotes the number of total samples across clients in the federated setting.
\[
\dcal_i \;=\; \Bigl\{ \bigl(\xb_i^{(j)},\, m_i^{(j)},\, \hac_i^{(j)},\, \hcs_i^{(j)}\bigr) \Bigr\}_{j=1}^{\D_i},
\]
where $\xb_i^{(j)}\in\X$ is the query embedding,\footnote{We use \emph{query} and \emph{query embedding} interchangeably throughout this section.}
and $m_i^{(j)}\in\mcal$ is selected by the logging (data-collection) policy.
Given $(m_i^{(j)},\xb_i^{(j)})$, the system samples a model response, from which we compute an accuracy signal
$\hac_i^{(j)}\in[0,1]$ (binary in our setting, $\hac_i^{(j)}\in\{0,1\}$, although the analysis extends to any bounded reward)
and $\hcs_i^{(j)}\in[0,\cm]$.
Equivalently, this can be modeled using an unknown joint conditional distribution $\rho(\cdot,\cdot\mid m,\xb)$ such that whenever $\xb$ is routed to model $m\in\mcal$ we jointly sample $(\ac,\cs) \sim \rho(\cdot,\cdot|m,\xb)$. Let $\ac(\xb,m)$ and $\cs(\xb,m)$ denote the associated expected accuracy and cost, thus we have
\[
(\hac_i^{(j)},\hcs_i^{(j)})  \sim \rho(\cdot,\cdot\mid \xb_i^{(j)},m_i^{(j)}),\qquad
\]
and 
\[
\E\!\left[\hac_i^{(j)} \mid \xb_i^{(j)}, m_i^{(j)}\right] = \ac\left(\xb_i^{(j)}, m_i^{(j)}\right),\qquad 
\E\!\left[\hcs_i^{(j)} \mid \xb_i^{(j)}, m_i^{(j)}\right] = \cs\left(\xb_i^{(j)},m_i^{(j)}\right).
\]
Since $\ac_i^{(j)} \in [0,1]$ and $\cs_i^{(j)}\in[0,\cm]$, both random variables are sub-Gaussian (not independent) because they are bounded.

\paragraph{\mlp{} estimators and objectives.} To facilitate routing, we learn a neural network that estimates $\ac(m,\xb)$ and $\cs(m,\xb)$ for all models $m\in\mcal$ given an input $\xb$. We parameterize the neural network as follows: 
let $\wcal \subseteq \R^{d_\text{MLP}}$ denote the parameter space of the \mlp{} (shared trunk + $M$ heads as in \Cref{alg:mlp_fl}). The parameters $\mlpw \in \wcal$ induce, for each $m\in\mcal$, an accuracy-estimator function
$\aes_{\mlpw}(\cdot,\cdot):\X\times\mcal\to[0,1]$ and a cost-estimator function $\ces_{\mlpw}(\cdot,\cdot):\X\times\mcal\to[0,\cm]$. Accordingly, we define the estimator function class,
\begin{align}
    \fcal = \left\{
 \bigl[\aes_{\mlpw}(\cdot,\cdot),\,\ces_{\mlpw}(\cdot,\cdot)\bigr] 
:\ \mlpw\in\wcal
\right\}. \label{eq:function_class}
\end{align}

The loss for the $(i,j)^{\text{th}}$ sample is the $L_2$ norm of the prediction error that renders the loss function (\Cref{alg:mlp_fl}):
\[
{\mathcal{L}}_i(\mlpw)
:=
\frac{1}{\D_i}\sum_{j=1}^{\D_i}
\left(
\left(\aesw\left(\xb_i^{(j)},m_i^{(j)}\right)- \hac_i^{(j)}\right)^2
+
\left(\cesw\left(\xb_i^{(j)},m_i^{(j)}\right)-\hcs_i^{(j)}\right)^2
\right).
\]
The global objective function is defined as the average of local objectives weighted with the number of local samples: 
\[
\mathcal{L}(\mlpw) := \sum_{i=1}^N \frac{\D_i}{\D} \,\mathcal{L}_i(\mlpw).
\]

\begin{assumption}[Realizability] \label{assump:mlp_realizable}
The neural network function class, $\wcal$, is expressive enough such that there exist $\mlpw^{\star}\in \wcal$ such that 
\(
\left[\aopt(\xb,m), \copt(\xb,m)\right] = [\ac(\xb,m), \cs(\xb,m)]
\)
for all $\mathbf{x}\in\Xcal$ and $m\in\mcal$ .
\end{assumption}

\begin{assumption}[Smoothness]
\label{assump:mlp_smooth}
Each local objective $\mathcal{L}_i$ is $L$-smooth: for all $i\in[N]$ and for every $\mlpw_1,\mlpw_2\in\wcal$,
\[
\|\nabla \mathcal{L}_i(\mlpw_1)-\nabla \mathcal{L}_i(\mlpw_2)\|
\le
L\|\mlpw_1-\mlpw_2\|.
\]
\end{assumption}

\begin{assumption}[Unbiased  gradients and bounded heterogeneity]
\label{assump:mlp_grad_hetero}
For each client $i$,  stochastic gradient over a mini batch $\xi$ denoted by $\nabla \mathcal{L}_i(\mlpw,\xi)$ satisfies
\[
\E_\xi\!\left[\nabla \mathcal{L}_i(\mlpw, \xi)\right]=\nabla \mathcal{L}_i(\mlpw),
\qquad
\E_\xi\!\left[\left\|\nabla \mathcal{L}_i(\mlpw,\xi)-\nabla\mathcal{L}_i(\mlpw)\right\|^2\right]\le \sigma^2.
\]
for every $\mlpw\in\wcal$ where $\xi$ denote the stochasticity in stochastic gradients and is independent to everything else. Moreover, there exist $\beta^2\ge 1$ and $\kappa^2\ge 0$ such that
\[
\sum_{i=1}^N w_i \|\nabla \mathcal{L}_i(\mlpw)\|^2
\le
\beta^2\left\|\sum_{i=1}^N w_i \nabla \mathcal{L}_i(\mlpw)\right\|^2 + \kappa^2.
\]
\end{assumption}

The assumption~\ref{assump:mlp_realizable} is common in learning theory literature and is mild given a sufficiently large neural network \citep{zhang2023mathematical, foster2023foundations}. Assumptions~\ref{assump:mlp_smooth} and \ref{assump:mlp_grad_hetero} are standard in the federated learning literature \citep{wang2020tackling, koloskova2020unified}.

\subsubsection{Convergence analysis} We first provide the analysis for the convergence of \Cref{alg:mlp_fl}. 

\begin{proposition}[Convergence of federated optimization error in \Cref{alg:mlp_fl}]
\label{prop:mlp_fedavg_opt}
Under Assumptions \ref{assump:mlp_smooth} and \ref{assump:mlp_grad_hetero}, and choosing step size
$\eta = \Theta\!\bigl(\sqrt{N/(\tau T)}\bigr)$ (up to constants including $L$), \Cref{alg:mlp_fl} produces iterates $\{\mlpw^{(t)}\}_{t=1}^T$ such that
\[
\min_{t\in\{1,\dots,T\}} \E\!\left[\bigl\|\nabla {\mathcal{L}}(\mlpw^{(t)})\bigr\|^2\right]
\le
\mathcal{O}\!\left(\frac{1}{\sqrt{N\tau T}}\right) + \mathcal{O}\!\left(\frac{A\sigma^2}{\sqrt{N\tau T}}\right)
+
\mathcal{O}\!\left(\frac{N\sigma^2(\tau-1)}{\tau T}\right)
+
\mathcal{O}\!\left(\frac{N\kappa^2(\tau-1)}{T}\right),
\]
where $A = N\sum_{i=1}^N(\D_i/\D)^2$.
\end{proposition}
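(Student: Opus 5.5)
We follow the FedNova-style analysis of \citet{wang2020tackling}, specialized to plain local SGD with a common number $\tau$ of local steps and aggregation weights $w_i=\D_i/\D$. Write the round-$t$ update as $\mlpw^{(t+1)}-\mlpw^{(t)} = -\eta\sum_{i} w_i\sum_{s=0}^{\tau-1}\nabla\mathcal{L}_i(\mlpw_i^{(t,s)},\xi_i^{(t,s)})$, where $\mlpw_i^{(t,s)}$ is client $i$'s $s$-th local iterate. Since every client uses the same $\tau$ and the weights sum to one, the effective surrogate objective coincides with $\mathcal{L}=\sum_i w_i\mathcal{L}_i$. Hence there is no objective inconsistency, and only an optimization error has to be controlled. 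We treat full participation first and handle partial participation with unbiased sampling at the end; it changes constants but not the rates.

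\textbf{Step 1: one-round descent.} Apply $L$-smoothness of $\mathcal{L}$, which is inherited from Assumption~\ref{assump:mlp_smooth}, to $\mlpw^{(t+1)}$ around $\mlpw^{(t)}$ and take conditional expectations. The inner product term is rewritten with $2\langle a,b\rangle=\|a\|^2+\|b\|^2-\|a-b\|^2$, using $a=\nabla\mathcal{L}(\mlpw^{(t)})$ and $b=\frac1\tau\sum_{i,s}w_i\nabla\mathcal{L}_i(\mlpw_i^{(t,s)})$. This produces $-\frac{\eta\tau}{2}\|\nabla\mathcal{L}(\mlpw^{(t)})\|^2$, plus a drift term $\frac{\eta L^2}{2}\sum_{i,s}w_i\E\|\mlpw^{(t)}-\mlpw_i^{(t,s)}\|^2$, plus a quadratic term. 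In the quadratic term the noise is independent across clients and steps, so the variance contributes $\eta^2 L\tau\sigma^2\sum_i w_i^2/2$, which equals $\frac{\eta^2L\tau A\sigma^2}{2N}$. This is exactly where $A$ enters. The mean part of the quadratic term is absorbed by the $\|b\|^2$ term once $\eta L\tau\le 1$.

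\textbf{Step 2: client drift bound.} This is the main obstacle. Unrolling the local recursion and using Assumption~\ref{assump:mlp_grad_hetero} gives
\begin{align*}
\E\|\mlpw^{(t)}-\mlpw_i^{(t,s)}\|^2\le 2s\eta^2\sigma^2+2s\eta^2\sum_{r<s}\E\|\nabla\mathcal{L}_i(\mlpw_i^{(t,r)})\|^2 .
\end{align*}
We then split $\|\nabla\mathcal{L}_i(\mlpw_i^{(t,r)})\|^2\le 2L^2\|\mlpw_i^{(t,r)}-\mlpw^{(t)}\|^2+2\|\nabla\mathcal{L}_i(\mlpw^{(t)})\|^2$. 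Averaging with weights $w_i$ and invoking the heterogeneity bound $\beta^2\|\nabla\mathcal{L}\|^2+\kappa^2$, we take $\eta L\tau$ small enough (say $8\eta^2L^2\tau(\tau-1)\le1$) to move the self-referential drift to the left-hand side. The result is
\begin{align*}
\sum_i w_i\frac1\tau\sum_s\E\|\mlpw^{(t)}-\mlpw_i^{(t,s)}\|^2\lesssim \eta^2(\tau-1)\sigma^2+\eta^2\tau(\tau-1)\bigl(\beta^2\|\nabla\mathcal{L}(\mlpw^{(t)})\|^2+\kappa^2\bigr).
\end{align*}
A second step-size condition, $\eta^2L^2\tau(\tau-1)\beta^2$ small, lets the gradient part be absorbed into the $-\frac{\eta\tau}{2}\|\nabla\mathcal{L}\|^2$ descent term. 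The care needed here is in tracking the factors of $\tau$, so that the noise drift scales as $(\tau-1)$ and the heterogeneity drift as $\tau(\tau-1)$.

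\textbf{Step 3: telescoping and step size.} Sum over $t$ and divide by $\eta\tau T/4$. With $\mathcal{L}^{\inf}$ a lower bound on $\mathcal{L}$, this gives
\begin{align*}
\min_t\E\|\nabla\mathcal{L}(\mlpw^{(t)})\|^2\lesssim\frac{\mathcal{L}(\mlpw^{(0)})-\mathcal{L}^{\inf}}{\eta\tau T}+\frac{\eta LA\sigma^2}{N}+\eta^2L^2(\tau-1)\sigma^2+\eta^2L^2\tau(\tau-1)\kappa^2 .
\end{align*}
Substituting $\eta=\sqrt{N/(\tau T)}$ turns the first two terms into $\mathcal{O}(1/\sqrt{N\tau T})$ and $\mathcal{O}(A\sigma^2/\sqrt{N\tau T})$. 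The last two terms become $\mathcal{O}(N\sigma^2(\tau-1)/(\tau T))$ and $\mathcal{O}(N\kappa^2(\tau-1)/T)$, which are the stated rates. The step-size conditions from Steps 1 and 2 hold once $T$ is large enough, e.g. $T\gtrsim NL^2\tau\beta^2$; these conditions are folded into the $\Theta(\cdot)$ constants.
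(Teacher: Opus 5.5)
Your Steps 1--3 are correct for full participation, and they follow the paper's route. The paper's proof is a one-line appeal to Theorem 1 of \citet{wang2020tackling} with a common $\tau$, and your argument is that proof specialized to plain local SGD with weights $w_i=\D_i/\D$. In that setting there is no objective inconsistency, $\sum_i w_i^2=A/N$, the noise drift scales as $\tau-1$ and the heterogeneity drift as $\tau(\tau-1)$, which gives exactly the four displayed terms. One small correction in Step 1: the local noises form a martingale-difference sequence, not an independent one. Their cross term with the noise-dependent mean part $\sum_{i,s}w_i\nabla\mathcal{L}_i(\mlpw_i^{(t,s)})$ does not vanish, so you need $\|x+y\|^2\le 2\|x\|^2+2\|y\|^2$ there; this only changes constants.

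The claim you should drop is that partial participation ``changes constants but not the rates.'' It fails on two counts.
\begin{itemize}
\item The aggregation in \Cref{alg:mlp_fl} is the self-normalized average $\sum_{i\in\mathcal{S}^{(t)}}\D_i\mlpw_i/\sum_{i\in\mathcal{S}^{(t)}}\D_i$. Under uniform sampling this is not unbiased for $\sum_i(\D_i/\D)\mlpw_i$ unless all $\D_i$ are equal. For example, with $N=2$ and one client sampled per round, the expected update targets $\tfrac12(\mathcal{L}_1+\mathcal{L}_2)$ rather than $\mathcal{L}$, so objective inconsistency returns.
\item Even with an unbiased scheme, such as drawing the $|\mathcal{S}^{(t)}|$ clients with replacement with probabilities $\D_i/\D$, the smoothness step picks up the sampling variance of the aggregated update. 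This is roughly $\frac{L\eta^2\tau^2}{2|\mathcal{S}^{(t)}|}\sum_i w_i\|\nabla\mathcal{L}_i(\mlpw^{(t)})-\nabla\mathcal{L}(\mlpw^{(t)})\|^2$. Its $\kappa^2$ part survives division by $\eta\tau/4$ as a term of order $L\eta\tau\kappa^2/|\mathcal{S}^{(t)}|$. With $\eta=\sqrt{N/(\tau T)}$ this is $\mathcal{O}\bigl(\kappa^2\sqrt{N\tau/T}/|\mathcal{S}^{(t)}|\bigr)$, which decays only like $1/\sqrt{T}$, grows with $\tau$, and is not dominated by any of the four stated terms for large $T$.
\end{itemize}
So your proof, like the full-participation theorem the paper cites, establishes the proposition for full participation only. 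Replace that closing sentence with this restriction rather than asserting unchanged rates.
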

\begin{proof}
    The proof follows Theorem 1 of \cite{wang2020tackling} with the same number of local steps $\tau$. 
\end{proof}

\begin{remark}[Implication]
When all the clients have the same amount of data, i.e., $\D_i/\D = 1/N\;\forall i\in [N]$, we obtain a linear speedup of $N$ (number of clients) in the slowest decaying term $1/\sqrt{T}$ in the convergence to a stationary-point.
\end{remark}

\subsubsection{Sample complexity analysis}
We now assume ideal optimization and analyze statistical error to show that more data can reduce the routing suboptimality. We compare local empirical risk minimization (ERM) in client's local-only training vs federated ERM.
Let $\widehat{\mlpw}_i \in \arg\min_{\mlpw}\mathcal{L}_i(\mlpw)$ denote the local ERM of client $i$, and
\[
\widehat{\mlpw}_{\mathrm{fed}} \in \arg\min_{\mlpw}\ \sum_{i=1}^N w_i \mathcal{L}_i(\mlpw)
\]
denote the federated ERM (equivalently, ERM in $\dcal:=\bigcup_{i=1}^N \dcal_i$).

Each client $i$ faces its own test-time query distribution $\dt_i$ (possibly different from the distribution of its dataset $\dcal_i$) and has a cost tradeoff $\lmb_i\ge 0$.
Define the \emph{true} expected utility of query $\xb$ evaluated with model $m$
\[
U_i(\xb,m) := \ac(\xb,m) - \lmb_i \cs(\xb,m) ,
\]
and the {estimated} utility under parameters $\mlpw$,
\[
\widehat U_i(\xb,m;\mlpw) := \aesw(\xb,m) - \lmb_i \cesw(\xb,m).
\]
we also have $\widehat U_i(\xb,m;\mlpw^{\star}) = U_i(\xb,m)$ by Assumption~\ref{assump:mlp_realizable}. Let the optimal router and a learned router policies be
\[
\pi_i^\star(\xb) \in \arg\max_{m\in\mcal} U_i(\xb,m),
\qquad
\widehat \pi_{i,\mlpw}(\xb) \in \arg\max_{m\in\mcal} \widehat U_i(\xb,m;\mlpw).
\]

Let $\Pi_i := \{\widehat \pi_{i,\mlpw}:\mlpw\in\wcal\}$ be the induced routing-policy class at client $i$ using model $\mlpw$.

\begin{definition}[Router suboptimality]
\label{def:mlp_subopt}We define the suboptimality of the router as the utility gap between the optimal router and the learned router under the test distribution. For any policy $\widehat \pi$, we define
\[
\mathrm{Subopt}_i(\widehat\pi)
:=
\E_{\xb\sim\dt_i}\!\left[U_i\!\bigl(\xb,\pi_i^\star(\xb)\bigr)\right]
-
\E_{\xb\sim\dt_i}\!\left[U_i\!\bigl(\xb,\widehat \pi(\xb)\bigr)\right].
\]
\end{definition}

\paragraph{A data-dependent coverage coefficient.}
To separate \emph{estimation error on the training support} from \emph{test-time distribution shift},
we define a (client-specific) coefficient that measures how well the training sample controls test expectations.
For any dataset $S=\{(\xb^{(j)},m^{(j)})\}_{j=1}^{|S|}$, define
\begin{align}
\Gamma^{(i)}_{\mathrm{acc}}(S)
&:=
\sup_{\substack{\pi \in \Pi_i \\ \mlpw\in\wcal}}
\frac{\left|\E_{\xb\sim \dt_i}\!\left[\aopt(\xb,\pi(\xb))-\aesw(\xb,\pi(\xb))\right]\right|}
{\sqrt{\sum_{j=1}^{|S|}\left(
\aopt\left(\xb^{(j)},m^{(j)}\right)
-\aesw\left(\xb^{(j)},m^{(j)}\right)\right)^2}}, \label{eq:Gamma_acc_def}\\
\Gamma^{(i)}_{\mathrm{cost}}(S)
&:=
\sup_{\substack{\pi \in \Pi_i \\ \mlpw\in\wcal}}
\frac{\left|\E_{\xb\sim \dt_i}\!\left[\copt(\xb,\pi(\xb))-\cesw(\xb,\pi(\xb))\right]\right|}
{\sqrt{\sum_{j=1}^{|S|}\left(
\copt\left(\xb^{(j)},m^{(j)}\right)
-\cesw\left(\xb^{(j)},m^{(j)}\right)\right)^2}}, \label{eq:Gamma_cost_def}
\end{align}
and define $\Gamma^{(i)}(S):=\max\{\Gamma^{(i)}_{\mathrm{acc}}(S),\Gamma^{(i)}_{\mathrm{cost}}(S)\}$.
Intuitively, smaller $\Gamma^{(i)}(S)$ indicates better coverage of client $i$'s test distribution by dataset $S$. This definition is an adapted version of the concentrability coefficient in \cite{zhanprovable} for our setting. It quantifies the uncertainty inherently present in the dataset. Variations of this are used to quantify the difficulty of offline (via coverage) and online learning (via Eluder dimension) problems in learning  theory literature \citep{zhang2023mathematical}.  

\begin{remark}
    Note that the coverage coefficients used in the weaker results presented in the maintext are those of the global test distribution, which are defined as follows
    \begin{align*}
\Gamma_{\mathrm{acc}}(S)
&:=
\sup_{\substack{\pi \in \Pi_{\lmb} \\ \mlpw\in\wcal}}
\frac{\left|\E_{\xb\sim \dt}\!\left[\aopt(\xb,\pi(\xb))-\aesw(\xb,\pi(\xb))\right]\right|}
{\sqrt{\sum_{j=1}^{|S|}\left(
\aopt\left(\xb^{(j)},m^{(j)}\right)
-\aesw\left(\xb^{(j)},m^{(j)}\right)\right)^2}}, \\
\Gamma_{\mathrm{cost}}(S)
&:=
\sup_{\substack{\pi \in \Pi_{\lmb} \\ \mlpw\in\wcal}}
\frac{\left|\E_{\xb\sim \dt}\!\left[\copt(\xb,\pi(\xb))-\cesw(\xb,\pi(\xb))\right]\right|}
{\sqrt{\sum_{j=1}^{|S|}\left(
\copt\left(\xb^{(j)},m^{(j)}\right)
-\cesw\left(\xb^{(j)},m^{(j)}\right)\right)^2}}, 
\end{align*}
and $\Gamma(S) := \max{\Gamma_{\mathrm{acc}}(S), \Gamma_{\mathrm{cost}}(S)}$, here $\Pi_{\lmb}$ is the set of routing policies induced by the tradeoff parameter $\lmb$ for $\mlpw \in \wcal$. The proof for the maintext version follows the same arguments as its stronger counterpart Theorem~\ref{thm:mlp_subopt_bounds}.
\end{remark}

\begin{lemma}[Lemma 34 in \cite{foster2023foundations}]
\label{lemma:th1}
    Let $\{X_t\}_{t \le T}$ be any sequence of real-valued random variables adapted
to a filtration $(\mathcal{F}_t)_{t \le T}$. Then for any $\delta \in (0,1)$,
with probability at least $1-\delta$, the following holds simultaneously for
all $T' \le T$:
\begin{equation}
    \sum_{t=1}^{T'} X_t
    \;\le\;
    \sum_{t=1}^{T'} \log\!\bigl( \mathbb{E}_{t-1}[e^{X_t}] \bigr)
    \;+\;
    \log(\delta^{-1}).
\end{equation}
\end{lemma}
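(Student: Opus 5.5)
This lemma is a standard exponential-supermartingale (Freedman/Ville-type) inequality. I would prove it via Markov's inequality applied to a nonnegative supermartingale, with Ville's maximal inequality giving uniformity over $T'\le T$.

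First, define for $T'\le T$
\[
Z_{T'} := \exp\!\Bigl(\sum_{t=1}^{T'} X_t - \sum_{t=1}^{T'} \log\bigl(\E_{t-1}[e^{X_t}]\bigr)\Bigr),\qquad Z_0:=1,
\]
where $\E_{t-1}[\cdot]=\E[\cdot\mid\mathcal{F}_{t-1}]$. We may assume $\E_{t-1}[e^{X_t}]<\infty$ almost surely. Otherwise the right-hand side is $+\infty$ on that event and the inequality holds trivially there.

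Second, check that $(Z_{T'})$ is a nonnegative martingale with respect to $(\mathcal{F}_{T'})$. The factor $\E_{T'-1}[e^{X_{T'}}]$ is $\mathcal{F}_{T'-1}$-measurable, and $X_{T'}$ is $\mathcal{F}_{T'}$-measurable by adaptedness. Hence
\[
\E_{T'-1}[Z_{T'}] = Z_{T'-1}\,\frac{\E_{T'-1}[e^{X_{T'}}]}{\E_{T'-1}[e^{X_{T'}}]} = Z_{T'-1}.
\]
The tower property then gives $\E[Z_{T'}]=1$.

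Third, apply Ville's maximal inequality (Doob's inequality for nonnegative supermartingales):
\[
\Pr\Bigl(\max_{T'\le T} Z_{T'}\ge \delta^{-1}\Bigr)\le \delta\,\E[Z_0]=\delta.
\]
On the complementary event, which has probability at least $1-\delta$, we have $Z_{T'}<\delta^{-1}$ for every $T'\le T$. Taking logarithms and rearranging gives exactly the stated bound, simultaneously for all $T'$.

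If one prefers to avoid Doob's inequality, the uniformity can be obtained with a stopping time. Let $\tau$ be the first $T'$ with $Z_{T'}\ge\delta^{-1}$, capped at $T$. Optional stopping for a bounded stopping time gives $\E[Z_\tau]=1$, and Markov's inequality on $Z_\tau$ then yields the same conclusion.

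The main obstacle is this uniformity-over-$T'$ step, together with the technical care needed when $\E_{t-1}[e^{X_t}]$ may be infinite or the conditional expectations require integrability. Both are handled as described above, and the rest is routine.
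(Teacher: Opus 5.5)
Your proof is correct and follows the standard argument for the cited result: the paper gives no proof and simply imports Lemma~34 of Foster and Rakhlin, whose proof is exactly this, namely that $Z_{T'}=\exp\bigl(\sum_{t\le T'}X_t-\sum_{t\le T'}\log\mathbb{E}_{t-1}[e^{X_t}]\bigr)$ is a nonnegative (super)martingale with $Z_0=1$ and Ville's inequality makes the bound hold uniformly over $T'\le T$. One small correction: on the $\mathcal{F}_{t-1}$-measurable event $\{\mathbb{E}_{t-1}[e^{X_t}]=\infty\}$ the inequality is trivial only for $T'\ge t$, not for $T'<t$, so instead of ``assuming finiteness'' adopt the convention $e^{X_t}/\infty=0$; then $Z$ is absorbed at $0$, remains a nonnegative supermartingale, and your Ville (or optional-stopping) step goes through verbatim.
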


In the following, we will state and prove a vector adapted version of Theorem 13.15 from \cite{zhang2023mathematical} under zero misspecification error. for completeness.
\begin{lemma}[Adapted version of Theorem 2.29 in \cite{zhang2023mathematical}]
Let $X_1,\dots,X_n$ be independent zero-mean $\sigma^2$-sub-Gaussian random
variables. Then for any $\delta \in (0,1)$, with probability at least $1-\delta$,
\begin{equation}
    \sum_{i=1}^n X_i^2
    \;\le\;
    n\sigma^2
    +
    2\sigma^2 \sqrt{\,n \log(\delta^{-1})\,}
    +
    2\sigma^2 \log(\delta^{-1}) \leq 2n\sigma^2 + 3\sigma^2 \log(\delta^{-1}).
\end{equation}
\label{lemma:th2}
\end{lemma}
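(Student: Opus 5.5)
The plan is a Chernoff argument on the sum of squares, following the chi-square tail bound of Laurent--Massart. The bound $n\sigma^2 + 2\sigma^2\sqrt{n\log(\delta^{-1})} + 2\sigma^2\log(\delta^{-1})$ is exactly that tail with $\sigma^2$ scaling. I use the convention that $X$ is $\sigma^2$-sub-Gaussian if $\E[e^{\theta X}]\le e^{\theta^2\sigma^2/2}$ for all $\theta\in\R$.

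\textbf{Step 1 (MGF of a square).} Fix $0\le s<1/(2\sigma^2)$ and let $g\sim\mathcal N(0,1)$ be independent of $X_i$. Using the Gaussian identity $e^{sx^2}=\E_g[e^{\sqrt{2s}\,g\,x}]$ and Fubini, the sub-Gaussian MGF bound applied conditionally on $g$ gives
\[
\E[e^{sX_i^2}] = \E_g\E_{X_i}\bigl[e^{\sqrt{2s}\,g X_i}\bigr] \le \E_g\bigl[e^{s\sigma^2 g^2}\bigr] = (1-2s\sigma^2)^{-1/2}.
\]
Write $u:=s\sigma^2\in[0,1/2)$. The elementary inequality $-\tfrac12\log(1-2u)-u\le u^2/(1-2u)$ (check by comparing power series) then yields
\[
\log\E\bigl[e^{s(X_i^2-\sigma^2)}\bigr]\le \frac{u^2}{1-2u}.
\]
This is the sub-gamma condition with variance factor $2\sigma^4$ and scale $2\sigma^2$.

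\textbf{Step 2 (sum and deviation).} By independence the log-MGFs add, so $\log\E\bigl[e^{s\sum_i(X_i^2-\sigma^2)}\bigr]\le nu^2/(1-2u)$. One could apply \Cref{lemma:th1} with $X_t:=s(X_t^2-\sigma^2)$, but here a plain Markov/Chernoff step suffices: for every admissible $s$,
\[
\Pr\Bigl(\textstyle\sum_i X_i^2-n\sigma^2\ge t\Bigr)\le \exp\Bigl(-\tfrac{u t}{\sigma^2}+\tfrac{n u^2}{1-2u}\Bigr).
\]
Now take $t=2\sigma^2\sqrt{nL}+2\sigma^2L$ with $L=\log(\delta^{-1})$, and optimize over $u$, or use the explicit choice $u=\tfrac12\bigl(1-\sqrt{n/(n+2\sqrt{nL}+2L)}\bigr)$, which is standard for sub-gamma tails. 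The exponent then becomes $\le -L$. Equivalently, one can quote the sub-gamma inversion: $\Pr(Z\ge \sqrt{2vL}+cL)\le e^{-L}$ with $v=2n\sigma^4$ and $c=2\sigma^2$. Since $\sqrt{2vL}=2\sigma^2\sqrt{nL}$, this matches the first inequality exactly.

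\textbf{Step 3 (simplification).} The second inequality follows from AM--GM: $2\sqrt{nL}\le n+L$. Hence $n\sigma^2+2\sigma^2\sqrt{nL}+2\sigma^2L\le 2n\sigma^2+3\sigma^2L$.

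The main obstacle is the algebra in Step 2, namely verifying that the chosen $u$ makes the Chernoff exponent at most $-L$. I would sidestep it by proving the general sub-gamma inversion once: minimize $-ut/\sigma^2+nu^2/(1-2u)$ via the Legendre transform $h(x)=1+x-\sqrt{1+2x}$ as in Boucheron--Lugosi--Massart, and then invert $h$. The Gaussian-integration step in Step 1 also needs the restriction $s<1/(2\sigma^2)$ so that the integral is finite, and this is where that restriction enters.
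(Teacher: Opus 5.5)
Your route is correct: compare $\E[e^{sX^2}]$ with a $\chi^2_1$ MGF via an auxiliary Gaussian, deduce the sub-gamma bound with $v=2n\sigma^4$, $c=2\sigma^2$, invert it, and finish with $2\sqrt{nL}\le n+L$. The paper gives no proof of its own and simply imports the bound from Theorem 2.29 of Zhang; your argument is the standard derivation of that bound. Step 1 (including the power-series comparison), the sub-gamma inversion with $\sqrt{2vL}=2\sigma^2\sqrt{nL}$, and Step 3 are all fine.

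The one concrete error is the explicit choice of $u$ in Step 2. Write $\tau:=t/\sigma^2=2\sqrt{nL}+2L$ and $f(u)=-u\tau+nu^2/(1-2u)$. This $f$ is strictly convex on $[0,1/2)$, and $f'(u)=0$ exactly when $(1-2u)^2=n/(n+2\tau)$. Its minimum value is exactly $-L$, because the Legendre transform of the sub-gamma bound equals $L$ at this threshold. Any other $u$ therefore gives an exponent strictly above $-L$. Your formula has $n+\tau$ under the square root instead of $n+2\tau$, so it fails; for $n=L=1$ it gives $f(u)\approx-0.93>-1$. The correct explicit choice is
\[
u^\star=\frac{\sqrt{L}}{\sqrt{n}+2\sqrt{L}}=\tfrac12\Bigl(1-\sqrt{\tfrac{n}{n+4\sqrt{nL}+4L}}\Bigr)\in[0,\tfrac12).
\]
For this choice, $1-2u^\star=\sqrt{n}/(\sqrt{n}+2\sqrt{L})$ and $nu^{\star2}/(1-2u^\star)=\sqrt{n}\,L/(\sqrt{n}+2\sqrt{L})$. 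Also $u^\star\tau=2L(\sqrt{n}+\sqrt{L})/(\sqrt{n}+2\sqrt{L})$, so $f(u^\star)=-L$ by direct substitution. With $u^\star$ you can drop the Legendre-transform detour entirely. Your fallback of quoting or proving the Boucheron--Lugosi--Massart inversion with $h(x)=1+x-\sqrt{1+2x}$ is also correct as stated.
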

\begin{lemma}\label{lem:cs-mgf}
 Let \(u = (u_1, u_2) \in \mathbb{R}^2\) and let \(\varepsilon = (\varepsilon^{(1)}, \varepsilon^{(2)})\) be a random vector. Suppose that, conditioned on an event \(Y\), the random variables \(\varepsilon^{(1)}\) and \(\varepsilon^{(2)}\) are subgaussian with zero-mean with parameters \(\sigma_1^2\) and \(\sigma_2^2\), respectively (not necessarily independent). Then we have

\[
\ln \mathbb E\!\left[\exp(\langle u,\varepsilon\rangle)\mid Y\right]
\le u_1^2\sigma_1^2+u_2^2\sigma_2^2
\le \sigma_{\max}^2\|u\|_2^2.
\]
\end{lemma}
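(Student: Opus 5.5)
The plan is to bound the moment generating function of the linear form directly by conditional Cauchy--Schwarz, reducing it to two one-dimensional MGF bounds where the scalar sub-Gaussian definitions apply. Conditionally on $Y$, write $\langle u,\varepsilon\rangle = u_1\varepsilon^{(1)}+u_2\varepsilon^{(2)}$. Because $\varepsilon^{(1)}$ and $\varepsilon^{(2)}$ need not be independent, the MGF does not factor. Instead, Cauchy--Schwarz for the conditional expectation gives
\[
\E\!\left[e^{u_1\varepsilon^{(1)}}e^{u_2\varepsilon^{(2)}}\mid Y\right]
\le
\Bigl(\E\!\left[e^{2u_1\varepsilon^{(1)}}\mid Y\right]\Bigr)^{1/2}
\Bigl(\E\!\left[e^{2u_2\varepsilon^{(2)}}\mid Y\right]\Bigr)^{1/2}.
\]

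Next, invoke the scalar sub-Gaussian definition with the convention $\E[e^{s\varepsilon^{(k)}}\mid Y]\le \exp(s^2\sigma_k^2/2)$ for every real $s$. Taking $s=2u_k$ gives $\E[e^{2u_k\varepsilon^{(k)}}\mid Y]\le \exp(2u_k^2\sigma_k^2)$. Its square root is $\exp(u_k^2\sigma_k^2)$, so
\[
\ln \E[\exp(\langle u,\varepsilon\rangle)\mid Y]\le u_1^2\sigma_1^2+u_2^2\sigma_2^2 .
\]
This matches the first claimed inequality exactly. The factor of $2$ lost to Cauchy--Schwarz is precisely what converts the usual $1/2$ constant into the stated constant $1$.

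The second inequality is immediate. With $\sigma_{\max}^2:=\max\{\sigma_1^2,\sigma_2^2\}$, we have $u_1^2\sigma_1^2+u_2^2\sigma_2^2\le \sigma_{\max}^2(u_1^2+u_2^2)=\sigma_{\max}^2\|u\|_2^2$.

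The only delicate point is the normalization convention for ``$\sigma^2$-sub-Gaussian'': the argument above assumes the convention that gives the $s^2\sigma^2/2$ exponent. If instead one uses the convention $\E e^{s\varepsilon}\le e^{s^2\sigma^2}$, the same argument yields a constant of $2$. In that case I would either absorb the constant into the definition of $\sigma_k^2$ or state the lemma with the matching normalization. The zero-mean hypothesis is already built into the sub-Gaussian MGF bound and is not used separately. Conditioning on $Y$ causes no difficulty, since conditional Cauchy--Schwarz holds almost surely.
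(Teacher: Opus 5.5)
Your proposal is correct and follows the paper's proof exactly. Both use conditional Cauchy--Schwarz to decouple the possibly dependent coordinates, then apply the scalar sub-Gaussian bound at $s=2u_k$ (with the $s^2\sigma^2/2$ convention the paper uses in Lemma~\ref{lem:mlp_ise_scalar}), take square roots and logs, and finish with the trivial $\sigma_{\max}^2\|u\|_2^2$ bound.
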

\begin{proof}
By Cauchy--Schwarz 
$\mathbb E\!\left[e^{u_1\varepsilon^{(1)}+u_2\varepsilon^{(2)}}\mid\cdot\right]
\le
\Big(\mathbb E[e^{2u_1\varepsilon^{(1)}}\mid\cdot]\Big)^{1/2}
\Big(\mathbb E[e^{2u_2\varepsilon^{(2)}}\mid\cdot]\Big)^{1/2}$ - no independence required. 
Taking logs and applying the conditional sub-Gaussian bounds yields $\ln(\cdot)\le \tfrac12\cdot\tfrac{(2u_1)^2\sigma_1^2}{2}+\tfrac12\cdot\tfrac{(2u_2)^2\sigma_2^2}{2}
= u_1^2\sigma_1^2+u_2^2\sigma_2^2
\le \sigma_{\max}^2\|u\|_2^2.$
\end{proof}

\begin{lemma}[In sample error - vector adapted version Theorem 13.15 \cite{zhang2023mathematical}]
\label{lem:mlp_ise_scalar}
Let $\{(X_t,\varepsilon_t)\}_{t\ge 1}$ be a filtered sequence with history $\mathcal S_{t-1}$, where
$\varepsilon_t=(\varepsilon_t^{(1)},\varepsilon_t^{(2)})\in\mathbb R^2$,
$
Y_t \;=\; f^\star(X_t)+\varepsilon_t \in \mathbb R^2$ and $f^\star\in\mathcal F\subseteq(\mathcal X\to\mathbb R^2).
$
Assume that for each coordinate $j\in\{1,2\}$ the noise is conditionally zero mean subgaussian and all $\lambda\in\mathbb R$,
\[
\mathbb E[\varepsilon_t^{(j)}\mid X_t,\mathcal S_{t-1}]=0,
\qquad
\ln \mathbb E\!\left[\exp(\lambda\varepsilon_t^{(j)})\mid X_t,\mathcal S_{t-1}\right]
\le \frac{\lambda^2\sigma_j^2}{2},
\]
and \emph{no independence} is assumed between $\varepsilon_t^{(1)}$ and $\varepsilon_t^{(2)}$. Let $N(\epsilon):=N(\epsilon,\mathcal F,\|\cdot\|_\infty)$ be the corresponding covering number with $\|f\|_{\infty} = \sup_{x}\|f(x)\|_2$ For each $t\ge 1$, let $\hat f_t$ be an empirical risk minimizer:
\[
\hat f_t= \arg
\min_{f\in\mathcal F}\sum_{s=1}^t \|f(X_s)-Y_s\|_2^2.
\]
Define $\sigma_{\max}:=\max\{\sigma_1,\sigma_2\}$ and $\bar\sigma:=\sqrt{\sigma_1^2+\sigma_2^2}$.
Then for any $\delta\in(0,1)$, with probability at least $1-\delta$, for all $t\ge 1$,
\[
\sum_{s=1}^t \|\hat f_t(X_s)-f^\star(X_s)\|_2^2
\;\le\;
\inf_{\epsilon>0}\Bigg\{
10t\epsilon^2
+12\bar\sigma\,t\epsilon
+32\sigma_{\max}^2\ln\!\Big(\frac{2N(\epsilon)}{\delta}\Big)
+12\bar\sigma^2\ln\!\Big(\frac{4}{\delta}\Big)
\Bigg\}.
\]
\end{lemma}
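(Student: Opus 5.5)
The plan follows the standard self-normalized ERM argument behind Theorem 13.15 of \cite{zhang2023mathematical}. The only new ingredient is the vector-valued noise, which Lemma~\ref{lem:cs-mgf} handles.

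\textbf{Step 1: basic inequality from ERM.} Write $u_s(f):=f(X_s)-f^\star(X_s)\in\mathbb R^2$. The squared-loss identity gives
$\|f(X_s)-Y_s\|_2^2-\|f^\star(X_s)-Y_s\|_2^2=\|u_s(f)\|_2^2-2\langle u_s(f),\varepsilon_s\rangle$.
Since $\hat f_t$ minimizes the empirical risk and $f^\star\in\mathcal F$ (zero misspecification),
\[
\sum_{s=1}^t\|u_s(\hat f_t)\|_2^2\le 2\sum_{s=1}^t\langle u_s(\hat f_t),\varepsilon_s\rangle .
\]
The rest of the proof controls the right-hand side uniformly over $\mathcal F$ and over $t$.

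\textbf{Step 2: concentration for a fixed $g$, uniformly in time.} Fix $g\in\mathcal F$ and set $X'_s:=\eta\bigl(2\langle u_s(g),\varepsilon_s\rangle-\|u_s(g)\|_2^2\bigr)$, which is adapted to $\mathcal S_s$.
\begin{itemize}
\item By Lemma~\ref{lem:cs-mgf} applied conditionally on $(X_s,\mathcal S_{s-1})$ with $u=2\eta u_s(g)$, we get $\ln\mathbb E_{s-1}[e^{X'_s}]\le(4\eta^2\sigma_{\max}^2-\eta)\|u_s(g)\|_2^2$.
\item Choose $\eta=1/(8\sigma_{\max}^2)$, so that $4\eta^2\sigma_{\max}^2=\eta/2$.
\item Lemma~\ref{lemma:th1} then gives, with probability $1-\delta'$ and simultaneously for all $t$,
\[
\sum_{s\le t}\Bigl(2\langle u_s(g),\varepsilon_s\rangle-\tfrac12\|u_s(g)\|_2^2\Bigr)\le 8\sigma_{\max}^2\ln(1/\delta').
\]
\end{itemize}
The lemma is stated for a finite horizon $T$. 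Taking $T\to\infty$ (monotone limit of events) gives the "for all $t\ge1$" claim.

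Next take an $\epsilon$-cover $\mathcal G$ of $\mathcal F$ in $\|\cdot\|_\infty$ with $|\mathcal G|=N(\epsilon)$. A union bound with $\delta'=\delta/(2N(\epsilon))$ produces the term $32\sigma_{\max}^2\ln(2N(\epsilon)/\delta)$ once the constant is multiplied by the factor coming from Step 4.

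\textbf{Step 3: noise energy.} We need $\sum_{s\le t}\|\varepsilon_s\|_2^2\lesssim t\bar\sigma^2+\bar\sigma^2\ln(4/\delta)$ with probability $1-\delta/2$, uniformly in $t$. Lemma~\ref{lemma:th2} is stated for independent variables, so I would instead redo its proof in martingale form.
\begin{itemize}
\item For each coordinate $j$, use $\ln\mathbb E_{s-1}\bigl[e^{(\varepsilon_s^{(j)})^2/(4\sigma_j^2)}\bigr]\le\ln 2$, which is a standard consequence of sub-Gaussianity.
\item Plug this into Lemma~\ref{lemma:th1}.
\item Sum the two coordinates, allotting $\delta/4$ to each.
\end{itemize}
This yields the $\bar\sigma^2$ terms together with the $\ln(4/\delta)$ factor.

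\textbf{Step 4: discretization and assembly.} Pick $g\in\mathcal G$ with $\|\hat f_t-g\|_\infty\le\epsilon$. Then
\[
2\sum_s\langle u_s(\hat f_t),\varepsilon_s\rangle\le 2\sum_s\langle u_s(g),\varepsilon_s\rangle+2\epsilon\sum_s\|\varepsilon_s\|_2 .
\]
Two further bounds are needed:
\begin{itemize}
\item The triangle inequality gives $\|u_s(g)\|_2^2\le 2\|u_s(\hat f_t)\|_2^2+2\epsilon^2$.
\item Cauchy--Schwarz gives $\sum_s\|\varepsilon_s\|_2\le\sqrt{t\sum_s\|\varepsilon_s\|_2^2}$, and Step 3 bounds the right side by roughly $t\bar\sigma+\sqrt{t}\,\bar\sigma\sqrt{\ln(4/\delta)}$.
\end{itemize}
Combine Steps 1–2 to get
\[
\sum_s\|u_s(\hat f_t)\|_2^2\le\tfrac12\bigl(2\textstyle\sum_s\|u_s(\hat f_t)\|_2^2+2t\epsilon^2\bigr)+8\sigma_{\max}^2\ln(\cdot)+2\epsilon\sum_s\|\varepsilon_s\|_2 .
\]
In this form the quadratic term on the right cancels the left side completely, which means $\eta$ must be shrunk to $1/(16\sigma_{\max}^2)$, giving coefficient $\tfrac14$. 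This is why the final constant is $32$ rather than $8$. After rearranging:
\begin{itemize}
\item absorb $\epsilon\sqrt t\,\bar\sigma\sqrt{\ln}$ by AM–GM into $t\epsilon^2+\bar\sigma^2\ln(4/\delta)$;
\item collect the $t\epsilon^2$, $t\epsilon\bar\sigma$, $\sigma_{\max}^2\ln(2N(\epsilon)/\delta)$ and $\bar\sigma^2\ln(4/\delta)$ terms;
\item take the infimum over $\epsilon$, which is allowed since the cover is fixed before the data for each $\epsilon$ (restrict to a countable grid of $\epsilon$ if needed).
\end{itemize}

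\textbf{Main obstacle.} The conceptual content is routine; the hard part is the bookkeeping of constants so that the stated $10,12,32,12$ come out exactly. This concerns the choice of $\eta$ versus how much of $\sum\|u_s\|^2$ is absorbed, and the AM–GM splits. A secondary subtlety is making Step 3 valid under only conditional sub-Gaussianity rather than the independence assumed in Lemma~\ref{lemma:th2}. I would handle it through the exponential-moment bound on $\varepsilon^2$ fed into Lemma~\ref{lemma:th1}, as described above.
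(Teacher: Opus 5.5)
Your proof is correct. It shares the paper's core: the per-function exponential supermartingale from Lemma~\ref{lem:cs-mgf} and Lemma~\ref{lemma:th1} (your $X'_s$ is exactly $-\eta\,\phi(g,Z_s)$ in the paper's notation), a union bound over an $\|\cdot\|_\infty$ cover, and a bound on $A_t=\sum_s\|\varepsilon_s\|_2^2$. Where you differ is in how the discretization is organized.

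The paper never writes the basic inequality for $\hat f_t$. It works in four moves:
\begin{itemize}
\item it takes the cover point $f$ near $\hat f_t$ and bounds its excess empirical loss via ERM, $\sum_i\phi(f,Z_i)\le(\sqrt{A_t}+\sqrt{t}\,\epsilon)^2-A_t$;
\item it inserts this into the supermartingale bound with $\eta=1/(8\sigma_{\max}^2)$;
\item there the quadratic term already concerns $f$ itself, so no absorption is needed;
\item only at the end does it pass back to $\hat f_t$ via $\sum_i\|\hat f_t(X_i)-f^\star(X_i)\|_2^2\le 2t\epsilon^2+2B_t$.
\end{itemize}
You instead move the linear noise term from $\hat f_t$ to the cover point. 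This costs $2\epsilon\sum_s\|\varepsilon_s\|_2$ and doubles the quadratic term, which is what forces $\eta=1/(16\sigma_{\max}^2)$ and the absorption step. Both routes produce the same $32\sigma_{\max}^2\ln(2N(\epsilon)/\delta)$.

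Your bookkeeping closes with room to spare. You get $\sum_s\|u_s(\hat f_t)\|_2^2\le t\epsilon^2+32\sigma_{\max}^2\ln(2N(\epsilon)/\delta)+4\epsilon\sqrt{tA_t}$, and your Step~3 gives $A_t\le 4\ln 2\,t\bar\sigma^2+4\bar\sigma^2\ln(4/\delta)$. AM--GM with weight $4/3$ then yields $\tfrac{7}{3}t\epsilon^2+8\sqrt{\ln 2}\,\bar\sigma t\epsilon+32\sigma_{\max}^2\ln(2N(\epsilon)/\delta)+12\bar\sigma^2\ln(4/\delta)$. The stated bound dominates this, so the constants need not come out exactly.

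Your Step~3 is in fact more careful than the paper's. The paper applies Lemma~\ref{lemma:th2}, which assumes independent variables and a fixed $n$, to conditionally sub-Gaussian noise, and then asserts the result uniformly in $t$. Feeding the conditional moment bound on $e^{(\varepsilon_s^{(j)})^2/(4\sigma_j^2)}$ into Lemma~\ref{lemma:th1} is the right repair.

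One caveat you share with the paper concerns the infimum. Each $\epsilon$ yields its own probability-$(1-\delta)$ event. For each fixed $t$ the infimum over $\epsilon$ can still be moved inside the probability, because the right-hand side is deterministic. It cannot be moved inside uniformly over all $t$ without a union bound over a grid of $\epsilon$, and that would change the logarithmic term. Your parenthetical remark does not quite settle this.
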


\begin{proof}
Fix $\delta\in(0,1)$ and $\epsilon>0$.
For $f\in\mathcal F$, define 
\[
\phi(f,Z_t)
:=\|f(X_t)-Y_t\|_2^2-\|f^\star(X_t)-Y_t\|_2^2,
\qquad Z_t:=(X_t,Y_t).
\]
Let $\Delta_t:=f(X_t)-f^\star(X_t)\in\mathbb R^2$. Since $Y_t=f^\star(X_t)+\varepsilon_t$,
\begin{equation}\label{eq:phi-expand}
\phi(f,Z_t)=\|\Delta_t\|_2^2-2\langle \Delta_t,\varepsilon_t\rangle.
\end{equation}

Now, let $\mathcal F_\epsilon$ be an $\epsilon$-cover of $(\mathcal F,\|\cdot\|_\infty)$ with
$|\mathcal F_\epsilon|=N(\epsilon)$.
Choose
$\eta:=\frac{1}{8\sigma_{\max}^2},$ using \eqref{eq:phi-expand} and Lemma~\ref{lem:cs-mgf} with $u=2\eta\Delta_t$,
\begin{equation}
   \ln \mathbb E\!\left[\exp\!\big(-\eta\phi(f,Z_t)\big)\mid X_t,\mathcal S_{t-1}\right]
=
\ln \mathbb E\!\left[\exp\!\Big(-\eta\|\Delta_t\|_2^2+2\eta\langle\Delta_t,\varepsilon_t\rangle\Big)\mid\cdot\right]
\le -\eta\|\Delta_t\|_2^2 + 4\eta^2\sigma_{\max}^2\|\Delta_t\|_2^2
= -\frac{\|\Delta_t\|_2^2}{16\sigma_{\max}^2}.
\label{eq:appth1}
\end{equation}
Now using equation \eqref{eq:appth1}, the lemma \ref{lemma:th1} and a union bound over $f\in\mathcal F_\epsilon$ implies that with probability at least $1-\delta/2$,
simultaneously for all $t\ge 1$ and all $f\in\mathcal F_\epsilon$,
\[
-\eta\sum_{i=1}^t \phi(f,Z_i) + \frac{1}{16\sigma_{\max}^2}\sum_{i=1}^t \|f(X_i)-f^\star(X_i)\|_2^2
\le \ln\!\Big(\frac{2N(\epsilon)}{\delta}\Big).
\]
Rearranging gives
\begin{equation}\label{eq:cover-ineq}
\sum_{i=1}^t \|f(X_i)-f^\star(X_i)\|_2^2
\le
2\sum_{i=1}^t \phi(f,Z_i)
+16\sigma_{\max}^2 \ln\!\Big(\frac{2N(\epsilon)}{\delta}\Big).
\end{equation}

Now for each coordinate $j$, using the lemma \ref{lemma:th2}
\[
\sum_{i=1}^t (\varepsilon_i^{(j)})^2
\le
2t\sigma_j^2 + 3\sigma_j^2\ln\!\Big(\frac{4}{\delta}\Big).
\]
Taking a union bound over $j\in\{1,2\}$, with probability at least $1-\delta/2$,
simultaneously for all $t\ge 1$,
\begin{equation}\label{eq:At}
A_t:=\sum_{i=1}^t \|\varepsilon_i\|_2^2
=\sum_{i=1}^t\big((\varepsilon_i^{(1)})^2+(\varepsilon_i^{(2)})^2\big)
\le 2t\bar\sigma^2 + 3\bar\sigma^2\ln\!\Big(\frac{4}{\delta}\Big).
\end{equation}
Fix $t\ge 1$. Let $f\in\mathcal F_\epsilon$ satisfy $\|f-\hat f_t\|_\infty\le \epsilon$.
Then for each $i\le t$,
$\|f(X_i)-Y_i\|_2 \le \|\hat f_t(X_i)-Y_i\|_2 + \epsilon.$ Therefore,
\[
\Big(\sum_{i=1}^t \|f(X_i)-Y_i\|_2^2\Big)^{1/2}
\le
\Big(\sum_{i=1}^t \|\hat f_t(X_i)-Y_i\|_2^2\Big)^{1/2} + \sqrt{t}\,\epsilon.
\]
By the property of ERM  and $f^\star\in\mathcal F$,
\[
\sum_{i=1}^t \|\hat f_t(X_i)-Y_i\|_2^2
\le
\sum_{i=1}^t \|f^\star(X_i)-Y_i\|_2^2
=
\sum_{i=1}^t \|\varepsilon_i\|_2^2
= A_t.
\]
Hence, using \eqref{eq:At},
\begin{align}
    \sum_{i=1}^t \phi(f,Z_i)
=
\sum_{i=1}^t\|f(X_i)-Y_i\|_2^2 - A_t
\le (\sqrt{A_t}+\sqrt{t}\epsilon)^2 - A_t
&= t\epsilon^2 + 2\epsilon\sqrt{tA_t} \nn\\
&\leq t\epsilon^2 + 2\sqrt{2}\,\bar\sigma\,t\epsilon
+2\sqrt{3}\,\bar\sigma\,\epsilon\sqrt{t\ln(4/\delta)}\nn\\
&\leq  2t\epsilon^2 + 2\sqrt{2}\,\bar\sigma\,t\epsilon + 3\bar\sigma^2\ln\!\Big(\frac{4}{\delta}\Big)\label{eq:phi-bound}
\end{align}

On the intersection of the events \eqref{eq:cover-ineq} and \eqref{eq:At} (probability at least $1-\delta$),
plug \eqref{eq:phi-bound} into \eqref{eq:cover-ineq} to obtain, for all $t\ge 1$ and the selected $f\in\mathcal F_\epsilon$,
\[
\sum_{i=1}^t \|f(X_i)-f^\star(X_i)\|_2^2
\le
4t\epsilon^2 + 4\sqrt{2}\,\bar\sigma\,t\epsilon
+6\bar\sigma^2\ln\!\Big(\frac{4}{\delta}\Big)
+16\sigma_{\max}^2\ln\!\Big(\frac{2N(\epsilon)}{\delta}\Big).
\]
Also, since $\|f-\hat f_t\|_\infty\le \epsilon$,
$\Big(\sum_{i=1}^t\|\hat f_t(X_i)-f^\star(X_i)\|_2^2\Big)^{1/2}
\le
\Big(\sum_{i=1}^t\|\hat f_t(X_i)-f(X_i)\|_2^2\Big)^{1/2}
+
\Big(\sum_{i=1}^t\|f(X_i)-f^\star(X_i)\|_2^2\Big)^{1/2}
\le \sqrt{t}\,\epsilon + \sqrt{B_t},$
where $B_t$ denotes the right-hand side of the previous display. Thus
\[
\sum_{i=1}^t\|\hat f_t(X_i)-f^\star(X_i)\|_2^2
\le 2t\epsilon^2 + 2B_t
\le
10t\epsilon^2
+12\bar\sigma\,t\epsilon
+32\sigma_{\max}^2\ln\!\Big(\frac{2N(\epsilon)}{\delta}\Big)
+12\bar\sigma^2\ln\!\Big(\frac{4}{\delta}\Big),
\]
\end{proof}

\paragraph{Main bound: federated ERM vs local ERM.}
 We apply Lemma~\ref{lem:mlp_ise_scalar} separately to the accuracy and cost heads with a union bound (using $\delta/2$ each),
and note $\sigma^2 \le \cm^2/4$ for both $\tilde a \in [0,1]$ and $\tilde c\in[0,\cm]$.

\begin{theorem}[Federated ERM can yield lower routing suboptimality]
\label{thm:mlp_subopt_bounds}
Assume realizability (Assumption \ref{assump:mlp_realizable}).
Fix a client $i$ and let $\widehat{\mlpw}_i$ be the local ERM on $\dcal_i$.
Then, for a universal constant $C>0$ and our estimator function class $\fcal$ (\Cref{eq:function_class}),  with probability at least $1-\delta$,
\begin{align}
\mathrm{Subopt}_i(\pi_{\widehat{i,\mlpw}_i})
&\le
C\,\max\{1,\lmb_i\}\,\Gamma^{(i)}(\dcal_i)\,\cm\,
\sqrt{\log\!\left(\frac{2\,\mathcal N(1/\D_i,\mathcal F,\|\cdot\|_\infty)}{\delta}\right)}.
\label{eq:mlp_subopt_local}
\end{align}
Also, let $\widehat{\mlpw}_{\mathrm{fed}}$ be the federated ERM on $\dcal=\bigcup_{j=1}^N \dcal_j$. Then, with probability at least $1-\delta$,
\begin{align}
\mathrm{Subopt}_i(\pi_{i,\widehat{\mlpw}_{\mathrm{fed}}})
&\le
C\,\max\{1,\lmb_i\}\,\Gamma^{(i)}(\dcal)\,\cm\,
\sqrt{\log\!\left(\frac{2\,\mathcal N(1/\D,\mathcal F,\|\cdot\|_\infty)}{\delta}\right)}.
\label{eq:mlp_subopt_federated}
\end{align}
\end{theorem}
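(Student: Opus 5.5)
The argument has three steps: a standard regret decomposition, the coverage coefficient to pass to in-sample error, and Lemma~\ref{lem:mlp_ise_scalar} to bound that error. I give it for a generic dataset $S$ with ERM $\widehat{\mlpw}_S$; taking $S=\dcal_i$ yields \eqref{eq:mlp_subopt_local} and $S=\dcal$ yields \eqref{eq:mlp_subopt_federated}. Write $\widehat\pi:=\widehat\pi_{i,\widehat{\mlpw}_S}$.

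\textbf{Step 1: regret decomposition.} Since $\widehat\pi(\xb)$ maximizes $\widehat U_i(\xb,\cdot;\widehat{\mlpw}_S)$, we have $\widehat U_i(\xb,\pi^\star_i(\xb);\widehat{\mlpw}_S)-\widehat U_i(\xb,\widehat\pi(\xb);\widehat{\mlpw}_S)\le 0$ pointwise. Adding and subtracting this gives
\[
\mathrm{Subopt}_i(\widehat\pi)\le \E_{\dt_i}\!\big[(U_i-\widehat U_i)(\xb,\pi_i^\star(\xb))\big]+\E_{\dt_i}\!\big[(\widehat U_i-U_i)(\xb,\widehat\pi(\xb))\big].
\]
By realizability, $U_i=\widehat U_i(\cdot;\mlpw^\star)$. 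Each term therefore splits into an accuracy gap $\E[\aopt-\aes_{\widehat{\mlpw}_S}]$ and a cost gap $\lmb_i\E[\copt-\ces_{\widehat{\mlpw}_S}]$, both evaluated along a policy. The policy $\widehat\pi$ lies in $\Pi_i$. Also $\pi_i^\star=\widehat\pi_{i,\mlpw^\star}\in\Pi_i$, again by realizability. This is why the supremum over $\pi\in\Pi_i$ in \eqref{eq:Gamma_acc_def}--\eqref{eq:Gamma_cost_def} covers both terms.

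\textbf{Step 2: coverage.} By the definition of $\Gamma^{(i)}(S)$, each of the four gap terms is at most
\[
\Gamma^{(i)}(S)\sqrt{\textstyle\sum_{j}\big\|f_{\widehat{\mlpw}_S}(\xb^{(j)},m^{(j)})-f_{\mlpw^\star}(\xb^{(j)},m^{(j)})\big\|_2^2},
\]
where $f_\mlpw=[\aesw,\cesw]\in\fcal$. The accuracy and cost squared errors are each bounded by the joint vector norm. This gives
\[
\mathrm{Subopt}_i\le 2(1+\lmb_i)\,\Gamma^{(i)}(S)\sqrt{\mathrm{ISE}(S)}\le 4\max\{1,\lmb_i\}\,\Gamma^{(i)}(S)\sqrt{\mathrm{ISE}(S)}.
\]

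\textbf{Step 3: in-sample error.} Apply Lemma~\ref{lem:mlp_ise_scalar} with $X_t=(\xb^{(t)},m^{(t)})$ and $Y_t=(\hac^{(t)},\hcs^{(t)})$. The noise $\varepsilon_t=Y_t-f^\star(X_t)$ is conditionally zero mean and bounded, hence sub-Gaussian with $\sigma_1^2,\sigma_2^2\le \cm^2/4$ (taking $\cm\ge1$, as the paper's remark implicitly assumes). The vector form of the lemma handles the dependence between accuracy and cost noise, so the lemma is applied once to the joint vector and no union bound is needed. Choose $\epsilon=1/|S|$. With $t=|S|$, the terms $10t\epsilon^2$ and $12\bar\sigma t\epsilon$ are $O(\cm)$. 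The remaining terms are $O(\cm^2\log(2\mathcal N(1/|S|)/\delta))+O(\cm^2\log(4/\delta))$. The $\log(4/\delta)$ term is absorbed into the covering term, using $\mathcal N\ge1$ and adjusting the universal constant. Hence $\sqrt{\mathrm{ISE}(S)}\le C''\cm\sqrt{\log(2\mathcal N(1/|S|,\fcal,\|\cdot\|_\infty)/\delta)}$, and combining with Step 2 finishes the proof.

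The lemma is stated for an adapted filtered sequence, so the fixed dataset can be ordered arbitrarily. Its model choices are given by the logging policy, and the noise is conditionally independent given the history. Each of the two claimed bounds holds with probability $1-\delta$ separately, so no union bound is needed between them.

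\textbf{Main obstacle.} The delicate point is Step 2. One must confirm that the supremum in $\Gamma^{(i)}$ legitimately covers the data-dependent pair $(\widehat\pi,\widehat{\mlpw}_S)$, which it does because the definition takes a supremum over all of $\Pi_i\times\wcal$. One must also check that the ratio is well defined; if the denominator vanishes the numerator bound is trivial or treated as $0/0=0$. The remaining work is only constant bookkeeping.
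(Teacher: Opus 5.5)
Your proof is correct and follows the paper's argument essentially step for step. You use the same decomposition with the first term nonpositive, then the coverage coefficient $\Gamma^{(i)}(S)$ to turn the policy-wise accuracy and cost gaps into in-sample error, then Lemma~\ref{lem:mlp_ise_scalar} with $\epsilon=1/|S|$ for $S=\dcal_i$ and $S=\dcal$; your extra points (that $\pi_i^\star\in\Pi_i$ by realizability, that one application of the vector lemma makes the paper's union bound unnecessary, and that $\sigma^2\le \cm^2/4$ needs $\cm\ge 1$) only make explicit what the paper leaves implicit.
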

where $N(\epsilon,\mathcal F,\|\cdot\|_\infty)$ is the covering number of the class of estimators we use in the $\|\cdot\|_{\infty}$ norm, i.e, $\|f(x) - f'(x)\|_{\infty} = \sup_x \|f(x)-f'(x)\|_2$.

\begin{proof}
Let $\mlpw\in\wcal$ be arbitrary model weights, abbreviate $\widehat \pi(\xb):=\widehat \pi_{i,\mlpw}(\xb)$ and $\pi^\star(\xb):=\pi_i^\star(\xb)$, and drop  $\dt_i$ from notation of sampling $\xb$ for conciseness.
By definition,
\begin{align}
\mathrm{Subopt}_i(\mlpw)
&=
\E_{\xb\sim\dt_i}\!\left[U_i(\xb,\pi^\star(\xb)) - U_i(\xb,\widehat \pi(\xb))\right]\nn\\
&=
\E_{\xb}\!\left[\widehat U_i(\xb,\pi^\star(\xb);\mlpw)-\widehat U_i(\xb,\widehat \pi(\xb);\mlpw)\right]
+
\E_{\xb}\!\left[U_i(\xb,\pi^\star(\xb))-\widehat U_i(\xb,\pi^\star(\xb);\mlpw)\right]\nn\\
&\qquad
+
\E_{\xb}\!\left[\widehat U_i(\xb,\widehat \pi(\xb);\mlpw)-U_i(\xb,\widehat \pi(\xb))\right]. \label{eq:subopt_decomp}
\end{align}
The first term in equation \eqref{eq:subopt_decomp} is $\le 0$ because $\widehat \pi(\xb)$ maximizes $\widehat U_i(\xb,\cdot)$ $\forall \xb \in \Xcal$.
Thus,
\begin{align}
\mathrm{Subopt}_i(\mlpw)
&\le
\left|\E_{\xb}\!\left[\aopt(\xb,\pi^\star(\xb))-\aesw(\xb,\pi^\star(\xb))\right]\right|
+
\lmb_i\,\left|\E_{\xb}\!\left[\copt(\xb,\pi^\star(\xb))-\cesw(\xb,\pi^\star(\xb))\right]\right|\nn\\
&\quad+
\left|\E_{\xb}\!\left[\aopt(\xb,\widehat \pi(\xb))-\aesw(\xb,\widehat \pi(\xb)\right]\right|
+
\lmb_i\,\left|\E_{\xb}\!\left[\copt(\xb,\widehat m(\xb))-\cesw(\xb,\widehat m(\xb)\right]\right|\nn\\
&\le
2\max\{1,\lmb_i\}
\left(
\sup_{\pi\in\Pi_i}\left|\E_{\xb}\!\left[\aopt(\xb,\pi(\xb))-\aesw(\xb,\pi(\xb)\right]\right|
+
\sup_{\pi\in\Pi_i}\left|\E_{\xb}\!\left[\copt(\xb,\pi(\xb))-\cesw(\xb,\pi(\xb)\right]\right|
\right).\nn
\end{align}

Now apply the definitions \eqref{eq:Gamma_acc_def}--\eqref{eq:Gamma_cost_def} with a dataset $S$, yielding
\[
\mathrm{Subopt}_i(\mlpw)
\le
2\max\{1,\lmb_i\}\,\Gamma^{(i)}(S)
\left(
\sqrt{\sum_{j=1}^{|S|}\Delta a_j^2}
+
\sqrt{\sum_{j=1}^{|S|}\Delta c_j^2}
\right),
\]
where $\Delta a_j$ and $\Delta c_j$ denote the pointwise estimation errors on $(\xb^{(j)},m^{(j)})$.
Using $\sqrt{u}+\sqrt{v}\le \sqrt{2(u+v)}$ gives
\[
\mathrm{Subopt}_i(\mlpw)
\le
2\sqrt{2}\max\{1,\lmb_i\}\,\Gamma^{(i)}(S)\,
\sqrt{\sum_{j=1}^{|S|}\left(\Delta a_j^2+\Delta c_j^2\right)}.
\]
Finally, apply Lemma~\ref{lem:mlp_ise_scalar} with $\epsilon = \Theta(1/|S|)$ accuracy and cost estimators, and union bound to control the averaged squared errors, which yields
\eqref{eq:mlp_subopt_local} for $S=\dcal_i$ (local-only traning) and \eqref{eq:mlp_subopt_federated} for $S=\dcal$ (federated training).

\end{proof}

\begin{remark}[Federated learning helps with suboptimality.]
To compare the suboptimality bounds of local-only \mlp{} (\Cref{eq:mlp_subopt_local}) on client $i$ and federated \mlp{} (\Cref{eq:mlp_subopt_federated}), we need to compare $\Gamma^{(i)}(\dcal_i)$ vs $\Gamma^{(i)}(\dcal)$, and $\sqrt{\log\!\left(\frac{2\,\mathcal N(1/\D_i,\mathcal F,\|\cdot\|_\infty)}{\delta}\right)}$ vs $\sqrt{\log\!\left(\frac{2\,\mathcal N(1/\D,\mathcal F,\|\cdot\|_\infty)}{\delta}\right)}$. For the former comparison, one can see that since $\dcal_i\subset\dcal$, $\Gamma^{(i)}(\dcal_i) \geq \Gamma^{(i)}(\dcal)$ always holds and decays roughly as $\frac{1}{\sqrt{\D_i}}$ and $\frac{1}{\sqrt{\D}}$, respectively, \citep{wainwright2019high}. For the latter comparison, terms grow with $\log\lp\frac{1}{\D_i}\rp$ and $\log\lp\frac{1}{\D}\rp$, respectively \citep{wainwright2019high}. Since $\sqrt{\cdot}$ grows faster, federated learning provides an advantage in data coverage in terms of smaller suboptimality.  

\end{remark}

\input{kmeansapp}

%% file: kmeansapp.tex

\subsection{Theoretical Results for Federated \kmeans{}}
\label{sec:theory_kmeans}
\paragraph{Setup and notation.}
We reuse the federated data model from \Cref{sec:theory_mlp}.
Fix a test-time tradeoff parameter $\lmb\ge 0$, and let $\dt$ denote the \emph{global test
distribution} on embeddings $\xb\in\X$ (e.g., a mixture of client test distributions).
For each model $m\in\mcal$, recall the conditional expected accuracy and cost functions
$\ac(\xb,m)$ and $\cs(\xb,m)$, and define the \emph{true expected utility}
\begin{equation}
\util_\lmb(\xb,m)
~\triangleq~
\ac(\xb,m) - \lmb\,\cs(\xb,m).
\label{eq:kmeans_util_def}
\end{equation}
The optimal router under this setting is given by
\begin{equation}
\pi^\star(\xb) \in \arg\max_{m\in\mcal} \util_\lmb(\xb,m),
\label{eq:kmeans_bayes_opt_router}
\end{equation}
where all ties are broken uniformly at random and independently of all other randomness.
For any routing policy $\pi:\X\to\mcal$, define the \emph{routing suboptimality}
\begin{equation}
\Subopt(\pi)
~\triangleq~
\E_{\xb\sim\dt}\!\Big[
\util_\lmb\!\big(\xb,\pi^\star(\xb)\big) - \util_\lmb\!\big(\xb,\pi(\xb)\big)
\Big].
\label{eq:kmeans_subopt_def}
\end{equation}

\paragraph{\kmeans{} quantization map.}
Given any collection of centers $\mathbf{\mu} := \mu_{1:\kglobal} \subseteq \R^{d_{\text{emb}}}$ and define the
nearest-center assignment and quantized embedding
\begin{equation}
k_\mu(\xb) = \arg\min_{k \in \{1,\ldots,\kglobal\}} \lVert \xb - \mu_k\rVert_2,
\qquad
\mu(\xb) \triangleq \mu_{k_\mu(\xb)},
\label{eq:kmeans_assign_def}
\end{equation}
with uniform random tie-breaking in $\arg\min$.
Intuitively, $\mu(\xb)$ is the quantized embedding used by the router.

\paragraph{Population piecewise-constant oracle (given centers).}
Given the partition induced by $k_\mu(\cdot)$, define the \emph{cluster-mean utilities}
\begin{equation}
\bar \util_{k}(m)
~\triangleq~
\E_{\xb\sim\dt}\!\Big[\util_\lmb(\xb,m)\,\big|\,k_\mu(\xb)=k\Big],
\qquad k\in[\kglobal],~m\in\mcal.
\label{eq:kmeans_cluster_means_test}
\end{equation}
The best router that is allowed to use \emph{only} the cluster index (equivalently, only the representative
$\mu(\xb)$) is the \emph{piecewise constant oracle}
\begin{equation}
\pi_{\mu}(\xb) \in \arg\max_{m\in\mcal} \bar \util_{k_\mu(\xb)}(m).
\label{eq:kmeans_pop_pc_router}
\end{equation}

\paragraph{Implemented \kmeans{} router (cluster-level estimation).}
Our implementation estimates, for each cluster $k$ and model $m$, the cluster means of accuracy/cost using the
logged data, yielding estimators $\aest[(m)][k]$ and $\cest[(m)][k]$ and the estimated cluster utility
\begin{equation}
\widehat{\util}_{k}(m)
~\triangleq~
\aest[(m)][k] - \lmb\,\cest[(m)][k].
\label{eq:kmeans_hat_util_def}
\end{equation}
Given a query embedding $\xb$, the resulting \kmeans{} routing rule is
\begin{equation}
\widehat \pi(\xb)
=
\arg\max_{m\in\mcal}\widehat{\util}_{k_\mu(\xb)}(m),
\label{eq:kmeans_emp_router}
\end{equation}
again with uniform random tie-breaking.

\begin{assumption}
\label{ass:split}
    [Sample splitting during dataset construction and uniform logging (for analysis).] Let $\dx_i$ denote the query distribution at client $i$. The dataset $\dcal_i := \dc_i \bigcup \du_i$ at client $i$ is constructed as follows  
\begin{enumerate}
    \item $\dc_i$ construction : Sample $\D_i/2$ queries IID from $\dx_i$ and for each query \textit{arbitrarily} select a model $m\in\mcal$, observes $(\widehat \ac,\widehat \cs)$ with conditional means
$\E[\widehat \ac\mid \xb,M]=\ac(\xb,m)$ and $\E[\widehat \cs\mid \xb,M]=\cs(\xb,m)$ and appends $\{\xb,m, \widehat\ac,\widehat\cs\}$ to the dataset $\dc_i$.
    \item $\du_i$ construction : Sample $\D_i/2$ queries IID from $\dx_i$ and then for each query $\xb$ draws a logged model
$M\sim\mathrm{Unif}(\mcal)$ independently of $\xb$, observes $(\widehat \ac,\widehat \cs)$ with conditional means
$\E[\widehat \ac\mid \xb,M]=\ac(\xb,m)$ and $\E[\widehat \cs\mid \xb,M]=\cs(\xb,m)$ and appends $\{\xb,m, \widehat\ac,\widehat\cs\}$ to the dataset $\du_i$.
\end{enumerate}
\end{assumption}

Using a disjoint sample split for estimations, we enforce statistical independence between the centroid estimates $\mu_{1:\kglobal}$ (using points in dataset $\dc_i$) and the entities $\{\aest[(m)][k],\cest[(m)][k]\}_{k,m}$ (using the points in the dataset $\du_i$). Consequently, conditioning on the learned centers (and therefore on $k_\mu(\cdot)$) does not affect the distributional behavior of the utility estimators.

This split assumption dictates that we have uniform coverage over models $m\in\mcal$ for every prompt in the utility logging dataset. We need this assumption because K-means relies solely on euclidean proximity in embedding space to form clusters, it cannot capture structured correlations in model behaviors. In particular, neural models can potentially exploit these correlated performance patterns that are not explained by embedding similarity alone to generalize better. Particularly, K-means cannot capture the notion that ``similar" models will have ``similar" performance and only use the fact that ``similar" prompts will get ``similar" performance.
\subsubsection{A regret decomposition: quantization + estimation}

\begin{assumption}[Lipschitz utilities in embedding space]
\label{asmp:kmeans_lipschitz}
There exist constants $L_{\acc},L_{\cost}\ge 0$ such that for all $m\in\mcal$ and all $\xb,\xb'\in\R^{d_{\text{emb}}}$,
\[
\big|\ac(\xb,m)-\ac(\xb',m)\big| \le L_{\acc}\,\lVert \xb-\xb'\rVert_2,
\qquad
\big|\cs(\xb,m)-\cs(\xb',m)\big| \le L_{\cost}\,\lVert \xb-\xb'\rVert_2.
\]
Consequently, $\util_\lmb(\cdot,m)$ is $L_\lmb$-Lipschitz with $L_\lmb \triangleq L_{\acc} + \lmb L_{\cost}$.
\end{assumption}

\begin{proposition}[Decomposition into quantization and estimation terms]
\label{lem:kmeans_decomp}
For any fixed centers $\mu_{1:\kglobal}$ (hence fixed $k_\mu(\cdot)$),
\begin{align}
\Subopt(\widehat \pi)
&=
\E_{\xb\sim\distr^{\mathrm{test}}}\!\Big[\util_\lmb\!\big(\xb,\pi^\star(\xb)\big)
-\util_\lmb\!\big(\xb,\pi_{\mu}(\xb)\big)\Big]
+
\E_{\xb\sim\distr^{\mathrm{test}}}\!\Big[\util_\lmb\!\big(\xb,\pi_{\mu}(\xb)\big)
-\util_\lmb\!\big(\xb,\widehat \pi(\xb)\big)\Big].
\label{eq:kmeans_decomp}
\end{align}
\end{proposition}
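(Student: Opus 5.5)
This identity follows from adding and subtracting $\util_\lmb(\xb,\pi_\mu(\xb))$ inside the expectation that defines $\Subopt(\widehat\pi)$ in \eqref{eq:kmeans_subopt_def}, and then using linearity of expectation. I will carry it out in three short steps.

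First, fix the centers $\mu_{1:\kglobal}$. Then $k_\mu(\cdot)$, the cluster-mean utilities $\bar\util_k(m)$ in \eqref{eq:kmeans_cluster_means_test}, and the oracle $\pi_\mu$ in \eqref{eq:kmeans_pop_pc_router} are all deterministic functions, apart from the independent uniform tie-breaking.

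Second, for every $\xb$ I write
\[
\util_\lmb(\xb,\pi^\star(\xb))-\util_\lmb(\xb,\widehat\pi(\xb))
=\big[\util_\lmb(\xb,\pi^\star(\xb))-\util_\lmb(\xb,\pi_\mu(\xb))\big]
+\big[\util_\lmb(\xb,\pi_\mu(\xb))-\util_\lmb(\xb,\widehat\pi(\xb))\big].
\]

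Third, I take $\E_{\xb\sim\dt}$ of both sides and split the expectation of the sum into the sum of expectations. The right-hand side is then exactly the right-hand side of \eqref{eq:kmeans_decomp}.

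The only point that needs care is justifying the linear split. It requires that each term be integrable. By assumption $\ac\in[0,1]$ and $\cs\in[0,\cm]$, so $|\util_\lmb(\xb,m)|\le 1+\lmb\cm$ for every $\xb$ and $m$. The three policies take values in the finite set $\mcal$, so each term is bounded by $2(1+\lmb\cm)$ and is therefore integrable.

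Randomized tie-breaking in $\pi^\star$, $\pi_\mu$ and $\widehat\pi$ needs one remark. I interpret $\E$ as also averaging over that independent randomness. The same pointwise identity holds for every realization of the tie-breaking, so the split is still valid.

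For $\widehat\pi$, the estimators $\widehat\util_k(m)$ are random because they are built from $\du_i$. There are two readings. Under the first, the statement holds conditionally on a fixed realization of $\widehat\util$, in which case $\widehat\pi$ is just another fixed policy. Under the second, the expectation is also taken over the data. Both readings are fine because the identity holds pointwise, and by Assumption~\ref{ass:split} this conditioning does not interact with the centers.

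I do not expect a real obstacle here; the content of the result is bookkeeping. The remaining work is to state clearly that $\Subopt(\widehat\pi)$ is understood conditionally on the learned centers and on the realized estimates. The first term is then the quantization error and the second is the estimation error, which is how later results will use the decomposition.
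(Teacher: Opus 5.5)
Your proposal is correct and matches the paper, which gives no separate proof of this decomposition because it is just adding and subtracting $U_\lambda(\mathbf{x},\pi_\mu(\mathbf{x}))$ inside the expectation and using linearity (the paper calls this kind of step a ``standard add-and-subtract argument'' elsewhere); the split is legitimate because all utilities are bounded in absolute value by $1+\lambda c_{\max}$. Your remarks on randomized tie-breaking and on conditioning on the realized estimates are harmless but not needed, since the identity holds pointwise for any fixed policy $\widehat\pi$.
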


\subsubsection{Quantization term}
\begin{proposition}[Quantization error controls the population piecewise-constant regret]
\label{prop:kmeans_quantization}
Under Assumption~\ref{asmp:kmeans_lipschitz},
\begin{equation}
\E_{\xb\sim\distr^{\mathrm{test}}}\!\Big[
\util_\lmb\!\big(\xb,\pi^\star(\xb)\big)-\util_\lmb\!\big(\xb,\pi_{\mu}(\xb)\big)
\Big]
~\le~
2L_\lmb\,\E_{\xb\sim\distr^{\mathrm{test}}}\!\big[\lVert \xb - \mu(\xb)\rVert_2\big].
\label{eq:kmeans_quantization_bound}
\end{equation}
\end{proposition}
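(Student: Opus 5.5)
We bound the quantization term pointwise after inserting a surrogate router that uses the true utility evaluated at the cluster representative. For a fixed $\xb$, write $k=k_\mu(\xb)$ and define $\tilde\pi(\xb)\in\arg\max_{m\in\mcal}\util_\lmb(\mu(\xb),m)$, i.e.\ the router that evaluates the true utility at the quantized embedding. The plan is to (i) bound the regret of $\tilde\pi$ pointwise by $2L_\lmb\lVert\xb-\mu(\xb)\rVert_2$, and (ii) show that $\pi_\mu$ does at least as well as $\tilde\pi$ in expectation, since $\pi_\mu$ is the best router measurable with respect to the cluster index and $\tilde\pi$ is such a router.

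For step (i), we use Assumption~\ref{asmp:kmeans_lipschitz} through the standard three-term argument:
\[
\util_\lmb(\xb,\pi^\star(\xb))\le \util_\lmb(\mu(\xb),\pi^\star(\xb))+L_\lmb\lVert\xb-\mu(\xb)\rVert_2\le \util_\lmb(\mu(\xb),\tilde\pi(\xb))+L_\lmb\lVert\xb-\mu(\xb)\rVert_2.
\]
Applying Lipschitzness once more gives $\util_\lmb(\mu(\xb),\tilde\pi(\xb))\le\util_\lmb(\xb,\tilde\pi(\xb))+L_\lmb\lVert\xb-\mu(\xb)\rVert_2$. Combining the two displays yields the pointwise bound with constant $2L_\lmb$, and taking expectations over $\dt$ finishes step (i).

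For step (ii), note that $\tilde\pi(\xb)$ depends on $\xb$ only through $k$, since $\mu(\xb)=\mu_k$. Writing the expectation by conditioning on clusters,
\[
\E[\util_\lmb(\xb,\pi(\xb))]=\sum_k \Pr(k_\mu(\xb)=k)\,\bar\util_k(m_k)
\]
for any cluster-constant policy $\pi$ that selects $m_k$ on cluster $k$. This uses \eqref{eq:kmeans_cluster_means_test}. Since $\pi_\mu$ maximizes $\bar\util_k(\cdot)$ in each cluster, we get $\E[\util_\lmb(\xb,\pi_\mu(\xb))]\ge\E[\util_\lmb(\xb,\tilde\pi(\xb))]$. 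Substituting this into step (i) gives \eqref{eq:kmeans_quantization_bound}.

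The main subtlety lies in step (ii), specifically in the tie-breaking randomness in $k_\mu$ and in the $\arg\max$. We handle it by conditioning on the independent tie-breaking variables, or equivalently by noting that the cluster-constant decomposition holds for any randomized cluster-measurable rule, because the randomization is independent of $\xb$. A second point to check is that the clusters have positive mass. Clusters with zero mass contribute nothing and can be ignored, so $\bar\util_k$ needs to be defined only for clusters of positive mass.
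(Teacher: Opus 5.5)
Your proposal is correct and follows the paper's proof. The paper also inserts the centroid-oracle policy $\pi_{\mathrm{cen}}(\xb)\in\arg\max_{m}\util_\lmb(\mu(\xb),m)$, bounds its regret by $2L_\lmb\,\E\lVert\xb-\mu(\xb)\rVert_2$ via the same add-and-subtract Lipschitz argument, and concludes because $\pi_\mu$ is optimal among policies that depend only on the cluster index. Your explicit cluster-conditioning identity and your remarks on tie-breaking and zero-mass clusters spell out details the paper leaves implicit.
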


\begin{proof}
Define the centroid-oracle policy $\pi_{\mathrm{cen}}(\xb)=\arg\max_{m\in\mcal}\util_\lmb(\mu(\xb),m)$.
A standard add-and-subtract argument plus Lipschitzness gives
\(
\E[\util_\lmb(\xb,\pi^\star(\xb))-\util_\lmb(\xb,\pi_{\mathrm{cen}}(\xb))]
\le 2L_\lmb\,\E\|\xb-\mu(\xb)\|_2.
\)
Since $\pi_{\mu}$ maximizes the cluster-mean utility in \eqref{eq:kmeans_pop_pc_router} over all policies that
depend only on $k_\mu(\xb)$, it is at least as good as $\pi_{\mathrm{cen}}$ within each cluster, hence has no greater regret.
\end{proof}

\subsubsection{Estimation term (uniform logging + shift + concentration)}

\paragraph{Logged vs.\ test distributions.}

Let $\dx$ denote the global distribution of all utility split embeddings across clients
used to estimate $\{\aest[(m)][k],\cest[(m)][k]\}$.
Define the corresponding logged cluster means
\begin{equation}
\bar \util^{\mathrm{log}}_{k}(m)
~\triangleq~
\E_{\xb\sim\dx}\!\Big[\util_\lmb(\xb,m)\,\big|\,k_\mu(\xb)=k\Big].
\label{eq:kmeans_cluster_means_log}
\end{equation}

\paragraph{Cell counts and empirical utility.}
For each $(k,m)$, let $n_{k,m}$ denote the \emph{global} number of logged evaluations in the utility split
whose embedding falls in cluster $k$ and whose logged model is $m$ (after federated aggregation across clients).
Equivalently, if the client $i$ contributes $n_{i,k,m}$ such evaluations, then $n_{k,m}=\sum_i n_{i,k,m}$.
Define the empirical cluster utility (which is equal to \eqref{eq:kmeans_hat_util_def} in our notation)
\begin{equation}
\widehat{\util}_{k}(m)
~=~
\frac{1}{n_{k,m}}\sum_{t:\,k_\mu(\xb_t)=k,\,M_t=m}\big(\tilde a_t-\lmb \tilde c_t\big),
\qquad\text{when }n_{k,m}\ge 1.
\label{eq:kmeans_emp_cell_mean}
\end{equation}
Since $\tilde a_t\in[0,1]$ and $\tilde c_t\in[0,c_{\max}]$, the utility per-sample 
$\tilde a_t-\lmb \tilde c_t$ lies in $[-\lmb c_{\max},1]$.

\begin{proposition}[Train--test shift within clusters]
For each cluster $k$, define the conditional distributions
$\dx(\cdot\mid k)$ and $\dt(\cdot\mid k)$ induced by conditioning on $k_\mu(\xb)=k$,
and define the per-cluster Wasserstein-1 shift
\begin{equation}
\Delta_k
~\triangleq~
W_1\!\Big(\dx(\cdot\mid k),\,\dt(\cdot\mid k)\Big).
\label{eq:kmeans_delta_def}
\end{equation}
Then for any $L$-Lipschitz function $f$,
$\left|\E_{\dx(\cdot\mid k)}[f]-\E_{\dt(\cdot\mid k)}[f]\right|\le L\Delta_k$
\end{proposition}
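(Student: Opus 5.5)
This is the Kantorovich--Rubinstein duality applied to the two cluster-conditional distributions; I will prove directly the direction of the duality that is needed. Fix a cluster $k$ and an $L$-Lipschitz function $f:\X\to\R$. I will first dispose of the degenerate case $L=0$: then $f$ is constant, the two expectations coincide, and the bound is trivial. For $L>0$, set $g:=f/L$, which is $1$-Lipschitz. It then suffices to show
\[
\left|\E_{\dx(\cdot\mid k)}[g]-\E_{\dt(\cdot\mid k)}[g]\right|\le \Delta_k .
\]

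To prove this I will work from the primal (coupling) definition of $W_1$, which is the one meant in \eqref{eq:kmeans_delta_def}:
\[
W_1(P,Q)=\inf_{\gamma\in\Gamma(P,Q)}\E_{(\xb,\xb')\sim\gamma}\bigl[\lVert \xb-\xb'\rVert_2\bigr],
\]
where $\Gamma(P,Q)$ denotes the set of couplings with marginals $P=\dx(\cdot\mid k)$ and $Q=\dt(\cdot\mid k)$. Take any coupling $\gamma$. Because the marginals are exactly $P$ and $Q$,
\[
\E_P[g]-\E_Q[g]=\E_\gamma\bigl[g(\xb)-g(\xb')\bigr].
\]
Moving the absolute value inside the expectation and applying the Lipschitz property pointwise gives
\[
\left|\E_P[g]-\E_Q[g]\right|\le \E_\gamma\bigl[\lvert g(\xb)-g(\xb')\rvert\bigr]\le \E_\gamma\bigl[\lVert \xb-\xb'\rVert_2\bigr].
\]
The left-hand side does not depend on $\gamma$, so I can take the infimum over $\gamma$ on the right, which yields $\Delta_k$. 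Multiplying through by $L$ then gives the claimed bound $L\Delta_k$.

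The only real obstacle is a technical one: the expectations must be well defined, and the set of couplings must be nonempty. Nonemptiness is immediate because the product coupling $P\otimes Q$ always exists. For integrability, I will assume that both conditional laws have finite first moment, which is implicit whenever $\Delta_k<\infty$. Under that assumption, $\lvert g(\xb)\rvert\le \lvert g(\xb_0)\rvert+\lVert \xb-\xb_0\rVert_2$ for a fixed reference point $\xb_0$, so $g$ is integrable under both $P$ and $Q$. If $\Delta_k=\infty$, the inequality holds vacuously.

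A further minor point is that the conditional distributions are well defined only when the cluster $k$ has positive mass under both $\dx$ and $\dt$. I will state this as a standing convention; clusters with zero test mass contribute nothing to \eqref{eq:kmeans_subopt_def} anyway. In the subsequent regret bound, this proposition will be applied with $f=\util_\lmb(\cdot,m)$ and $L=L_\lmb$ from Assumption~\ref{asmp:kmeans_lipschitz}, giving $\lvert \bar\util^{\mathrm{log}}_k(m)-\bar\util_k(m)\rvert\le L_\lmb\Delta_k\le L_\lmb\Delta_{\max}$.
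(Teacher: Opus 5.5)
Your argument is correct and essentially matches the paper, which justifies the statement only by citing Kantorovich--Rubinstein duality; you prove directly from the coupling definition the weak-duality direction that the statement needs, which avoids the full duality theorem. One small quibble: $\Delta_k<\infty$ does not by itself force finite first moments (for $P=Q$ heavy-tailed, the diagonal coupling gives $W_1=0$), but all you need is integrability of $f$ under both conditional laws, and this is automatic in the application because $\util_\lmb(\cdot,m)$ is bounded.
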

(Any equivalent definition of $W_1$ via optimal couplings is also acceptable; we use it only through
the standard Lipschitz transfer bound, cf.\ Kantorovich--Rubinstein duality \cite{chewi2024statistical}.)

\begin{lemma}[Uniform deviation to the \emph{test} cluster means]
\label{lem:kmeans_uniform_dev}
Under Assumptions \ref{ass:split} and ~\ref{asmp:kmeans_lipschitz}.
Fix the learned centers (equivalently, condition on the centroid split).
Then for any $\delta\in(0,1)$, with probability at least $1-\delta$, simultaneously for all $(k,m)$ with $n_{k,m}\ge 1$,
\begin{equation}
\Big|\widehat{\util}_{k}(m) - \bar \util_{k}(m)\Big|
~\le~
L_\lmb\,\Delta_k
~+~
B_\lmb\sqrt{\frac{\log\!\big(\tfrac{2\kglobal|\mcal|}{\delta}\big)}{2\,n_{k,m}}},
\qquad
B_\lmb \triangleq 1+\lmb \cm.
\label{eq:kmeans_uniform_dev_bound}
\end{equation}
\end{lemma}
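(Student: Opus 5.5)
The plan is to split the deviation into a concentration term and a shift term via the triangle inequality through the logged cluster mean $\bar\util^{\mathrm{log}}_{k}(m)$ of \eqref{eq:kmeans_cluster_means_log}:
\[
\Big|\widehat{\util}_{k}(m) - \bar \util_{k}(m)\Big|
\le
\Big|\widehat{\util}_{k}(m) - \bar \util^{\mathrm{log}}_{k}(m)\Big|
+
\Big|\bar \util^{\mathrm{log}}_{k}(m) - \bar \util_{k}(m)\Big|.
\]
The second (deterministic, given the centers) term is handled directly by the train--test shift proposition. Under Assumption~\ref{asmp:kmeans_lipschitz}, $\util_\lmb(\cdot,m)$ is $L_\lmb$-Lipschitz. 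The two quantities are exactly expectations of this function under $\dx(\cdot\mid k)$ and $\dt(\cdot\mid k)$, so Kantorovich--Rubinstein gives the bound $L_\lmb\Delta_k$.

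For the first term, I would condition on the centroid split $\bigcup_i\dc_i$. By Assumption~\ref{ass:split} this fixes $k_\mu(\cdot)$ and leaves the utility split $\bigcup_i\du_i$ independent of it. I would also condition on the index set of samples in cell $(k,m)$, i.e.\ on the counts $n_{k,m}$.

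The key step is to show that the samples landing in cell $(k,m)$ are i.i.d.\ with mean $\bar\util^{\mathrm{log}}_{k}(m)$. Each utility-split point draws $\xb\sim\dx$ (interpreting $\dx$ as the pooled mixture across clients) and then $M\sim\mathrm{Unif}(\mcal)$ independently of $\xb$. Hence the conditional law of $\xb$ given $\{k_\mu(\xb)=k, M=m\}$ equals $\dx(\cdot\mid k)$. Uniform logging is precisely what prevents model selection from biasing which queries enter the cell. By the tower property, each per-sample utility $\tilde a_t-\lmb\tilde c_t$ then has conditional mean $\E_{\dx(\cdot\mid k)}[\ac(\xb,m)-\lmb\cs(\xb,m)]=\bar\util^{\mathrm{log}}_{k}(m)$. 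These variables are independent and lie in the interval $[-\lmb\cm,1]$ of length $B_\lmb$.

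Hoeffding's inequality, applied conditionally on $n_{k,m}$, then gives
\[
\Pr\Big(\big|\widehat{\util}_{k}(m)-\bar\util^{\mathrm{log}}_{k}(m)\big|>\epsilon\Big)\le 2\exp\!\big(-2n_{k,m}\epsilon^2/B_\lmb^2\big).
\]
I would set the right-hand side to $\delta/(\kglobal|\mcal|)$. Since this bound holds for every value of $n_{k,m}$, it also holds after removing the conditioning. A union bound over the at most $\kglobal|\mcal|$ pairs with $n_{k,m}\ge1$ then yields the second term of \eqref{eq:kmeans_uniform_dev_bound}. The resulting $\log(2\kglobal|\mcal|/\delta)$ matches the statement.

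I expect the main obstacle to be the conditioning argument: justifying that, given the random counts and fixed centers, the in-cell samples are i.i.d.\ from the correct conditional law. The sample-splitting and uniform-logging assumptions are exactly what make this go through. Heterogeneous client distributions only enter through the mixture $\dx$, which I would handle by treating each pooled point as drawn from the mixture. If that is not literally true because client sizes are fixed, the samples are still independent with a common cell-conditional mean when $\dx$ is defined as the size-weighted mixture. Hoeffding needs only independence and boundedness, and the averaged mean is then $\bar\util^{\mathrm{log}}_{k}(m)$, so the argument still goes through.
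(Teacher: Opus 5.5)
Your argument is essentially the paper's proof. It uses the same triangle inequality through $\bar\util^{\mathrm{log}}_{k}(m)$, then Hoeffding on in-cell utilities of range $B_\lmb$ conditional on the centers with a union bound over the $\kglobal|\mcal|$ cells, then the Lipschitz/Kantorovich--Rubinstein transfer for the shift term. Conditioning explicitly on the in-cell index set is a more careful version of the paper's ``condition on the selected indices'' step. This main line is sound under your primary reading, that the pooled utility-split points are i.i.d.\ draws from $\dx$; the paper's proof implicitly relies on the same reading.

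The one claim to drop is your closing fallback for fixed client sizes. This is what Assumption~\ref{ass:split} literally specifies ($\D_i/2$ points drawn from each $\dx_i$). In that case an in-cell point from client $i$ has conditional mean $\E_{\xb\sim\dx_i}[\util_\lmb(\xb,m)\mid k_\mu(\xb)=k]$, which varies with $i$, so the in-cell samples do \emph{not} share a common mean. Conditional on the in-cell index set, Hoeffding therefore centers $\widehat\util_{k}(m)$ at the average of these client means weighted by the realized proportions $n_{i,k,m}/n_{k,m}$. By contrast, $\bar\util^{\mathrm{log}}_{k}(m)$ for the size-weighted mixture uses deterministic weights proportional to $\D_i\Pr_{\xb\sim\dx_i}(k_\mu(\xb)=k)$. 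The gap between these two centerings is random, and your argument does not control it. To cover the literal fixed-size construction, you would need either to state the i.i.d.-mixture sampling model as an explicit assumption, or to add a separate bound on how the per-client proportions fluctuate within each cell.
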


\begin{proof}
Fix $(k,m)$ and condition on the event selecting the indices with $k_\mu(\xb)=k$ and $M=m$.
By Assumption~\ref{ass:split}, $M$ is independent of $\xb$, so the selected samples are unbiased for
the cluster-conditional law under $\dx(\cdot\mid k)$, and thus
$\E[\widehat{\util}_{k}(m)\mid \mu_{1:\kglobal}] = \bar \util^{\mathrm{log}}_{k}(m)$.
Hoeffding's inequality for bounded variables in $[-\lmb c_{\max},1]$ gives
\[
\Big|\widehat{\util}_{k}(m) - \bar \util^{\mathrm{log}}_{k}(m)\Big|
\le
B_\lmb\sqrt{\frac{\log\!\big(\tfrac{2\kglobal|\mcal|}{\delta}\big)}{2\,n_{k,m}}}
\]
uniformly over $(k,m)$ by a union bound.
Finally, since $\util_\lmb(\cdot,m)$ is $L_\lmb$-Lipschitz (Assumption~\ref{asmp:kmeans_lipschitz}),
the standard Lipschitz transfer bound yields
$
|\bar \util^{\mathrm{log}}_{k}(m)-\bar \util_{k}(m)| \le L_\lmb \Delta_k,
$
and the claim follows by the triangle inequality.
\end{proof}

\begin{proposition}[Estimation error term]
\label{prop:kmeans_estimation}
Conditioned on the event of Lemma~\ref{lem:kmeans_uniform_dev}, the estimation term in \eqref{eq:kmeans_decomp} satisfies
\begin{equation}
\E_{\xb\sim\distr^{\mathrm{test}}}\!\Big[
\util_\lmb\!\big(\xb,\pi_{\mu}(\xb)\big)-\util_\lmb\!\big(\xb,\widehat \pi(\xb)\big)
\Big]
~\le~
2\,\E_{\xb\sim\distr^{\mathrm{test}}}\!\Big[
\max_{m\in\mcal}\Big|\widehat{\util}_{k_\mu(\xb)}(m)-\bar \util_{k_\mu(\xb)}(m)\Big|
\Big].
\label{eq:kmeans_estimation_term_bound}
\end{equation}
In particular, letting $n_{\min}\triangleq \min_{k,m:\,n_{k,m}\ge 1} n_{k,m}$ and $\Delta_{\max}\triangleq \max_k \Delta_k$,
\begin{equation}
\E_{\xb\sim\distr^{\mathrm{test}}}\!\Big[
\util_\lmb\!\big(\xb,\pi_{\mu}(\xb)\big)-\util_\lmb\!\big(\xb,\widehat \pi(\xb)\big)
\Big]
~\le~
2L_\lmb\,\Delta_{\max}
~+~
2B_\lmb\sqrt{\frac{\log\!\big(\tfrac{2\kglobal|\mcal|}{\delta}\big)}{2\,n_{\min}}}.
\label{eq:kmeans_estimation_term_bound_simple}
\end{equation}
\end{proposition}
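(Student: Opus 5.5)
The plan is to prove the first inequality \eqref{eq:kmeans_estimation_term_bound} pointwise in $\xb$ by the standard ``plug-in argmax'' argument. Then I integrate over $\xb\sim\dt$ and insert the uniform bound of Lemma~\ref{lem:kmeans_uniform_dev} to get \eqref{eq:kmeans_estimation_term_bound_simple}.

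\textbf{Step 1: reduce to cluster-level utilities.} Both $\pi_\mu$ and $\widehat\pi$ depend on $\xb$ only through $k=k_\mu(\xb)$. So I condition on the cluster and use the tower property:
\[
\E_{\xb\sim\dt}\big[\util_\lmb(\xb,\pi_\mu(\xb))-\util_\lmb(\xb,\widehat\pi(\xb))\big]
=\sum_k \dt(k_\mu(\xb)=k)\big(\bar\util_k(m_k^\mu)-\bar\util_k(\widehat m_k)\big),
\]
where $m_k^\mu\in\arg\max_m\bar\util_k(m)$ and $\widehat m_k\in\arg\max_m\widehat\util_k(m)$. The random tie-breaking is independent of $\xb$ within a cluster, so it can be averaged separately. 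This step is exactly why \eqref{eq:kmeans_cluster_means_test} uses the test conditional law, and why the quantization term and the estimation term separate cleanly.

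\textbf{Step 2: argmax perturbation.} For each cluster $k$ I add and subtract the estimated utilities:
\[
\bar\util_k(m_k^\mu)-\bar\util_k(\widehat m_k)
=\big[\bar\util_k(m_k^\mu)-\widehat\util_k(m_k^\mu)\big]+\big[\widehat\util_k(m_k^\mu)-\widehat\util_k(\widehat m_k)\big]+\big[\widehat\util_k(\widehat m_k)-\bar\util_k(\widehat m_k)\big].
\]
The middle bracket is $\le 0$ because $\widehat m_k$ maximizes $\widehat\util_k$. Each outer bracket is at most $\max_m|\widehat\util_k(m)-\bar\util_k(m)|$. Summing with the cluster weights gives $2\,\E_{\xb\sim\dt}\big[\max_m|\widehat\util_{k_\mu(\xb)}(m)-\bar\util_{k_\mu(\xb)}(m)|\big]$, which is \eqref{eq:kmeans_estimation_term_bound}.

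\textbf{Step 3: plug in the concentration event.} On the event of Lemma~\ref{lem:kmeans_uniform_dev}, every deviation is bounded by $L_\lmb\Delta_k+B_\lmb\sqrt{\log(2\kglobal|\mcal|/\delta)/(2n_{k,m})}$. I bound $\Delta_k\le\Delta_{\max}$ and $n_{k,m}\ge n_{\min}$. The resulting bound no longer depends on $k$ or $m$, so the expectation over $\xb$ is trivial, and the factor $2$ yields \eqref{eq:kmeans_estimation_term_bound_simple}.

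\textbf{Main obstacle.} The delicate point is cells with $n_{k,m}=0$. There $\widehat\util_k(m)$ is undefined, but such an $m$ could still be $m_k^\mu$. The lemma controls only cells with $n_{k,m}\ge1$, and $n_{\min}$ is defined over those cells only.

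I would handle this with the uniform-logging split of Assumption~\ref{ass:split}. If every cluster that carries positive test mass has $n_{k,m}\ge1$ for all $m$, the issue disappears. I would state this as a standing convention, equivalently reading $n_{\min}$ as $\min_{k,m}n_{k,m}$ over clusters with $\dt(k)>0$, which matches the informal theorem. Otherwise the bound is vacuous, since $n_{\min}$ would effectively be $0$.

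A secondary check is that conditioning on the centers does not bias the estimates. This is supplied by the sample splitting in Assumption~\ref{ass:split} and is already used inside Lemma~\ref{lem:kmeans_uniform_dev}.
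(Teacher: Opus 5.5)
Your proposal is correct and follows the paper's proof essentially step for step. You condition on the cluster $k=k_\mu(\xb)$ so that both policies reduce to comparing $\bar\util_k$ at two models. You then add and subtract $\widehat\util_k$, using that $\widehat\pi$ maximizes $\widehat\util_k$, which gives $2\max_m|\widehat\util_k(m)-\bar\util_k(m)|$. Finally you insert Lemma~\ref{lem:kmeans_uniform_dev} with $\Delta_k\le\Delta_{\max}$ and $n_{k,m}\ge n_{\min}$.

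Your caveat about empty cells ($n_{k,m}=0$) is a genuine point that the paper's proof passes over silently. The convention you adopt, that every cell in a cluster of positive test mass is nonempty, is what both displayed inequalities implicitly require in order to be well defined.
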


\begin{proof}
Fix $\xb$ and write $k=k_\mu(\xb)$. By construction,
\[
\E\!\Big[\util_\lmb(\xb,\pi_{\mu}(\xb))-\util_\lmb(\xb,\widehat \pi(\xb))\,\Big|\,k_\mu(\xb)=k\Big]
=
\bar\util_k\!\big(\pi_{\mu}(\mu_k)\big)-\bar\util_k\!\big(\widehat \pi(\mu_k)\big).
\]
Since $\widehat \pi(\mu_k) = \arg\max_{m\in\mcal}\widehat{\util}_{k}(m)$, we have
\begin{align*}
\bar\util_k\!\big(\pi_{\mu}(\mu_k)\big)-\bar\util_k\!\big(\widehat \pi(\mu_k)\big)
&\le
\bar\util_k\!\big(\pi_{\mu}(\mu_k)\big) - \widehat{\util}_k \big(\pi_{\mu}(\mu_k)\big)
+
\widehat{\util}_k\!\big(\widehat \pi(\mu_k)\big) -\bar\util_k\big(\widehat \pi(\mu_k)\big)\\
&\le
2\max_{m\in\mcal}\big|\widehat{\util}_k(m)-\bar\util_k(m)\big|.
\end{align*}
Taking the expectation over $k$ yields $\xb\sim\dt$ giving the equation \eqref{eq:kmeans_estimation_term_bound}, and equation
\eqref{eq:kmeans_estimation_term_bound_simple} follows by applying the Lemma~\ref{lem:kmeans_uniform_dev}
and then upper bounding $n_{k,m}\ge n_{\min}$ and $\Delta_k\le \Delta_{\max}$.
\end{proof}

\begin{theorem}[Regret bound for piecewise-constant \kmeans{} routing]
\label{thm:kmeans_regret}
Assume sample splitting, Assumptions~\ref{asmp:kmeans_lipschitz} and \ref{ass:split}.
Then for any $\delta\in(0,1)$, with probability at least $1-\delta$,
\begin{equation}
\Subopt(\widehat \pi)
~\le~
2L_\lmb\,\E_{\xb\sim\distr^{\mathrm{test}}}\!\big[\lVert \xb - \mu(\xb)\rVert_2\big]
~+~
2L_\lmb\,\Delta_{\max}
~+~
2B_\lmb\sqrt{\frac{\log\!\big(\tfrac{2\kglobal|\mcal|}{\delta}\big)}{2\,n_{\min}}}.
\label{eq:kmeans_final_bound}
\end{equation}
\end{theorem}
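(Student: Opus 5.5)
The bound in \Cref{thm:kmeans_regret} follows by assembling the pieces already established. We condition throughout on the learned centers $\mu_{1:\kglobal}$. By \Cref{ass:split}, these are computed only from the centroid splits $\dc_i$, so they are independent of the utility splits $\du_i$ used to form $\{\aest[(m)][k],\cest[(m)][k]\}$. With the centers fixed, the partition $k_\mu(\cdot)$ is deterministic. We can then start from the exact decomposition of \Cref{lem:kmeans_decomp}, which writes $\Subopt(\widehat\pi)$ as a quantization term (regret of $\pi^\star$ against the piecewise-constant oracle $\pi_\mu$) plus an estimation term (regret of $\pi_\mu$ against $\widehat\pi$). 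This identity is just adding and subtracting $\E[\util_\lmb(\xb,\pi_\mu(\xb))]$, so it needs no probabilistic argument.

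For the quantization term, invoke \Cref{prop:kmeans_quantization} under \Cref{asmp:kmeans_lipschitz}. It gives the deterministic bound $2L_\lmb\,\E_{\xb\sim\dt}\|\xb-\mu(\xb)\|_2$ for any fixed centers. This holds in particular for the learned centers, conditional on the centroid split.

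For the estimation term, work on the event $\mathcal E$ of \Cref{lem:kmeans_uniform_dev}. Conditional on the centers, $\mathcal E$ has probability at least $1-\delta$, and on it every $(k,m)$ with $n_{k,m}\ge1$ satisfies $|\widehat\util_k(m)-\bar\util_k(m)|\le L_\lmb\Delta_k+B_\lmb\sqrt{\log(2\kglobal|\mcal|/\delta)/(2n_{k,m})}$. Then \Cref{prop:kmeans_estimation} bounds the estimation term by $2\,\E_{\xb\sim\dt}\max_m|\widehat\util_{k_\mu(\xb)}(m)-\bar\util_{k_\mu(\xb)}(m)|$. Replacing $\Delta_k$ by $\Delta_{\max}$ and $n_{k,m}$ by $n_{\min}$ makes the bound uniform in $k$. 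The expectation over $\xb$ is therefore harmless and yields $2L_\lmb\Delta_{\max}+2B_\lmb\sqrt{\log(2\kglobal|\mcal|/\delta)/(2n_{\min})}$. Adding the two terms gives \eqref{eq:kmeans_final_bound} on $\mathcal E$.

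Finally, $\Pr(\mathcal E\mid\mu_{1:\kglobal})\ge1-\delta$ holds for every realization of the centers. Integrating over the centroid split therefore gives an unconditional probability of at least $1-\delta$.

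The main obstacle is the estimation step when some cell $(k,m)$ has $n_{k,m}=0$, so that $\widehat\util_k(m)$ is undefined and the maximum over $m$ in \Cref{prop:kmeans_estimation} is not controlled. I would handle this by reading the theorem under the implicit requirement $n_{k,m}\ge1$ for all $(k,m)$ with positive test mass, or equivalently by interpreting $n_{\min}=0$ as a vacuous bound. Uniform logging makes this requirement hold with high probability once the global cluster sizes are moderate.

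A second subtlety is that the Hoeffding step in \Cref{lem:kmeans_uniform_dev} conditions on the random counts $n_{k,m}$. This is valid because, given the centers and the logged indices, the selected samples are i.i.d.\ from $\dx(\cdot\mid k)$ with model $m$. The union bound over the $\kglobal|\mcal|$ cells then accounts for the $\log(2\kglobal|\mcal|/\delta)$ factor.
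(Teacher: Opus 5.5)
Your proposal is correct and follows the paper's proof: the paper likewise combines the decomposition of \Cref{lem:kmeans_decomp} with \Cref{prop:kmeans_quantization} for the quantization term and \Cref{prop:kmeans_estimation} (via \Cref{lem:kmeans_uniform_dev}) for the estimation term. Your additional steps do not change the argument, only its rigor. These are the explicit conditioning on the centroid split followed by integrating it out, and the requirement that every cell with positive test mass have $n_{k,m}\ge 1$, which the appendix's definition of $n_{\min}$ over nonempty cells silently skips.
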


\begin{proof}
Combine the decomposition in Lemma~\ref{lem:kmeans_decomp} with
Proposition~\ref{prop:kmeans_quantization} (first term) and
Proposition~\ref{prop:kmeans_estimation} (second term), using
\eqref{eq:kmeans_estimation_term_bound_simple}.
\end{proof}

\paragraph{Federated vs.\ local-only (interpretation of the estimation term).}
In local-only training, the client $i$ would estimate the cell means using only its own counts $n_{i,k,m}$, so the
concentration term scales as $\tilde O(1/\sqrt{\min_{i,k,m} n_{i,k,m}})$.
In federated aggregation, cell counts add: $n_{k,m}=\sum_i n_{i,k,m}$, so typically
$n_{\min}\gg \min_{i,k,m} n_{i,k,m}$ whenever clients provide complementary coverage of $(k,m)$ cells.
This is precisely the regime where the federated \kmeans{} improves the most over the local-only routing.